\documentclass{article} 
\usepackage{iclr2024_conference,times}


\usepackage{hyperref}
\usepackage{url}

\title{Expressive Losses for Verified Robustness \\ via Convex Combinations}



\makeatletter
\let\@fnsymbol\@alph
\makeatother
%
%
\author{Alessandro De Palma$^1$\thanks{Work mostly carried out at Imperial College London, and partly at the University of Oxford.}\hspace{5pt}, Rudy Bunel$^2$, Krishnamurthy (Dj) Dvijotham$^2$,  \\ \textbf{M. Pawan Kumar}$^2$, \textbf{Robert Stanforth}$^2$, \textbf{Alessio Lomuscio}$^3$ \\
	$^1$Inria, École Normale Supérieure, PSL University, CNRS \\ 
	$^2$Google DeepMind \hspace{15pt}
	$^3$Imperial College London \\
	\texttt{alessandro.de-palma@inria.fr}
}

\usepackage[utf8]{inputenc} 
\usepackage[T1]{fontenc}    
\usepackage{hyperref}       
\usepackage{url}            
\usepackage{booktabs}       
\usepackage{amsfonts}       
\usepackage{nicefrac}       
\usepackage{microtype}      
\usepackage{xcolor}         

\usepackage{amsmath}
\usepackage{amssymb}
\usepackage{mathtools}
\usepackage{amsthm}

\usepackage{algorithm}
\usepackage{algpseudocode}

\usepackage[capitalize,noabbrev]{cleveref}

\usepackage{thmtools}
\usepackage{thm-restate}

\theoremstyle{plain}
\newtheorem{theorem}{Theorem}[section]
\newtheorem{proposition}[theorem]{Proposition}
\newtheorem{lemma}[theorem]{Lemma}

\theoremstyle{definition}
\newtheorem{definition}[theorem]{Definition}
\newtheorem{assumption}[theorem]{Assumption}
\theoremstyle{remark}

\usepackage[textsize=tiny]{todonotes}


\usepackage{accents}
\newcommand{\ubar}[1]{\underaccent{\bar}{#1}}

\usepackage{wrapfig}

\usepackage{amsmath}
\usepackage{amsfonts}
\usepackage{dsfont}
\usepackage{bm}
\usepackage{xcolor}

\usepackage{etoolbox}
\usepackage{nicefrac}

\newcommand{\mbf}[1]{\mathbf{#1}}

\newcommand{\zb}{\mbf{z}}
\newcommand{\yb}{\mbf{y}}

\newcommand{\bb}{\mbf{b}}
\newcommand{\ub}{\mbf{u}}
\newcommand{\lb}{\mbf{l}}

\newcommand{\xb}{\mbf{x}}

\newcommand{\thetab}{\bm{\theta}}

\newcommand{\xbhat}{\mbf{\hat{x}}}

\DeclareMathOperator*{\argmax}{\arg\!\max}

\DeclareMathOperator*{\EX}{\mathbb{E}}

\usepackage{enumitem}

\usepackage{stmaryrd}

\usepackage[labelformat=simple]{subcaption}

\captionsetup[table]{font=small}
\captionsetup[figure]{font=small}

\usepackage{booktabs}
\usepackage{adjustbox}
\usepackage{multirow}
\renewcommand{\bfseries}{\fontseries{b}\selectfont}
\newrobustcmd{\B}{\bfseries}
\newrobustcmd{\IT}{\itshape}
\usepackage[normalem]{ulem}
\setlength{\ULdepth}{1.35pt}
\newrobustcmd{\U}{\uline}
\def\Decimal{.000}

\def\Ulinehelp#1.#2 {%
	#1.#2\setbox0=\hbox{#1\Decimal}\hspace{-\wd0}{\if\relax#2\relax%
		\uline{\phantom{#1.0}}\else\uline{\phantom{#1.#2}}\fi}%
}
\usepackage{siunitx}

\newtoggle{appendix}
\toggletrue{appendix}

\iclrfinalcopy 
\begin{document}

\maketitle

\begin{abstract}

\looseness=-1
In order to train networks for verified adversarial robustness, it is common to over-approximate the worst-case loss over perturbation regions, resulting in networks that attain verifiability at the expense of standard performance.
As shown in recent work, better trade-offs between accuracy and robustness can be obtained by carefully coupling adversarial training with over-approximations. 
We hypothesize that the expressivity of a loss function, which we formalize as the ability to span a range of trade-offs between lower and upper bounds to the worst-case loss through a single parameter (the over-approximation coefficient), is key to attaining state-of-the-art performance. 
To support our hypothesis, we show that trivial expressive losses, obtained via convex combinations between adversarial attacks and IBP bounds, yield state-of-the-art results across a variety of settings in spite of their conceptual simplicity.
We provide a detailed analysis of the relationship between the over-approximation coefficient and performance profiles across different expressive losses, showing that, while expressivity is essential, better approximations of the worst-case loss are not necessarily linked to superior robustness-accuracy trade-offs.

\end{abstract}

\section{Introduction} \label{sec:intro}

In spite of recent and highly-publicized successes~\citep{jumper2021highly,fawzi2022discovering}, serious concerns over the trustworthiness of deep learning systems remain.
In particular, the vulnerability of neural networks to adversarial examples~\citep{Szegedy2014,Goodfellow2015} questions their applicability to safety-critical domains.  
As a result, many authors devised techniques to formally prove the adversarial robustness of trained neural networks~\citep{Lomuscio2017,Ehlers2017,Katz2017}. 
These algorithms solve a non-convex optimization problem by coupling network over-approximations with search techniques~\citep{Bunel2018}.
While the use of network verifiers was initially limited to models with hundreds of neurons, the scalability of these tools has steadily increased over the recent years~\citep{BunelDP20,Botoeva2020,HenriksenLomuscio2020,xu2021fast,improvedbab,betacrown,HenriksenLomuscio21,Ferrari2022}, reaching networks of the size of VGG16~\citep{Simonyan15,vnn-comp-2022}.
Nevertheless, significant gaps between state-of-the-art verified and empirical robustness persist, the latter referring to specific adversarial attacks. For instance, on ImageNet~\citep{deng2009imagenet}, \citet{Singh2023} attain an empirical robustness of $57.70\%$ to $\ell_\infty$ perturbations of radius $\epsilon=\nicefrac{4}{255}$, whereas the state-of-the-art verified robust accuracy (on a downscaled version of the dataset~\citep{Chrabaszcz2017}) is $9.54\%$ against $\epsilon=\nicefrac{1}{255}$~\citep{Zhang2022rethinking}.
In fact, while empirical robustness can be achieved by training the networks against inexpensive attacks (adversarial training)~\citep{Madry2018,Wong2020}, the resulting networks remain hard to verify.

\looseness=-1
In order to train networks amenable to formal robustness verification (verified training), a number of works~\citep{Mirman2018,Gowal2018b,Wong2018,Wong2018a,Zhang2020,Xu2020,Shi2021} directly compute the loss over the network over-approximations that are later used for formal verification (verified loss).
The employed approximations are typically loose, as the difficulty of the optimization problem associated with training increases with approximation tightness, resulting in losses that are hard to train against~\citep{Lee2021,Jovanovic2022}.
As a result, while the resulting models are easy to verify, these networks display sub-par standard accuracy.
Another line of work enhances adversarial training with verifiability-inducing techniques~\citep{Xiao2019,Balunovic2020,DePalma2022}: by exploiting stronger verifiers, these algorithms yield better results against small perturbations, but under-perform in other settings.
More recently, SABR~\citep{Mueller2023}, combined the advantages of both strategies by computing the loss on over-approximations of small ``boxes" around adversarial attacks and tuning the size of these boxes on each setting, attaining state-of-the-art performance.
These results were further improved, at the expense of conceptual simplicity and with increased overhead, by carefully coupling IBP or SABR with latent-space adversarial attacks~\citep{Mao2023}.
While recent work combine attacks with over-approximations in different ways, they typically seek to accurately represent the exact worst-case loss over perturbation regions~\citep{Mueller2023,Mao2023}.

\looseness=-1
Aiming to provide a novel understanding of the recent verified training literature, we define a class of loss functions, which we call expressive, that range from the adversarial to the verified loss when tuning a parameter $\alpha \in [0, 1]$. We hypothesize that this notion of \emph{expressivity is crucial to produce state-of-the-art trade-offs between accuracy and verified robustness}, and provide support as follows:
\begin{itemize}[leftmargin=1.2em]
	\item We show that SABR is expressive, and illustrate how other expressive losses can be trivially designed via convex combinations, presenting three examples: (i)
	CC-IBP, based on combinations between adversarial and over-approximated network outputs within the loss, 
	(ii) MTL-IBP, relying on combinations between the adversarial and verified losses,
	(iii) Exp-IBP, for which the combination is performed between the logarithms of the losses.
	\item \looseness=-1 We present a comprehensive experimental evaluation of CC-IBP, MTL-IBP and Exp-IBP on benchmarks from the robust vision literature, demonstrating that the three methods attain state-of-the-art performance in spite of their conceptual simplicity.
	We show that a careful tuning of any of the considered expressive losses leads to large improvements, in terms of both standard and verified robust accuracies, upon the best previously-reported results on TinyImageNet and downscaled ImageNet. Our findings point to the importance of expressivity, as opposed to the specific form of the expressive loss function, for verified robustness.
	Code is available at \url{https://github.com/alessandrodepalma/expressive-losses}.
	\item We analyze the effect of the $\alpha$ parameter on robustness-accuracy trade-offs. Differently from common assumptions in the area, for CC-IBP, MTL-IBP and SABR, better approximations of the worst-case loss do not necessarily correspond to performance~improvements. 
\end{itemize}

\section{Background} \label{sec:background}

We employ the following notation: uppercase letters for matrices (for example, $A$), boldface letters for vectors (for example, $\bf{a}$), $\llbracket \cdot \rrbracket$ to denote integer ranges, brackets for intervals and vector indices ($[\bf{a},\bf{b}]$ and $\mbf{a}[i]$, respectively). 
Let $f: \mathbb{R}^S \times \mathbb{R}^{d} \rightarrow \mathbb{R}^c$ be a neural network with parameters~$\thetab \in \mathbb{R}^S$ and accepting $d$-dimensional inputs.
Given a dataset $(\xb, \yb) \sim \mathcal{D}$ with points $\xb \in \mathbb{R}^{d}$ and labels $\yb \in \mathbb{R}^l$, the goal of verified training is to obtain a network that satisfies a given property $P: \mathbb{R}^c \times \mathbb{R}^{l} \rightarrow \{0, 1\}$ on $\mathcal{D}$:
$(\xb, \yb) \sim \mathcal{D}\ \implies P(f(\thetab, \xb), \yb).$
%
\looseness=-1
We will focus on classification problems, with scalar labels $y \in \mathbb{N}$, and on verifying robustness to adversarial inputs perturbations in $\mathcal{C}(\xb, \epsilon) := \{ \xb' : \left\lVert \xb' - \xb  \right\rVert_p \leq \epsilon \}$:
\begin{equation}
	\begin{array}{l}
		P(f(\thetab, \xb), y) = \left[\left(\argmax_{i} f(\thetab, \xb')[i] = y \right) \forall\ \xb' \in \mathcal{C}(\xb, \epsilon) \right].	
	\end{array}
	\label{eq:adversarial-robustness}
\end{equation}
Property $P(f(\thetab, \xb), y)$ holds when all the points in a small neighborhood of $\xb$ are correctly classified.

\subsection{Adversarial Training} \label{sec:adversarial-training}

\looseness=-1
Given a loss function $\mathcal{L} : \mathbb{R}^{c} \times \mathbb{N} \rightarrow \mathbb{R}$, 
a network is typically trained by optimizing $\min_{\thetab}\ \EX_{(\xb, \yb) \in \mathcal{D}} \left[ \mathcal{L} (f(\thetab, \xb), y)\right]$ through variants of Stochastic Gradient Descent (SGD).
Hence, training for adversarial robustness requires a surrogate loss adequately representing equation \eqref{eq:adversarial-robustness}.
Borrowing from the robust optimization literature, \citet{Madry2018} introduce the so-called robust loss:
\begin{equation}
\mathcal{L}^* (f(\thetab, \xb), y) := \max_{\xb' \in \mathcal{C}(\xb, \epsilon)} \mathcal{L} (f(\thetab, \xb'), y).
\label{eq:robust-loss}
\end{equation}
\looseness=-1
However, given the non-convexity of the maximization problem, \citet{Madry2018} propose to approximate its computation by a lower bound obtained via an adversarial attack. 
When $\mathcal{C}(\xb, \epsilon)$ is an $\ell_\infty$ ball, a popular attack, PGD, proceeds by taking steps in the direction of the sign of $\nabla_{\xb'} \mathcal{L} (f(\thetab, \xb'), y)$, projecting onto the ball after every iteration.
Denoting by $\xb_{\text{adv}} \in \mathcal{C}(\xb, \epsilon)$ the point returned by the attack, the adversarial loss is defined as follows:
\begin{equation}
\mathcal{L}_{\text{adv}} (f(\thetab, \xb), y) := \mathcal{L} (f(\thetab, \xb_{\text{adv}}), y) \leq  \mathcal{L}^* (f(\thetab, \xb), y).
\label{eq:adversarial-loss}
\end{equation}
\looseness=-1
Adversarially-trained models typically display strong empirical robustness to adversarial attacks and a reduced natural accuracy compared to standard models. 
However, as shown in the past~\citep{Uesato2018,Athalye2018}, empirical defenses may break under stronger or different attacks~\citep{Croce2020}, calling for formal robustness guarantees holding beyond a specific attack scheme. 
These cannot be obtained in a feasible time on adversarially-trained networks~\citep{Xiao2019}.
As a result, a number of algorithms have been designed to make networks more amenable to formal verification.

\subsection{Neural Network Verification} \label{sec:background-verification}

\looseness=-1
Equation \eqref{eq:adversarial-robustness} states that a network is provably robust if one can show that all logit differences $\zb(\thetab, \xb, y) := \left( f(\thetab, \xb)[y]\ \mathbf{1} - f(\thetab, \xb)\right) \in \mathbb{R}^c$, except for the ground truth, are positive $\forall\ \xb' \in \mathcal{C}(\xb, \epsilon)$. 
Hence, formal robustness verification amounts to computing the sign of the following problem:
\begin{equation}
\begin{array}{l}\min_{\xb'}\quad \smash{\min_{i\neq y}}\ \zb(\thetab, \xb', y)[i] \qquad \text{s.t. } \qquad \xb' \in \mathcal{C}(\xb, \epsilon)\end{array}
\label{eq:verification}
\end{equation}
\looseness=-1
Problem~\eqref{eq:verification} is known to be NP-hard~\citep{Katz2017}: solving it is as hard as exactly computing the robust loss from equation~\eqref{eq:robust-loss}.
In incomplete verification, the minimization is carried out over a tractable network over-approximation, yielding local lower bounds $\ubar{\zb}(\thetab, \xb, y)$ to the logit differences:
\begin{equation*}
	\begin{array}{l}\ubar{\zb}(\thetab, \xb, y)[i] \leq \smash{\min_{\xb' \in \mathcal{C}(\xb, \epsilon)}}\ \zb(\thetab, \xb', y)[i] \ \quad \forall\ i \in \llbracket 0, c-1 \rrbracket.\end{array}
\end{equation*}
If $\min_{i\neq y} \ubar{\zb}(\thetab, \xb, y)[i] > 0$, then the network is proved to be robust. 
On the other hand, if $\min_{i\neq y} \ubar{\zb}(\thetab, \xb, y)[i] < 0$, problem \eqref{eq:verification} is left undecided: the tightness of the bound is hence crucial to maximize the number of verified properties across a dataset $(\xb, \yb) \sim \mathcal{D}$.
A popular and inexpensive incomplete verifier, Interval Bound Propagation~(IBP)~\citep{Gowal2018b, Mirman2018}, can be obtained by applying interval arithmetic~\citep{Sunaga1958,Hickey2001} to neural networks (see appendix~\ref{sec:ibp}).
\looseness=-1
If an exact solution to (the sign of) problem~\eqref{eq:verification} is required, incomplete verifiers can be plugged in into a branch-and-bound framework~\citep{Bunel2018,Bunel2020}. These algorithms provide an answer on any given property (complete verification) by recursively splitting the optimization domain (branching), and computing bounds to the solutions of the sub-problems. Lower bounds are obtained via incomplete verifiers, whereas adversarial attacks provide upper bounds.
For ReLU activations and high-dimensional inputs, branching is typically performed implicitly by splitting a ReLU into its two linear pieces~\cite{improvedbab,Ferrari2022}.

\subsection{Training for Verified Robustness} \label{sec:verified-training-methods}

The verified training literature commonly employs the following assumption~\citep{Wong2018}: 
\begin{assumption}
	The loss function $\mathcal{L}$ is translation-invariant: $\mathcal{L}(-\zb(\thetab, \xb, y), y) = \mathcal{L}(f(\thetab, \xb), y)$. Furthermore, $\mathcal{L}(-\zb(\thetab, \xb, y), y)$ is monotonic increasing with respect to $-\zb(\thetab, \xb, y)[i]$ if $i\neq y$.
	\label{assumption}
\end{assumption}

\looseness=-1
Assumption~\ref{assumption} holds for popular loss functions such as cross-entropy. In this case, one can provide an upper bound to the robust loss $\mathcal{L}^*$ via the so-called verified loss $\mathcal{L}_{\text{ver}} (f(\thetab, \xb), y)$, computed on a lower bound to the logit differences $\ubar{\zb}(\thetab, \xb, y)$ obtained from an incomplete verifier~\citep{Wong2018}:
\begin{equation}
\mathcal{L}^* (f(\thetab, \xb), y)  \leq \mathcal{L}_{\text{ver}} (f(\thetab, \xb), y)  := \mathcal{L} (-\ubar{\zb}(\thetab, \xb, y), y).
\label{eq:verified-loss}
\end{equation}
\looseness=-1
A number of verified training algorithms rely on a loss of the form $\mathcal{L}_{\text{ver}} (f(\thetab, \xb), y)$ above, where bounds $-\ubar{\zb}(\thetab, \xb, y)$ are obtained via IBP~\citep{Gowal2018b,Shi2021}, inexpensive linear relaxations~\citep{Wong2018}, or a mixture of the two~\citep{Zhang2020}.
IBP training has been extensively studied~\citep{wang2022on,mao2023understanding}.
In order to stabilize training, the radius of the perturbation over which the bounds are computed is gradually increased from $0$ to the target $\epsilon$ value (ramp-up).
Earlier approaches~\citep{Gowal2018b,Zhang2020} also transition from the natural to the robust loss during ramp-up: $(1 - \kappa)\ \mathcal{L}(f(\thetab, \xb), y) + \kappa\ \mathcal{L}_{\text{ver}} (f(\thetab, \xb), y)$, with $\kappa$ linearly increasing from $0$ to $0.5$ or $1$.
\citet{Zhang2020} may further transition from lower bounds $\ubar{\zb}(\thetab, \xb, y)$ partly obtained via linear relaxations (CROWN-IBP) to IBP bounds within $\mathcal{L}_{\text{ver}} (f(\thetab, \xb), y)$: $(1 - \kappa)\ \mathcal{L}(f(\thetab, \xb), y) + \kappa\ \mathcal{L} (- \left((1 - \beta)\ \ubar{\zb}_{\text{IBP}}(\thetab, \xb, y) + \beta \ \ubar{\zb}_{\text{CROWN-IBP}}(\thetab, \xb, y) \right), y)$, with $\beta$ linearly decreased from $1$ to $0$.
\citet{Shi2021} removed both transitions and significantly reduced training times by employing BatchNorm~\citep{Ioffe2015} along with specialized initialization and regularization.
These algorithms attain strong verified robustness, verifiable via inexpensive incomplete verifiers, while paying a relatively large price in standard accuracy.
%
Another line of research seeks to improve standard accuracy by relying on stronger verifiers and coupling adversarial training ($\S$\ref{sec:adversarial-training}) with verifiability-inducing techniques. Examples include maximizing network local linearity~\citep{Xiao2019}, performing the attacks in the latent space over network over-approximations~(COLT)~\citep{Balunovic2020}, minimizing the area of over-approximations used at verification time while attacking over larger input regions~(IBP-R)~\citep{DePalma2022}, and optimizing over the sum of the adversarial and IBP losses through a specialized optimizer~\citep{Fan2021}.
In most cases $\ell_1$ regularization was found to be beneficial.
A more recent work, named SABR~\citep{Mueller2023}, proposes to compute bounds via IBP over a parametrized subset of the input domain that includes an adversarial attack. Let us denote by $\text{Proj}(\mbf{a}, \mathcal{A})$ the Euclidean projection of $\mbf{a}$ on set $\mathcal{A}$. Given $\xb_\lambda := \text{Proj}(\xb_{\text{adv}}, \mathcal{C}(\xb, \epsilon - \lambda \epsilon))$, SABR relies on the following loss:
\begin{equation}
	\mathcal{L} (-\ubar{\zb}_\lambda(\thetab, \xb, y), y) \text{, where} \enskip \ubar{\zb}_\lambda(\thetab, \xb, y) \leq \zb(\thetab, \xb', y) \enskip \forall \ \xb' \in \mathcal{C}(\xb_\lambda, \lambda \epsilon) \subseteq \mathcal{C}(\xb, \epsilon).
	\label{eq:sabr}
\end{equation}
\looseness=-1
The SABR loss is not necessarily an upper bound for $\mathcal{L}^* (f(\thetab, \xb), y)$. However, it continuously interpolates between $\mathcal{L} (f(\thetab, \xb_{\text{adv}}), y)$, which it matches when $\lambda=0$, and $\mathcal{L}_{\text{ver}} (f(\thetab, \xb), y)$, matched when $\lambda=1$. By tuning $\lambda \in [0, 1]$, SABR attains state-of-the-art performance under complete verifiers~\citep{Mueller2023}.
The recent STAPS~\citep{Mao2023}, combines SABR with improvements on COLT, further improving its performance on some benchmarks at the cost of added complexity.

\section{Loss Expressivity for Verified Training} \label{sec:expressivity}

\looseness=-1
As seen in $\S$\ref{sec:verified-training-methods}, state-of-the-art verified training algorithms rely on coupling adversarial attacks with over-approximations. 
Despite this commonality, there is considerable variety in terms of how these are combined, leading to a proliferation of heuristics. 
We now outline a simple property, which we call \emph{expressivity}, that we will show to be key to effective verified training schemes:
\begin{definition}
	A parametrized family of losses $\mathcal{L}_\alpha (\thetab, \xb, y)$ is \emph{expressive} if:
	\begin{itemize}
		\item $\mathcal{L} (f(\thetab, \xb_{\text{adv}}), y) \leq \mathcal{L}_\alpha (\thetab, \xb, y) \leq \mathcal{L}_{\text{ver}} (f(\thetab, \xb), y) \ \forall\ \alpha \in [0, 1]$;
		\item $\mathcal{L}_\alpha (\thetab, \xb, y)$ is continuous and monotonically increasing with respect to $\alpha \in [0, 1]$;
		\item $\mathcal{L}_0 (\thetab, \xb, y) = \mathcal{L} (f(\thetab, \xb_{\text{adv}}), y)$; \hspace{35pt} $\bullet$ $\mathcal{L}_1 (\thetab, \xb, y) = \mathcal{L}_{\text{ver}} (f(\thetab, \xb), y)$.
	\end{itemize}
	\label{def:expressive}
\end{definition}
\looseness=-1
Expressive losses can range between the adversarial loss in equation~\eqref{eq:adversarial-loss} to the verified loss in equation~\eqref{eq:verified-loss} through the value of a single parameter $\alpha \in [0, 1]$, the over-approximation coefficient. As a result, they will be able to span a wide range of trade-offs between verified robustness and standard accuracy.
Intuitively, if $\alpha\approx0$, then $\mathcal{L}_\alpha (\thetab, \xb, y) << \mathcal{L}^*(f(\thetab, \xb), y)$, resulting in networks hard to verify even with complete verifiers.
If $\alpha\approx1$, then $\mathcal{L}_\alpha (\thetab, \xb, y) \approx \mathcal{L}_{\text{ver}} (f(\thetab, \xb), y)$, producing networks with larger verified robustness when using the incomplete verifier employed to compute $\ubar{\zb}(\thetab, \xb, y)$ but with lower standard accuracy.
In general, the most effective $\alpha$ will maximize verifiability via the verifier to be employed post-training while preserving as much standard accuracy as possible. The exact value will depend on the efficacy of the verifier and on the difficulty of the verification problem at hand, with larger $\alpha$ values corresponding to less precise verifiers.
The loss of SABR from equation~\eqref{eq:sabr} satisfies definition~\ref{def:expressive} when setting $\lambda=\alpha$ (see appendix~\ref{sec:sabr-appendix}). In order to demonstrate the centrality of expressivity, we now present three minimalistic instantiations of the definition and later show that they yield state-of-the-art results on a variety of benchmarks ($\S$\ref{sec:exp-main}).

\section{Expressivity through Convex Combinations} \label{sec:convexcombinations}

\looseness=-1
We outline three simple ways to obtain expressive losses through convex combinations between adversarial attacks and bounds from incomplete verifiers.
Proofs and pseudo-code are provided in appendices~\ref{sec:expressivity-appendix}~and~\ref{sec:pseudo-code}, respectively.

\subsection{CC-IBP} \label{sec:ccibp}

\looseness=-1
For continuous loss functions like cross-entropy, a family of expressive losses can be easily obtained by taking Convex Combinations (CC) of adversarial and lower bounds to logit differences within $\mathcal{L}$:
\begin{equation}
	\mathcal{L}_{\alpha, \text{CC}} (\thetab, \xb, y) := \mathcal{L}(-\left[( 1 - \alpha)\ \zb(\thetab, \xb_{\text{adv}}, y)  + \alpha\ \ubar{\zb}(\thetab, \xb, y)\right], y).
	\label{eq:ccibp}
\end{equation}
\begin{restatable}{proposition}{ccibp}
	If $\mathcal{L}(\cdot, y)$ is continuous with respect to its first argument, the parametrized loss $\mathcal{L}_{\alpha, \text{CC}} (\thetab, \xb, y)$ is expressive according to definition~\ref{def:expressive}. 
	\label{prop:ccibp}
\end{restatable}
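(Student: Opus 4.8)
The plan is to verify the four conditions of Definition~\ref{def:expressive} directly from the closed form of $\mathcal{L}_{\alpha, \text{CC}}$ in equation~\eqref{eq:ccibp}, treating $\zb(\thetab, \xb_{\text{adv}}, y)$ and $\ubar{\zb}(\thetab, \xb, y)$ as fixed vectors (they do not depend on $\alpha$). The central observation is that the argument of $\mathcal{L}$ is the affine path $\zb_\alpha := (1-\alpha)\, \zb(\thetab, \xb_{\text{adv}}, y) + \alpha\, \ubar{\zb}(\thetab, \xb, y)$, which is a convex combination of the two endpoints; everything then follows from properties of $\mathcal{L}(-\cdot, y)$ under Assumption~\ref{assumption} together with the defining inequality $\ubar{\zb}(\thetab, \xb, y)[i] \leq \min_{\xb' \in \mathcal{C}(\xb, \epsilon)} \zb(\thetab, \xb', y)[i] \leq \zb(\thetab, \xb_{\text{adv}}, y)[i]$ for every $i$, where the last step uses $\xb_{\text{adv}} \in \mathcal{C}(\xb, \epsilon)$.

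First I would check the endpoints: at $\alpha = 0$ the argument is $-\zb(\thetab, \xb_{\text{adv}}, y)$, and by the translation-invariance part of Assumption~\ref{assumption} this equals $\mathcal{L}(f(\thetab, \xb_{\text{adv}}), y) = \mathcal{L}_{\text{adv}}(f(\thetab, \xb), y)$; at $\alpha = 1$ the argument is $-\ubar{\zb}(\thetab, \xb, y)$, which is exactly $\mathcal{L}_{\text{ver}}(f(\thetab, \xb), y)$ by definition~\eqref{eq:verified-loss}. Continuity in $\alpha$ is immediate: $\alpha \mapsto \zb_\alpha$ is affine hence continuous, and $\mathcal{L}(\cdot, y)$ is continuous in its first argument by hypothesis, so the composition is continuous on $[0,1]$. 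For monotonicity, note that for $i \neq y$ the coordinate $-\zb_\alpha[i] = -(1-\alpha)\zb(\thetab, \xb_{\text{adv}}, y)[i] - \alpha\, \ubar{\zb}(\thetab, \xb, y)[i]$ is monotonically non-decreasing in $\alpha$ because $\ubar{\zb}(\thetab, \xb, y)[i] \leq \zb(\thetab, \xb_{\text{adv}}, y)[i]$, so its derivative with respect to $\alpha$ is $\zb(\thetab, \xb_{\text{adv}}, y)[i] - \ubar{\zb}(\thetab, \xb, y)[i] \geq 0$; the $i = y$ coordinate is identically zero by the definition of logit differences and is therefore constant. Since $\mathcal{L}(-\zb, y)$ is monotonically increasing with respect to $-\zb[i]$ for $i \neq y$ (Assumption~\ref{assumption}), composing a monotone map with a coordinatewise-monotone path gives that $\mathcal{L}_{\alpha, \text{CC}}$ is monotonically increasing in $\alpha$. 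The two bounds in the first bullet of Definition~\ref{def:expressive} then follow from monotonicity together with the endpoint identifications, i.e.\ $\mathcal{L}_0 \leq \mathcal{L}_\alpha \leq \mathcal{L}_1$.

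I do not expect any serious obstacle here; the proposition is essentially a bookkeeping exercise once the affine structure of the argument is made explicit. The only subtlety worth stating carefully is the reliance on Assumption~\ref{assumption} (for both the translation-invariance needed at $\alpha=0$ and the coordinatewise monotonicity needed for the monotone-in-$\alpha$ claim), and the two-sided inequality relating $\ubar{\zb}$, the true minimum over $\mathcal{C}(\xb,\epsilon)$, and $\zb(\thetab, \xb_{\text{adv}}, y)$ — it is this inequality, rather than any property of the specific verifier, that makes the path point in the ``right direction.'' If one wanted to avoid invoking differentiability of the map $\alpha \mapsto -\zb_\alpha[i]$, the same monotonicity conclusion is obtained directly: for $\alpha_1 \leq \alpha_2$ one has $-\zb_{\alpha_1}[i] \leq -\zb_{\alpha_2}[i]$ coordinatewise for $i \neq y$, hence $\mathcal{L}(-\zb_{\alpha_1}, y) \leq \mathcal{L}(-\zb_{\alpha_2}, y)$ by the monotonicity in Assumption~\ref{assumption}.
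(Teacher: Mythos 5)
Your proof is correct and follows essentially the same route as the paper's: both rely on the coordinatewise inequality $\ubar{\zb}(\thetab, \xb, y) \leq (1-\alpha)\,\zb(\thetab, \xb_{\text{adv}}, y) + \alpha\,\ubar{\zb}(\thetab, \xb, y) \leq \zb(\thetab, \xb_{\text{adv}}, y)$, the translation-invariance and monotonicity of Assumption~\ref{assumption} (together with the vanishing $y$-th coordinate), and continuity of the composition of the affine path with $\mathcal{L}(\cdot, y)$. The only difference is that you explicitly verify monotonicity in $\alpha$ via the coordinatewise comparison of $-\zb_{\alpha_1}$ and $-\zb_{\alpha_2}$, a point the paper's proof leaves implicit.
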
 
\looseness=-1
The analytical form of equation~\eqref{eq:ccibp} when $\mathcal{L}$ is the cross-entropy loss, including a comparison to the loss from SABR, is presented in appendix~\ref{sec:loss-appendix}.
In order to minimize computational costs and to yield favorable optimization problems~\citep{Jovanovic2022}, we employ IBP to compute $\ubar{\zb}(\thetab, \xb, y)$, and refer to the resulting algorithm as CC-IBP.
We remark that a similar convex combination was employed in the CROWN-IBP~\citep{Zhang2020} loss (see $\S$\ref{sec:verified-training-methods}). 
However, while CROWN-IBP takes a combination between IBP and CROWN-IBP lower bounds to $\zb(\thetab, \xb, y)$, we replace the latter by the logit differences associated to an adversarial input $\xb_{\text{adv}}$, which is computed via randomly-initialized FGSM~\citep{Wong2020} in most of our experiments (see $\S$\ref{sec:experiments}) .
Furthermore, rather than increasing the combination coefficient from $0$ to $1$ during ramp-up, we propose to employ a constant and tunable $\alpha$ throughout training.

\subsection{MTL-IBP}  \label{sec:mtlibp}

Convex combinations can be also performed between loss functions $\mathcal{L}$, resulting in:
\begin{equation}
\mathcal{L}_{\alpha, \text{MTL}} (\thetab, \xb, y) := ( 1 - \alpha) \mathcal{L} (f(\thetab, \xb_{\text{adv}}), y)   + \alpha\ \mathcal{L}_{\text{ver}} (f(\thetab, \xb), y).
\label{eq:mtlibp}
\end{equation}
\looseness=-1
The above loss lends itself to a Multi-Task Learning (MTL) interpretation~\citep{Caruana1997}, with empirical and verified adversarial robustness as the tasks. Under this view, borrowing terminology from multi-objective optimization, $\mathcal{L}_{\alpha, \text{MTL}} (\thetab, \xb, y)$ amounts to a scalarization of the multi-task problem. 
\begin{restatable}{proposition}{mtlibp}
	The parametrized loss  $\mathcal{L}_{\alpha, \text{MTL}} (\thetab, \xb, y)$ is expressive. 
	Furthermore, if $\mathcal{L}(\cdot, y)$ is convex with respect to its first argument, $\mathcal{L}_{\alpha, \text{CC}} (\thetab, \xb, y)\leq \mathcal{L}_{\alpha, \text{MTL}} (\thetab, \xb, y) \ \forall\ \alpha \in [0, 1]$.
	\label{prop:mtlibp}
\end{restatable}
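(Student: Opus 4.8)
The plan is to verify the two claims separately: the first by directly checking the four bullet points of Definition~\ref{def:expressive}, the second by a single application of Jensen's inequality together with the translation invariance of Assumption~\ref{assumption}. For expressivity, I would rewrite
$\mathcal{L}_{\alpha, \text{MTL}}(\thetab, \xb, y) = \mathcal{L}(f(\thetab, \xb_{\text{adv}}), y) + \alpha\left(\mathcal{L}_{\text{ver}}(f(\thetab, \xb), y) - \mathcal{L}(f(\thetab, \xb_{\text{adv}}), y)\right)$,
which is affine, hence continuous, in $\alpha$. Equations~\eqref{eq:adversarial-loss} and~\eqref{eq:verified-loss} give $\mathcal{L}(f(\thetab, \xb_{\text{adv}}), y) \leq \mathcal{L}^*(f(\thetab, \xb), y) \leq \mathcal{L}_{\text{ver}}(f(\thetab, \xb), y)$, so the coefficient multiplying $\alpha$ is non-negative; this yields monotonicity in $\alpha$, and, since a convex combination of two reals lies between them, the sandwich bound $\mathcal{L}(f(\thetab, \xb_{\text{adv}}), y) \leq \mathcal{L}_{\alpha, \text{MTL}}(\thetab, \xb, y) \leq \mathcal{L}_{\text{ver}}(f(\thetab, \xb), y)$ for all $\alpha \in [0,1]$. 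Substituting $\alpha = 0$ and $\alpha = 1$ settles the two endpoint identities. Note this part uses neither continuity nor convexity of $\mathcal{L}(\cdot, y)$, only the basic inequalities defining $\mathcal{L}_{\text{adv}}$ and $\mathcal{L}_{\text{ver}}$.

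For the comparison with CC-IBP, I would start from equation~\eqref{eq:ccibp} and observe that the argument of $\mathcal{L}$ there is
$-\left[(1-\alpha)\zb(\thetab, \xb_{\text{adv}}, y) + \alpha\ubar{\zb}(\thetab, \xb, y)\right] = (1-\alpha)\left(-\zb(\thetab, \xb_{\text{adv}}, y)\right) + \alpha\left(-\ubar{\zb}(\thetab, \xb, y)\right)$,
a convex combination of the vectors $-\zb(\thetab, \xb_{\text{adv}}, y)$ and $-\ubar{\zb}(\thetab, \xb, y)$. Convexity of $\mathcal{L}(\cdot, y)$ in its first argument then gives $\mathcal{L}_{\alpha, \text{CC}}(\thetab, \xb, y) \leq (1-\alpha)\,\mathcal{L}(-\zb(\thetab, \xb_{\text{adv}}, y), y) + \alpha\,\mathcal{L}(-\ubar{\zb}(\thetab, \xb, y), y)$. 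The first term equals $\mathcal{L}(f(\thetab, \xb_{\text{adv}}), y)$ by translation invariance applied at the point $\xb_{\text{adv}}$, and the second equals $\mathcal{L}_{\text{ver}}(f(\thetab, \xb), y)$ by equation~\eqref{eq:verified-loss}, so the right-hand side is precisely $\mathcal{L}_{\alpha, \text{MTL}}(\thetab, \xb, y)$, which closes the argument.

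I do not expect a genuine obstacle here; the proof is short. The only points that warrant care are that the logit-difference vectors enter the CC-IBP loss negated, so Jensen's inequality must be applied to $-\zb$ and $-\ubar{\zb}$ rather than to $\zb$ and $\ubar{\zb}$; that translation invariance is invoked at $\xb_{\text{adv}}$ rather than at $\xb$ (which is legitimate since Assumption~\ref{assumption} is stated for arbitrary inputs); and that "monotonically increasing" should be read in the non-strict sense to accommodate the degenerate case $\mathcal{L}_{\text{ver}}(f(\thetab, \xb), y) = \mathcal{L}(f(\thetab, \xb_{\text{adv}}), y)$.
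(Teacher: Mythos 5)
Your proof is correct and follows essentially the same route as the paper's: the expressivity of $\mathcal{L}_{\alpha, \text{MTL}}$ from affinity in $\alpha$ together with $\mathcal{L}(f(\thetab, \xb_{\text{adv}}), y) \leq \mathcal{L}_{\text{ver}}(f(\thetab, \xb), y)$, and the comparison with CC-IBP by applying convexity of $\mathcal{L}(\cdot, y)$ to the convex combination of $-\zb(\thetab, \xb_{\text{adv}}, y)$ and $-\ubar{\zb}(\thetab, \xb, y)$, then identifying the two terms via translation invariance and equation~\eqref{eq:verified-loss}. The paper merely states these steps more tersely (deferring the first part to the CC-IBP argument and calling the second ``trivial from the definition of a convex function''), so your write-up is simply a fuller version of the same proof.
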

The second part of proposition~\ref{prop:mtlibp}, which applies to cross-entropy, shows that $\mathcal{L}_{\alpha, \text{MTL}} (\thetab, \xb, y)$ can be seen as an upper bound to $\mathcal{L}_{\alpha, \text{CC}} (\thetab, \xb, y)$ for $\alpha \in [0,1]$. Owing to definition~\ref{def:expressive}, the bound will be tight when $\alpha \in \{0,1\}$.
Appendix~\ref{sec:loss-appendix} shows an analytical form for equation~\eqref{eq:mtlibp} when $\mathcal{L}$ is the cross-entropy loss.
As in section~\ref{sec:ccibp}, we use IBP to compute $\mathcal{L}_{\text{ver}} (f(\thetab, \xb), y)$; we refer to the resulting algorithm as MTL-IBP.
Given its simplicity, ideas related to equation~\eqref{eq:mtlibp} have previously been adopted in the literature. 
However, they have never been explored in the context of expressivity and the role of $\alpha$ was never explicitly investigated.
\citet{Fan2021} present a loss equivalent to equation~\eqref{eq:mtlibp}, treated as a baseline for the proposed algorithm, but only consider $\alpha=0.5$ and use it within a costly layer-wise optimization scheme similar to COLT~\citep{Balunovic2020}. While \citet{Fan2021} propose a specialized multi-task optimizer (AdvIBP), we instead opt for scalarizations, which have recently been shown to yield state-of-the-art results on multi-task benchmarks when appropriately tuned and regularized~\citep{Kurin2022,Xin2022}.
\citet{Gowal2018b} take a convex combination between the natural (rather than adversarial) loss and the IBP loss; however, instead of tuning a constant $\alpha$, they linearly transition it from $0$ to $0.5$ during ramp-up (see~$\S$\ref{sec:verified-training-methods}).
\citet{Mirman2018} employ $\mathcal{L} (f(\thetab, \xb), y)  + 0.1\ \left(-\text{softplus}(\min_{i \neq y}\ubar{\zb}(\thetab, \xb, y)[i])\right)$.

\subsection{Exp-IBP}  \label{sec:expibp}

\looseness=-1
For strictly positive loss functions $\mathcal{L}$ such as cross-entropy, we can also perform convex combinations between the logarithms of the losses:  $\log{\mathcal{L}_{\alpha, \text{Exp}} (\thetab, \xb, y)} = (1 - \alpha) \log{\mathcal{L} (f(\thetab, \xb_{\text{adv}}), y)} + \alpha \log{\mathcal{L}_{\text{ver}} (f(\thetab, \xb), y)}$. 
By exponentiating both sides, we obtain:
\begin{equation}
	\mathcal{L}_{\alpha, \text{Exp}} (\thetab, \xb, y) := \mathcal{L} (f(\thetab, \xb_{\text{adv}}), y)^{(1 - \alpha)}  \ \mathcal{L}_{\text{ver}} (f(\thetab, \xb), y)^\alpha,
	\label{eq:expbp}
\end{equation}
\looseness=-1
which is an expressive loss displaying the over-approximation coefficient in the exponents. Analogously to $\S$\ref{sec:ccibp} and $\S$\ref{sec:mtlibp}, we compute $\mathcal{L}_{\text{ver}} (f(\thetab, \xb), y)$ via IBP, calling the resulting algorithm Exp-IBP.
\begin{restatable}{proposition}{expibp}
	If $\mathcal{L} : \mathbb{R}^{c} \times \mathbb{N} \rightarrow \mathbb{R}_{>0}$, the parametrized loss  $\mathcal{L}_{\alpha, \text{Exp}} (\thetab, \xb, y)$ is expressive. In addition, $\mathcal{L}_{\alpha, \text{Exp}} (\thetab, \xb, y)\leq \mathcal{L}_{\alpha, \text{MTL}} (\thetab, \xb, y) \ \forall\ \alpha \in [0, 1]$.
	\label{prop:expibp}
\end{restatable}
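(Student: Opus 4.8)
The plan is to set $a := \mathcal{L}(f(\thetab, \xb_{\text{adv}}), y)$ and $b := \mathcal{L}_{\text{ver}}(f(\thetab, \xb), y)$, so that $\mathcal{L}_{\alpha, \text{Exp}} (\thetab, \xb, y) = a^{1-\alpha} b^{\alpha}$, and to verify each bullet of Definition~\ref{def:expressive} in turn. The single structural fact used throughout is that $0 < a \le b$: strict positivity is the hypothesis $\mathcal{L} : \mathbb{R}^{c} \times \mathbb{N} \rightarrow \mathbb{R}_{>0}$, and the ordering $a \le b$ is obtained by chaining $\mathcal{L}(f(\thetab, \xb_{\text{adv}}), y) \le \mathcal{L}^* (f(\thetab, \xb), y)$ from equation~\eqref{eq:adversarial-loss} with $\mathcal{L}^* (f(\thetab, \xb), y) \le \mathcal{L}_{\text{ver}}(f(\thetab, \xb), y)$ from equation~\eqref{eq:verified-loss}.

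Rewriting $a^{1-\alpha} b^{\alpha} = a\,(b/a)^{\alpha}$ exhibits the loss as the composition of the continuous exponential map $\alpha \mapsto (b/a)^{\alpha}$ with multiplication by the constant $a$; since $b/a \ge 1 > 0$ this is well-defined, continuous, and non-decreasing in $\alpha$ on $[0,1]$ (its derivative $a^{1-\alpha} b^{\alpha}\ln(b/a)$ is non-negative because $\ln(b/a) \ge 0$). This settles continuity and monotonicity. Evaluating at the endpoints gives $\mathcal{L}_{0, \text{Exp}} (\thetab, \xb, y) = a^{1} b^{0} = a = \mathcal{L}(f(\thetab, \xb_{\text{adv}}), y)$ and $\mathcal{L}_{1, \text{Exp}} (\thetab, \xb, y) = a^{0} b^{1} = b = \mathcal{L}_{\text{ver}}(f(\thetab, \xb), y)$, which are exactly the two endpoint conditions; combined with monotonicity they also yield the sandwich $\mathcal{L}(f(\thetab, \xb_{\text{adv}}), y) \le \mathcal{L}_{\alpha, \text{Exp}} (\thetab, \xb, y) \le \mathcal{L}_{\text{ver}}(f(\thetab, \xb), y)$ for all $\alpha \in [0,1]$, with equality at $\alpha \in \{0,1\}$. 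Hence all four bullets of Definition~\ref{def:expressive} hold.

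For the comparison with MTL-IBP, observe that $\mathcal{L}_{\alpha, \text{MTL}} (\thetab, \xb, y) = (1-\alpha)\,a + \alpha\,b$ is the weighted arithmetic mean of $a$ and $b$ with weights $1-\alpha,\alpha \ge 0$ summing to $1$, whereas $\mathcal{L}_{\alpha, \text{Exp}} (\thetab, \xb, y) = a^{1-\alpha} b^{\alpha}$ is the corresponding weighted geometric mean. Since $a, b > 0$, the weighted AM--GM inequality gives $a^{1-\alpha} b^{\alpha} \le (1-\alpha)\,a + \alpha\,b$, i.e.\ $\mathcal{L}_{\alpha, \text{Exp}} (\thetab, \xb, y) \le \mathcal{L}_{\alpha, \text{MTL}} (\thetab, \xb, y)$ for all $\alpha \in [0,1]$ (equivalently, apply Jensen's inequality to the concave map $\log$). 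Note that, in contrast to the CC-IBP/MTL-IBP comparison in Proposition~\ref{prop:mtlibp}, this does not require convexity of $\mathcal{L}$, only positivity.

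There is no genuine obstacle here: the argument is elementary once $0 < a \le b$ is in hand. The only points worth stating carefully are that ``monotonically increasing'' is intended in the non-strict sense --- when $a = b$ (the adversarial attack already matches the verified bound) the loss is constant in $\alpha$, exactly as for CC-IBP and MTL-IBP --- and that the chain $a \le b$ implicitly invokes Assumption~\ref{assumption} through equation~\eqref{eq:verified-loss}.
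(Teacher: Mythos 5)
Your proof is correct and follows essentially the same route as the paper's: the first part is the paper's log-space argument (a convex combination of $\log a$ and $\log b$ exponentiated back, which you phrase equivalently as $a(b/a)^{\alpha}$ with $b/a\geq 1$), and the second part is exactly the paper's use of the concavity of $\log$, stated as weighted AM--GM. Your explicit remarks that $0 < a \le b$ rests on Assumption~\ref{assumption} via equation~\eqref{eq:verified-loss} and that monotonicity is meant non-strictly are careful touches the paper leaves implicit, but they do not change the argument.
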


\section{Related Work}

\looseness=-1
As outlined in $\S$\ref{sec:verified-training-methods}, many verified training algorithms work by either directly upper bounding the robust loss~\citep{Gowal2018b,Mirman2018,Wong2018,Zhang2020,Xu2020,Shi2021}, or by inducing verifiability on top of adversarial training~\citep{Xiao2019,Balunovic2020,DePalma2022,Mueller2023,Mao2023}: this work falls in the latter category. Algorithms belonging to these classes are typically employed against $\ell_\infty$ perturbations.
Other families of methods include Lipschitz-based regularization for $\ell_2$ perturbations~\citep{Leino2021,Huang2021}, 
and architectures that are designed to be inherently robust for either $\ell_2$~\citep{Trockman2021,Singla2021,Singla2022,Singla2022b,Xu2022} or $\ell_\infty$~\citep{Zhang2021towards,Zhang2022boosting} perturbations, for which SortNet~\citep{Zhang2022rethinking} is the most recent and effective algorithm. 
A prominent line of work has focused on achieving robustness with high probability through randomization~\citep{Cohen2019,Salman2019b}: we here focus on deterministic methods.

\looseness=-1
IBP, presented in $\S$\ref{sec:background-verification}, is arguably the simplest incomplete verifier. A popular class of algorithms replaces the activation by over-approximations corresponding to pairs of linear bound, each respectively providing a lower or upper bound to the activation function~\citep{Wong2018,Singh2018,Zhang2018,Singh2019}. Tighter over-approximations can be obtained by representing the convex hull of the activation function~\citep{Ehlers2017,BunelDP20,xu2021fast}, the convex hull of the composition of the activation with the preceding linear layer~\citep{Anderson2020,Tjandraatmadja,DePalma2021,sparsealgos}, interactions within the same layer~\citep{Singh2019b,Muller2021}, or by relying on semi-definite relaxations~\citep{Raghunathan2018,Dathathri2020,Lan2022}.
In general, the tighter the bounds, the more expensive their computation. 
As seen in the recent neural network verification competitions~\citep{vnn-comp-2021,vnn-comp-2022},
state-of-the-art complete verifiers based on branch-and-bound typically rely on specialized and massively-parallel incomplete verifiers~\citep{betacrown,Ferrari2022,zhang2022gcpcrown}, paired with custom branching heuristics~\citep{Bunel2020,improvedbab,Ferrari2022}.
\section{Experimental Evaluation} \label{sec:experiments}

\looseness=-1
We now present experimental results that support the centrality of expressivity, as defined in $\S$\ref{sec:expressivity}, to verified training. 
In $\S$\ref{sec:exp-main} we compare CC-IBP, MTL-IBP and Exp-IBP with results from the literature, showing that even minimalistic instances of the definition attain state-of-the-art performance on a variety of benchmarks.
In $\S$\ref{sec:exp-alpha-sensitivity} and $\S$\ref{sec:tuning-study} we study the effect of the over-approximation coefficient on the performance profiles of expressive losses, highlighting their sensitivity to its value and showing that better approximations of the branch-and-bound loss do not necessarily results in better performance.

\looseness=-1
We implemented CC-IBP, MTL-IBP and Exp-IBP in PyTorch~\citep{Paszke2019}, starting from the training pipeline by~\citet{Shi2021}, based on automatic LiRPA~\citep{Xu2020}, and exploiting their specialized initialization and regularization techniques (see appendix~\ref{sec:hyperparams}). 
All our experiments use the same 7-layer architecture as previous work~\citep{Shi2021,Mueller2023}, which employs BatchNorm~\citep{Ioffe2015} after every layer.
Except on CIFAR-10 with $\epsilon=\nicefrac{2}{255}$, for which an $8$-step attack is used as in previous work~\citep{Mueller2023}, we employ randomly-initialized single-step attacks~\citep{Wong2020} to compute $\xb_{\text{adv}}$ and force the points to be on the perturbation boundary via a large step size (see appendices~\ref{sec:exp-attack-step-size}~and~\ref{sec:exp-attacks} for ablations on the step size and employed adversarial attack, respectively).
Unless specified otherwise, the reported verified robust accuracies for our experiments are computed using branch-and-bound, with a setup similar to IBP-R~\citep{DePalma2022}. Specifically, we use the OVAL framework~\citep{Bunel2020,improvedbab}, with a variant of the UPB branching strategy~\citep{DePalma2022} and $\alpha$-$\beta$-CROWN as the bounding algorithm~\citep{betacrown} (see appendix~\ref{sec:bab}). 
Details pertaining to the employed datasets, hyper-parameters, network architectures, and computational setup are reported in appendix~\ref{sec:exp-appendix}.

\subsection{Comparison with Literature Results} \label{sec:exp-main}

\begin{table*}[t!]
	\sisetup{detect-all = true}
	
	\setlength{\heavyrulewidth}{0.09em}
	\setlength{\lightrulewidth}{0.5em}
	\setlength{\cmidrulewidth}{0.03em}
	
	\centering
	
	\scriptsize
	\setlength{\tabcolsep}{4pt}
	\aboverulesep = 0.3mm  
	\belowrulesep = 0.1mm  
	\caption{\small 
		Comparison of the expressive losses from $\S$\ref{sec:convexcombinations} with literature results for $\ell_\infty$ norm perturbations on CIFAR-10, TinyImageNet and downscaled ($64\times64$) ImageNet.
		The entries corresponding to the best standard or verified robust accuracy for each perturbation radius are highlighted in bold. 
		For the remaining entries, improvements on the literature performance of the best-performing ReLU-based architectures are underlined.
		\label{fig:main-results-tables}}
	\begin{adjustbox}{width=.95\textwidth, center}
		\begin{tabular}{
				c
				c
				l
				c
				S[table-format=2.2]
				S[table-format=2.2]
			} 
			
			\toprule \\[-5pt]
			
			\multicolumn{1}{ c }{Dataset} &
			\multicolumn{1}{ c }{$\epsilon$} &
			\multicolumn{1}{ c }{Method} &
			\multicolumn{1}{ c }{Source} &
			\multicolumn{1}{ c }{Standard acc. [\%]} &
			\multicolumn{1}{ c }{Verified rob. acc. [\%]}  \\
			
			\cmidrule(lr){1-2} \cmidrule(lr){2-2} \cmidrule(lr){3-4} \cmidrule(lr){5-6} \\[-5pt]
									
			\multirow{24}{*}{CIFAR-10} & \multirow{12}{*}{$\frac{2}{255}$} &
			
			\multicolumn{1}{ c }{\textsc{CC-IBP}} & this work
 			& 80.09 & \B 63.78 \\
			
			& & \multicolumn{1}{ c }{\textsc{MTL-IBP}} & this work
			& 80.11 & 63.24  \\
			
			& & \multicolumn{1}{ c }{\textsc{Exp-IBP}} & this work
			& \B 80.61 & 61.65 \\
			
			\cmidrule(lr){3-6} \\[-6pt]
			
			& & \multicolumn{1}{ c }{\textsc{STAPS}} & \citet{Mao2023}
			& 79.76 & 62.98  \\
			
			& & \multicolumn{1}{ c }{\textsc{SABR}$^\dagger$} & \citet{mao2023understanding}
			& 79.89 & 63.28  \\
			
			& & \multicolumn{1}{ c }{\textsc{SortNet}} & \citet{Zhang2022rethinking}
			& 67.72 & 56.94\\
			
			& & \multicolumn{1}{ c }{\textsc{IBP-R}$^\dagger$} & \citet{mao2023understanding}
			& 80.46 &  62.03 \\
			
			& & \multicolumn{1}{ c }{\textsc{IBP}$^\dagger$} & \citet{mao2023understanding} 
			& 68.06 &   56.18 \\
			
			& & \multicolumn{1}{ c }{\textsc{AdvIBP}} & \citet{Fan2021} 
			& 59.39 & 48.34  \\ 
			
			& & \multicolumn{1}{ c }{\textsc{CROWN-IBP}} & \citet{Zhang2020} 
			& 71.52 &  53.97  \\ 
			
			& & \multicolumn{1}{ c }{\textsc{COLT}} & \citet{Balunovic2020} 
			&  78.4  &  60.5 \\
			
			\cmidrule(lr){2-6} \\[-6pt]
			
			& \multirow{12}{*}{$\frac{8}{255}$} &
			
			\multicolumn{1}{ c }{\textsc{CC-IBP}} & this work
			& \U{53.71} & \U{35.27} \\
			
			& & \multicolumn{1}{ c }{\textsc{MTL-IBP}} & this work
			& \U{53.35} & \U{35.44}  \\
			
			& & \multicolumn{1}{ c }{\textsc{Exp-IBP}} & this work
			& \U{53.97} & 35.04  \\
			
			\cmidrule(lr){3-6} \\[-6pt]
			
			& & \multicolumn{1}{ c }{\textsc{STAPS}} & \citet{Mao2023}
			& 52.82 &  34.65  \\
			
			& & \multicolumn{1}{ c }{\textsc{SABR}} & \citet{Mueller2023}
			& 52.38 & 35.13  \\
			
			& & \multicolumn{1}{ c }{\textsc{SortNet}} & \citet{Zhang2022rethinking}
			& \B 54.84 & \B 40.39 \\
			
			& & \multicolumn{1}{ c }{\textsc{IBP-R}} & \citet{DePalma2022}
			& 52.74 &   27.55  \\ 
			
			& & \multicolumn{1}{ c }{\textsc{IBP}} & \citet{Shi2021} 
			&  48.94  &  34.97 \\ 
			
			& & \multicolumn{1}{ c }{\textsc{AdvIBP}} & \citet{Fan2021} 
			& 47.14 & 33.43 \\ 
			
			& & \multicolumn{1}{ c }{\textsc{CROWN-IBP}} & \citet{Xu2020} 
			&  46.29 & 33.38  \\ 
			
			& & \multicolumn{1}{ c }{\textsc{COLT}} & \citet{Balunovic2020} 
			&  51.70  & 27.50  \\
			
			\cmidrule{1-6} \\[-6pt]
			
			\multirow{8.5}{*}{TinyImageNet} & \multirow{8.5}{*}{$\frac{1}{255}$} &
			
			\multicolumn{1}{ c }{\textsc{CC-IBP}} & this work
			& \U{38.61}  & \B 26.39\\
			
			& & \multicolumn{1}{ c }{\textsc{MTL-IBP}} & this work
			& \U{37.56} & \U{26.09}  \\
			
			& & \multicolumn{1}{ c }{\textsc{Exp-IBP}} & this work
			& \B 38.71 &  \U{26.18} \\
			
			\cmidrule(lr){3-6} \\[-6pt]
			
			& & \multicolumn{1}{ c }{\textsc{STAPS}} & \citet{Mao2023}
			& 28.98 &  22.16  \\
			
			& & \multicolumn{1}{ c }{\textsc{SABR}$^\dagger$} & \citet{mao2023understanding}
			& 28.97 & 21.36   \\
			
			& & \multicolumn{1}{ c }{\textsc{SortNet}} & \citet{Zhang2022rethinking}
			& 25.69 & 18.18  \\
			
			& & \multicolumn{1}{ c }{\textsc{IBP}$^\dagger$} & \citet{mao2023understanding} 
			& 25.40 &   19.92 \\
			
			& & \multicolumn{1}{ c }{\textsc{CROWN-IBP}} & \citet{Shi2021} 
			& 25.62 &  17.93  \\ 
			
			\cmidrule{1-6} \\[-6pt]
			
			\multirow{6.5}{*}{ImageNet64} & \multirow{6.5}{*}{$\frac{1}{255}$} &
			
			\multicolumn{1}{ c }{\textsc{CC-IBP}} & this work
			& \U{19.62}  & \U{11.87}  \\
			
			& & \multicolumn{1}{ c }{\textsc{MTL-IBP}} & this work
			& \U{20.15} & \U{12.13} \\
			
			& & \multicolumn{1}{ c }{\textsc{Exp-IBP}} & this work
			& \B 22.73 & \B 13.30  \\
			
			\cmidrule(lr){3-6} \\[-6pt]
			
			& & \multicolumn{1}{ c }{\textsc{SortNet}} & \citet{Zhang2022rethinking}
			& 14.79 & 9.54  \\
			
			& & \multicolumn{1}{ c }{\textsc{CROWN-IBP}} & \citet{Xu2020} 
			& 16.23 &  8.73  \\ 
			
			& & \multicolumn{1}{ c }{\textsc{IBP}} & \citet{Gowal2018b} 
			& 15.96 &   6.13 \\
			
			\bottomrule 
			
			\\[-5pt]
			\multicolumn{6}{ l }{
					 \shortstack[l]{$^\dagger$$2\times$ wider network than the architecture used in our experiments and larger than those from the relevant works:\\ \hspace{3.7pt} \citet{Shi2021} for IBP, \citet{DePalma2022} for IBP-R, \citet{Mueller2023} for SABR.} 
			}
		\end{tabular}
	\end{adjustbox}
\end{table*}

\looseness=-1
Table~\ref{fig:main-results-tables} compares the results of CC-IBP, MTL-IBP and Exp-IBP with relevant previous work in terms of standard and verified robust accuracy against $\ell_\infty$ norm perturbations. 
We benchmark on the CIFAR-10~\citep{CIFAR10}, TinyImageNet~\citep{Le2015}, and downscaled ImageNet ($64\times64$)~\citep{Chrabaszcz2017} datasets. 
The results from the literature report the best performance attained by each algorithm on any architecture, thus providing a summary of the state-of-the-art. 
%
In order to produce a fair comparison with literature results, in this section we comply with the seemingly standard practice in the area~\citep{Gowal2018b,Zhang2020,Shi2021,Mueller2023} and directly tune on the evaluation sets.
Table~\ref{fig:main-results-tables} shows that, in terms of robustness-accuracy trade-offs, CC-IBP, MTL-IBP and Exp-IBP all attain state-of-the-art performance. In particular, no literature results correspond to better trade-offs in any of the considered benchmarks except for CIFAR-10 with $\epsilon = \nicefrac{8}{255}$, where specialized architectures such as SortNet attain better robustness-accuracy trade-offs. 
On larger datasets, our Exp-IBP, MTL-IBP and CC-IBP experiments display significant improvements upon the results from the literature.
For TinyImageNet, standard and verified robust accuracies are increased by at least $8.58\%$ and $3.93\%$, respectively. 
On downscaled ImageNet, the improvements correspond to at least $3.39\%$ for standard accuracy and to at least $2.33\%$ for verified robust accuracy.
In order to further investigate the role of expressivity in these results, we performed our own tuning of SABR~\citep{Mueller2023}, which satisfies definition~\ref{def:expressive}, on TinyImageNet and ImageNet64 and present the relative results in appendix~\ref{sec:sabr-results}. Table~\ref{fig:sabr-table} shows that, when carefully tuned, all the considered expressive losses (CC-IBP, MTL-IBP, Exp-IBP and SABR) attain relatively similar performance profiles and markedly improve upon previously-reported TinyImageNet and ImageNet64 results.
Given the costs associated with ImageNet64, we only attempted branch-and-bound based verification on networks with $\alpha \geq 5 \times 10^{-2}$ for all expressive losses, skewing the performance in favor of Exp-IBP owing to its lower loss values (see proposition~\ref{prop:expibp} and table~\ref{fig:hyperparams}). We speculate that better trade-offs in this setup may be obtained when employing lower coefficients. 
In addition, MTL-IBP performs significantly better than the closely-related AdvIBP (see $\S\ref{sec:mtlibp}$) on all benchmarks, confirming the effectiveness of an appropriately tuned scalarization~\citep{Xin2022}. 
\citet{Fan2021} provide results for the use of $\mathcal{L}_{0.5, \text{MTL}} (\thetab, \xb, y)$ within a COLT-like stage-wise training procedure, treated as a baseline, on CIFAR-10 with $\epsilon = \nicefrac{8}{255}$, displaying $26.22\%$ IBP verified robust accuracy. These results are markedly worse than those we report for MTL-IBP, which attains $34.61\%$ IBP verified robust accuracy with $\alpha=0.5$ in table~\ref{fig:verification-ablation} (see appendix~\ref{sec:exp-appendix}). We believe this discrepancy to be linked to the commonalities with COLT, which under-performs in this setting, and, in line with similar results in the multi-task literature~\citep{Kurin2022}, to our use of specialized initialization and regularization from \citet{Shi2021}.
Additional experimental data is presented in the appendices: appendix~\ref{sec:app-time} provides an indication of training cost,
appendix~\ref{sec:app-mnist} presents MNIST results, and appendix~\ref{sec:exp-verif} displays verified robust accuracies under less expensive verifiers. 
Remarkably, on TinyImageNet and downscaled ImageNet, CC-IBP, MTL-IBP and Exp-IBP all attain better verified robust accuracy than literature results even without resorting to complete verifiers.

\begin{figure*}[b!]
	\begin{subfigure}{0.16\textwidth}
		\centering
		\includegraphics[width=\textwidth]{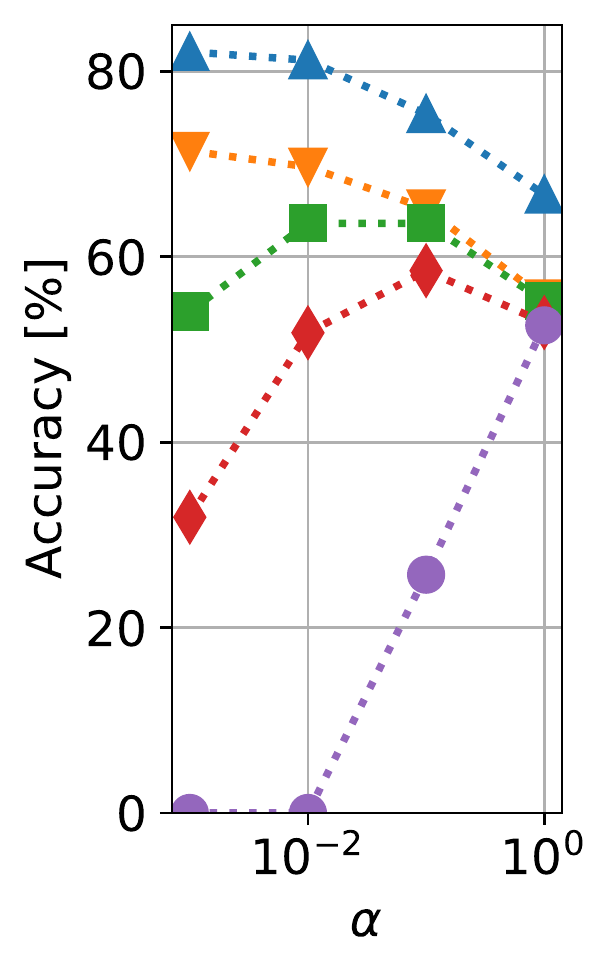}
		\vspace{-18pt}
		\captionsetup{justification=centering, labelsep=space, margin=6pt}
		\caption{\label{fig:alpha-sensitivity-ccibp-2}  CC-IBP,\newline $\epsilon=\nicefrac{2}{255}$.}
	\end{subfigure}\hspace{1pt}
	\begin{subfigure}{0.16\textwidth}
		\centering
		\includegraphics[width=\columnwidth]{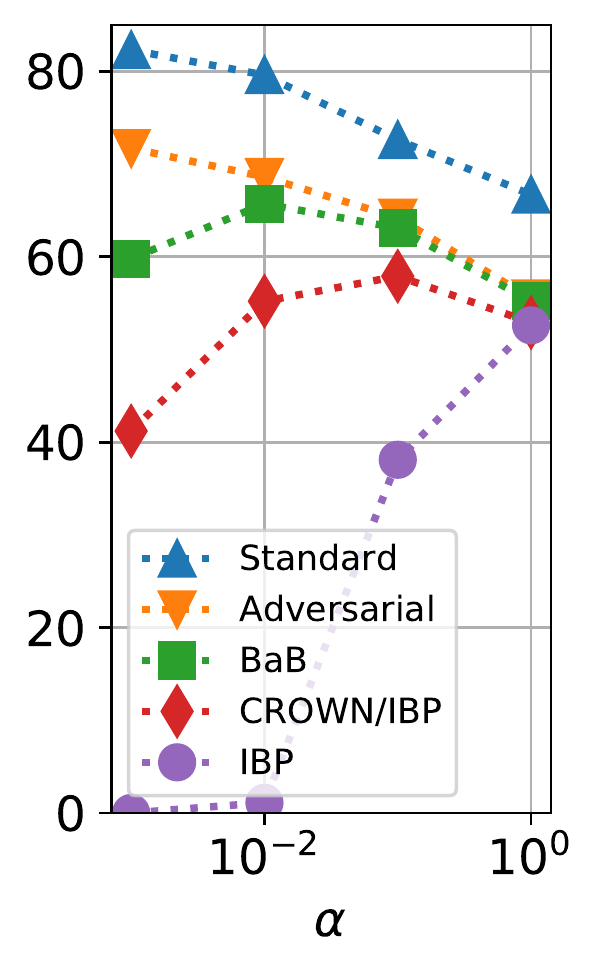}
		\vspace{-18pt}
		\captionsetup{justification=centering, labelsep=space, margin=6pt}
		\caption{\label{fig:alpha-sensitivity-mtlibp-2}  MTL-IBP,\newline $\epsilon=\nicefrac{2}{255}$.}
	\end{subfigure}\hspace{1pt}
	\begin{subfigure}{0.16\textwidth}
	\centering
	\includegraphics[width=\columnwidth]{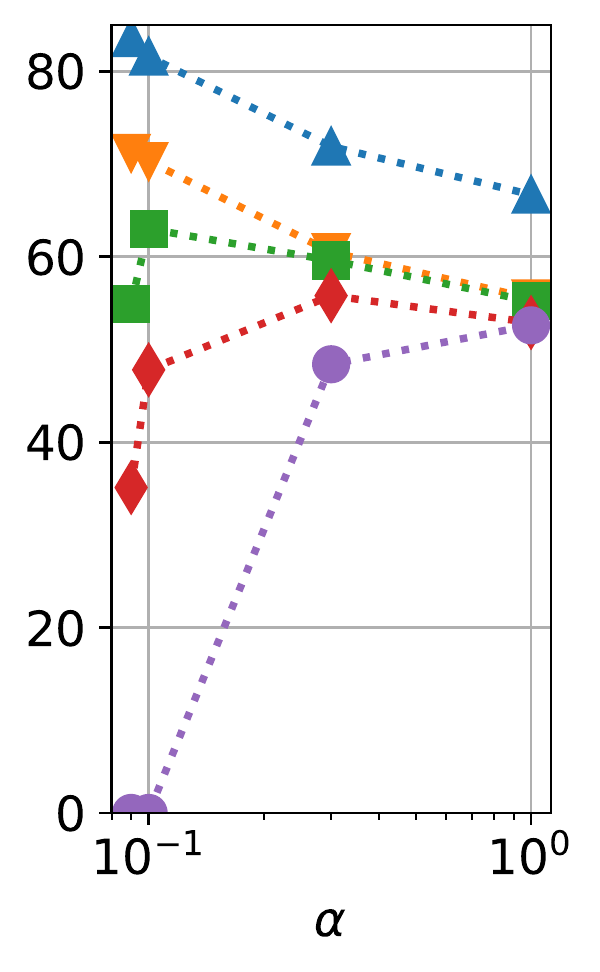}
	\vspace{-18pt}
	\captionsetup{justification=centering, labelsep=space, margin=6pt}
	\caption{\label{fig:alpha-sensitivity-expibp-2}  Exp-IBP,\newline $\epsilon=\nicefrac{2}{255}$.}
	\end{subfigure}\hspace{1pt}
	\begin{subfigure}{0.16\textwidth}
		\centering
		\includegraphics[width=\textwidth]{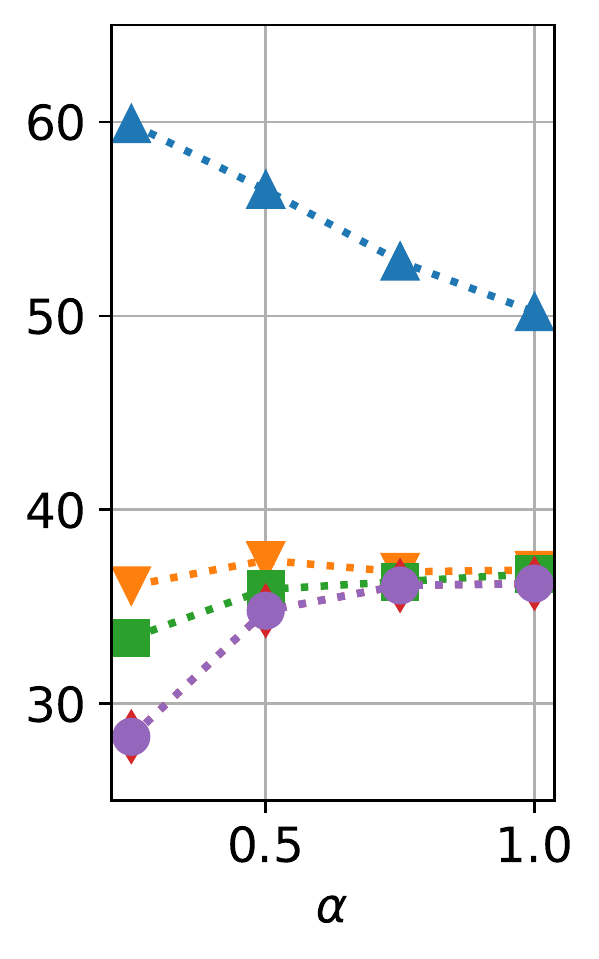}
		\vspace{-18pt}
		\captionsetup{justification=centering, labelsep=space, margin=6pt}
		\caption{\label{fig:alpha-sensitivity-ccibp-8}  CC-IBP,\newline $\epsilon=\nicefrac{8}{255}$.}
	\end{subfigure}\hspace{1pt}
	\begin{subfigure}{0.16\textwidth}
		\centering
		\includegraphics[width=\columnwidth]{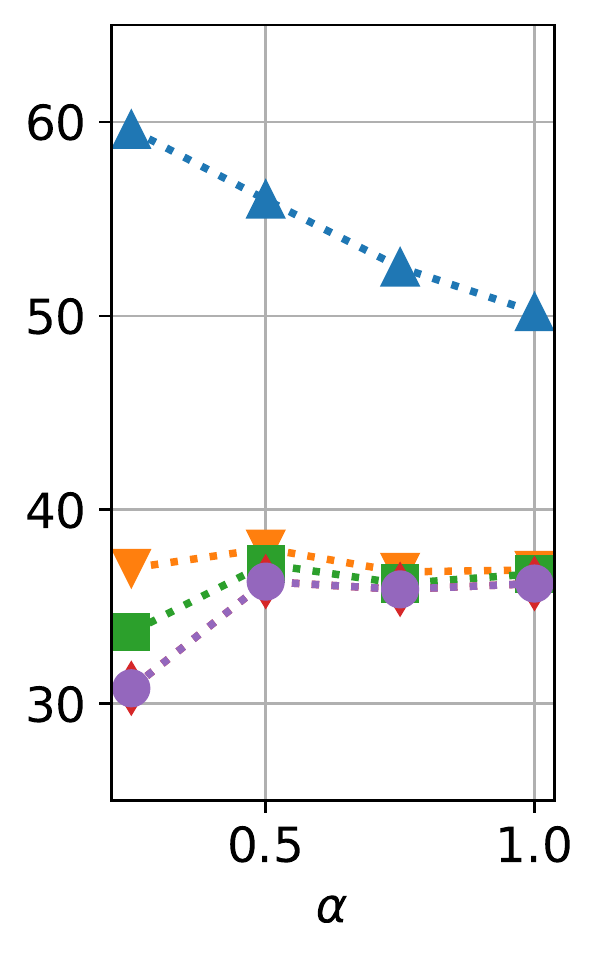}
		\vspace{-18pt}
		\captionsetup{justification=centering, labelsep=space, margin=6pt}
		\caption{\label{fig:alpha-sensitivity-mtlibp-8}  MTL-IBP,\newline $\epsilon=\nicefrac{8}{255}$.}
	\end{subfigure}\hspace{1pt}
	\begin{subfigure}{0.16\textwidth}
		\centering
		\includegraphics[width=\columnwidth]{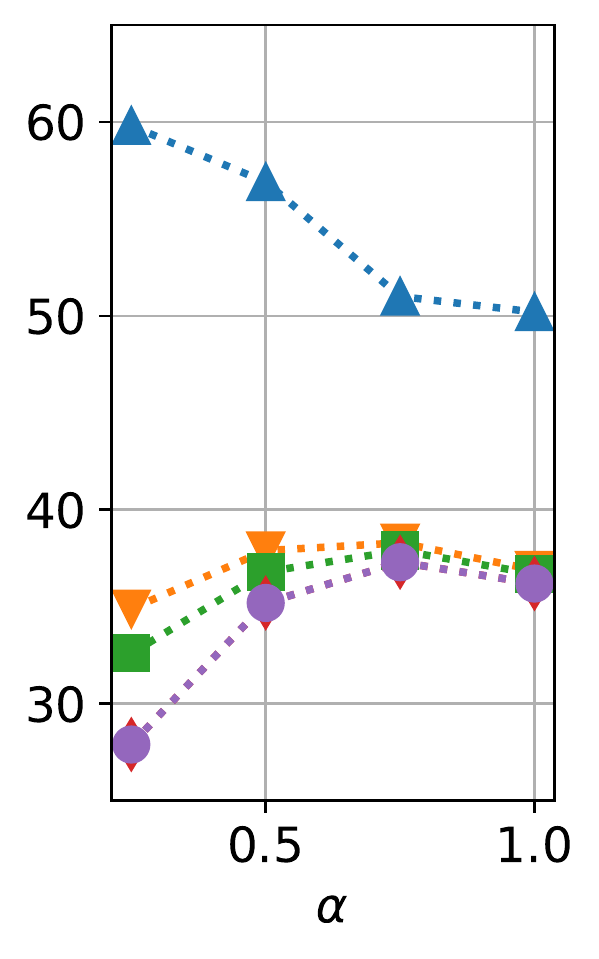}
		\vspace{-18pt}
		\captionsetup{justification=centering, labelsep=space, margin=6pt}
		\caption{\label{fig:alpha-sensitivity-expibp-8}  Exp-IBP,\newline $\epsilon=\nicefrac{8}{255}$.}
	\end{subfigure}
	\caption{\label{fig:alpha-sensitivity} 
		Sensitivity of CC-IBP, MTL-IBP and Exp-IBP to the convex combination coefficient $\alpha$. We report standard, adversarial and verified robust accuracies (with different verifiers) under $\ell_\infty$ perturbations on the first $1000$ images of the CIFAR-10 test set.
		The legend in plot~\ref{fig:alpha-sensitivity-mtlibp-2} applies to all sub-figures.
	}
\end{figure*}

\subsection{Sensitivity to Over-approximation Coefficient} \label{sec:exp-alpha-sensitivity}

Figure~\ref{fig:alpha-sensitivity} reports standard, adversarial, and verified robust accuracies under different verifiers when varying the value of $\alpha$ for CC-IBP, MTL-IBP and Exp-IBP.
BaB denotes verified accuracy under the complete verifier from table~\ref{fig:main-results-tables}, while CROWN/IBP the one under the best bounds between full backward CROWN and IBP.
Adversarial robustness is computed against the attacks performed within the employed BaB framework (see appendix~\ref{sec:bab}).
As expected, the standard accuracy steadily decreases with increasing $\alpha$ for both algorithms and, generally speaking (except for MTL-IBP and Exp-IBP when $\epsilon=\nicefrac{8}{255}$), the IBP verified accuracy will increase with $\alpha$.
As seen on SABR~\citep{Mueller2023}, the behavior of the adversarial and verified robust accuracies under tighter verifiers is less straightforward. In fact, depending on the setting and the algorithm, a given accuracy may first increase and then decrease with $\alpha$. This sensitivity highlights the need for careful tuning according to the desired robustness-accuracy trade-off and verification budget.
For instance, figure~\ref{fig:alpha-sensitivity} suggests that, for $\epsilon=\nicefrac{8}{255}$, the best trade-offs between BaB verified accuracy and standard accuracy lie around $\alpha=0.5$ for both CC-IBP, MTL-IBP and Exp-IBP. When $\epsilon=\nicefrac{2}{255}$, this will be around $\alpha=10^{-2}$ for CC-IBP, and when $\alpha\in(10^{-3}, 10^{-2})$ for MTL-IBP, whose natural accuracy decreases more sharply with $\alpha$ (see appendix~\ref{sec:exp-appendix}). Exp-IBP is extremely sensitive to $\alpha$ on this setup, with the best trade-offs between BaB-verified robustness and standard accuracy lying in a small neighborhood of $\alpha = 10^{-1}$. Note that, as expected from proposition~\ref{prop:expibp}, Exp-IBP requires larger values of $\alpha$ than MTL-IBP in order attain similar performance profiles.
Furthermore, larger $\alpha$ values appear to be beneficial against larger perturbations, where branch-and-bound is less effective.
Appendix~\ref{sec:exp-alpha-loss-sensitivity} carries out a similar analysis on the loss values associated to the same networks.

\subsection{Branch-and-Bound Loss and Approximations} \label{sec:tuning-study}

\begin{figure*}[t!]
	\begin{subfigure}{0.16\textwidth}
		\centering
		\includegraphics[width=\textwidth]{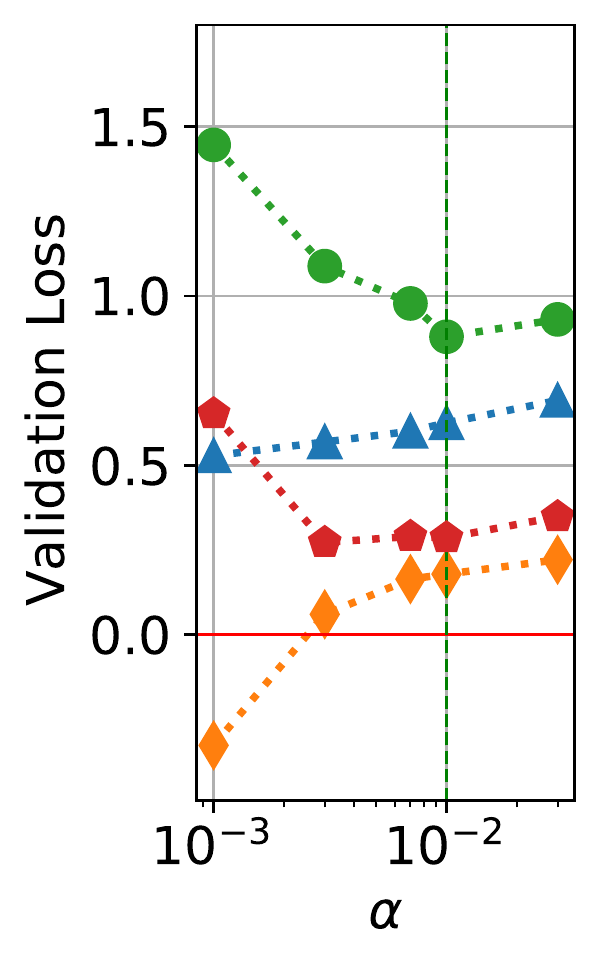}
		\vspace{-18pt}
		\captionsetup{justification=centering, labelsep=space, margin=6pt}
		\caption{\label{fig:2-255-tuning-ccibp} CC-IBP,\newline $\epsilon=\nicefrac{2}{255}$.}
	\end{subfigure}\hspace{1pt}
	\begin{subfigure}{0.16\textwidth}
		\centering
		\includegraphics[width=\columnwidth]{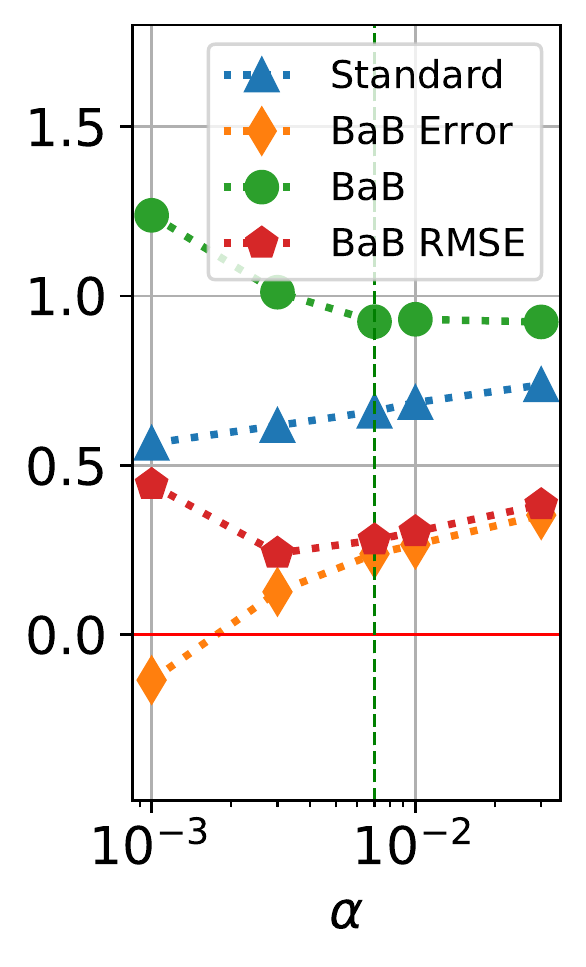}
		\vspace{-18pt}
		\captionsetup{justification=centering, labelsep=space, margin=6pt}
		\caption{\label{fig:2-255-tuning-mtlibp} MTL-IBP,\newline $\epsilon=\nicefrac{2}{255}$.}
	\end{subfigure}\hspace{1pt}
	\begin{subfigure}{0.16\textwidth}
		\centering
		\includegraphics[width=\textwidth]{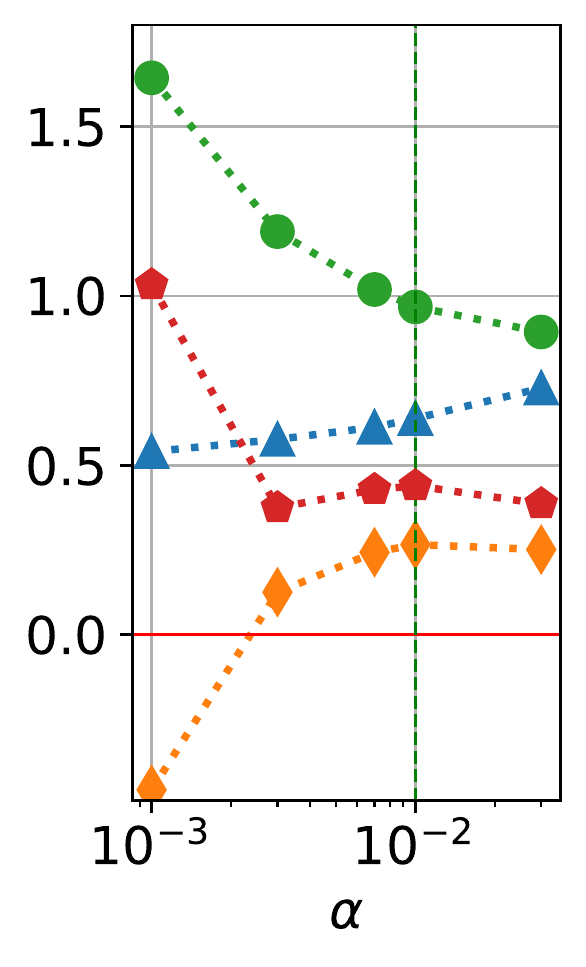}
		\vspace{-18pt}
		\captionsetup{justification=centering, labelsep=space, margin=6pt}
		\caption{\label{fig:2-255-tuning-sabr}  SABR,\newline $\epsilon=\nicefrac{2}{255}$.}
	\end{subfigure}\hspace{1pt}
	\begin{subfigure}{0.16\textwidth}
		\centering
		\includegraphics[width=\textwidth]{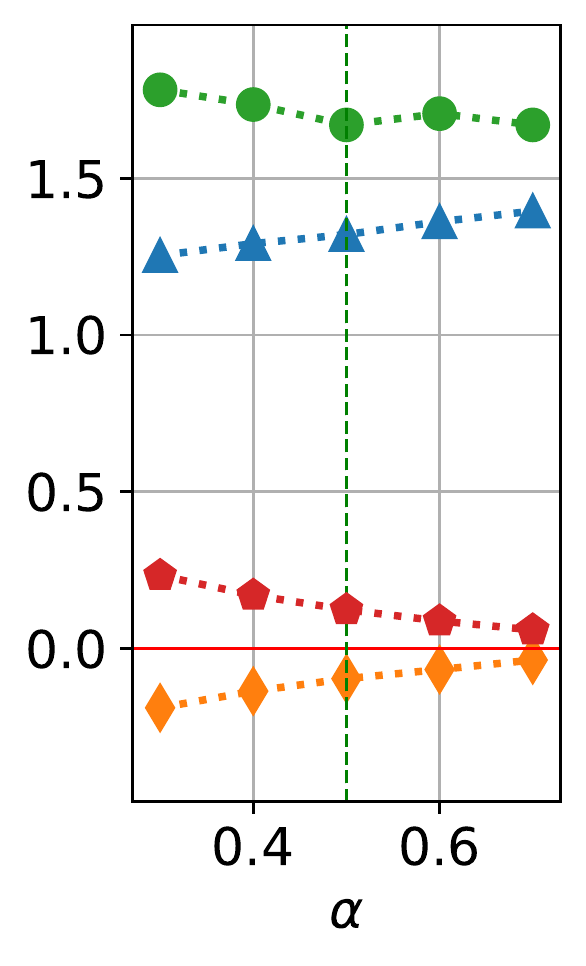}
		\vspace{-18pt}
		\captionsetup{justification=centering, labelsep=space, margin=6pt}
		\caption{\label{fig:8-255-tuning-ccibp}  CC-IBP,\newline $\epsilon=\nicefrac{8}{255}$.}
	\end{subfigure}\hspace{1pt}
	\begin{subfigure}{0.16\textwidth}
		\centering
		\includegraphics[width=\columnwidth]{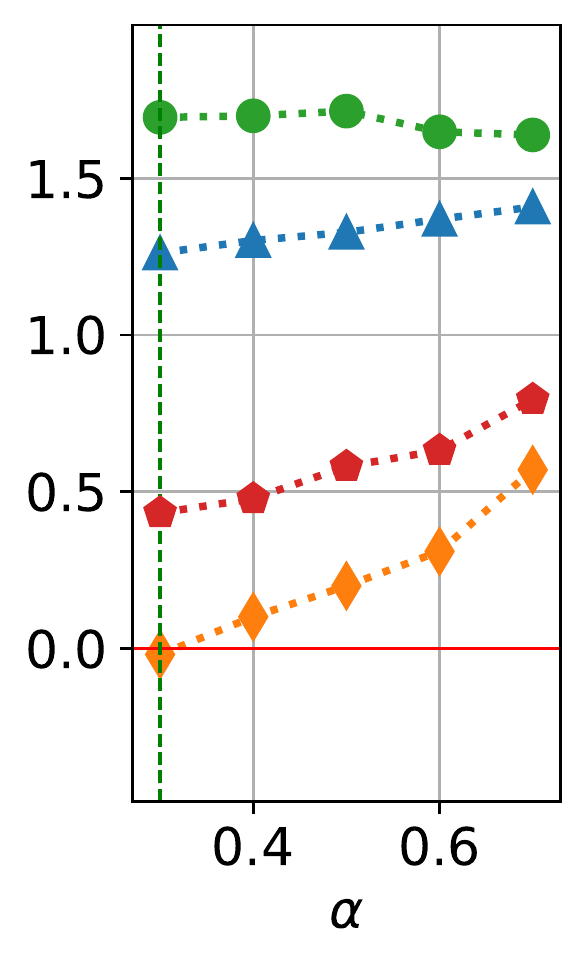}
		\vspace{-18pt}
		\captionsetup{justification=centering, labelsep=space, margin=6pt}
		\caption{\label{fig:8-255-tuning-mtlibo}  MTL-IBP,\newline $\epsilon=\nicefrac{8}{255}$.}
	\end{subfigure}\hspace{1pt}
	\begin{subfigure}{0.16\textwidth}
		\centering
		\includegraphics[width=\textwidth]{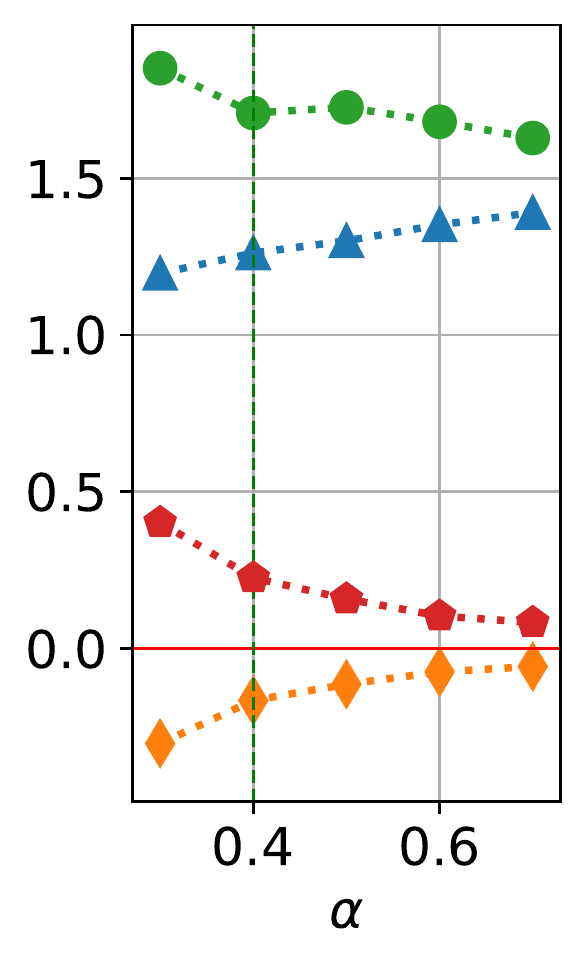}
		\vspace{-18pt}
		\captionsetup{justification=centering, labelsep=space, margin=6pt}
		\caption{\label{fig:8-255-tuning-sabr}  SABR,\newline $\epsilon=\nicefrac{8}{255}$.}
	\end{subfigure}
	\caption{\label{fig:cifar-tuning} 
		\looseness=-1
		Relationship between the over-approximation coefficient $\alpha$ and the standard loss, the branch-and-bound loss, its approximation error and RMSE (relative to the expressive loss employed during training) on a holdout validation set of CIFAR-10.
		While the standard loss is computed on the entire validation set, branch-and-bound-related statistics are limited to the first $100$ images. The dashed vertical line denotes the $\alpha$ value minimizing the sum of the standard and branch-and-bound losses.
		The legend in plot~\ref{fig:2-255-tuning-mtlibp} applies to all sub-figures.
	}
\end{figure*}

\looseness=-1
A common assumption in previous work is that better approximations of the worst-case loss from equation~\eqref{eq:robust-loss} will result in better trade-offs between verified robustness and accuracy~\citep{Mueller2023,Mao2023}.
In order to investigate the link between expressive losses, the worst-case loss, and a principled tuning process, we conduct an in-depth study on a CIFAR-10 validation set for CC-IBP, MTL-IBP and SABR.
Specifically, we evaluate on a holdout set taken from the last $20\%$ of the CIFAR-10 training images, and train the same architecture employed in $\S$\ref{sec:exp-main} on the remaining $80\%$, tuning $\alpha \in \{1 \times 10^{-3}, 3 \times 10^{-3}, 7 \times 10^{-3}, 1 \times 10^{-2}, 3 \times 10^{-2}\}$ for $\epsilon=\nicefrac{2}{255}$ and $\alpha \in \{0.3, 0.4, 0.5, 0.6, 0.7\}$ for $\epsilon=\nicefrac{8}{255}$.
Given that computing the worst-case loss on networks of practical interest is intractable, we replace it by $\mathcal{L}_{\text{BaB}} (f(\thetab, \xb_i), y)$, computed by running branch-and-bound (BaB) for $15$ seconds on each logit difference, without returning early for verified properties so as to tighten the bound.
Figure~\ref{fig:cifar-tuning} reports values for the standard loss, the branch-and-bound loss and two measures for its approximation: the average value of $\left(\mathcal{L}_\alpha (\thetab, \xb_i, y) - \mathcal{L}_{\text{BaB}} (f(\thetab, \xb_i), y)\right)$ (BaB Error) and the square root of the average of its square (BaB RMSE). Observe that the best-performing model, here defined as the one with the smallest sum of the standard and BaB losses, does not necessarily correspond to the points with the best BaB loss approximation. In particular, over-approximations (BaB Error $> 0$) display better validation trade-offs for $\epsilon=\nicefrac{2}{255}$, and under-approximations for $\epsilon=\nicefrac{8}{255}$.
Given that $\mathcal{L}_{\text{BaB}} (f(\thetab, \xb), y) \geq \mathcal{L}^* (f(\thetab, \xb), y)$, the results on $\epsilon=\nicefrac{2}{255}$ demonstrate that accurate worst-case loss approximations do not necessarily correspond to better performance: neither in terms of verified robustness alone, nor in terms of trade-offs with accuracy.
As shown in appendix~\ref{sec:tuning-study-models}, the selected $\alpha$ values result in similar (state-of-the-art for ReLU networks) CIFAR-10 robustness-accuracy trade-offs across the three expressive losses here considered, suggesting the importance of expressivity, rather than of the specific employed heuristic, for verified training.

\section{Conclusions}

\looseness=-1
We defined a class of parametrized loss functions, which we call expressive, that range from the adversarial loss to the IBP loss by means of a single parameter $\alpha \in [0, 1]$. 
We argued that expressivity is key to maximize trade-offs between verified robustness and standard accuracy, and supported this claim by extensive experimental results. 
In particular, we showed that SABR is expressive, and demonstrated that even three minimalistic instantiations of the definition obtained through convex combinations, CC-IBP, MTL-IBP and Exp-IBP, attain state-of-the-art results on a variety of benchmarks. 
Furthermore, we provided a detailed analysis of the influence of the over-approximation coefficient~$\alpha$ on the properties of networks trained via expressive losses, demonstrating that, differently from what previously conjectured, better approximations of the branch-and-bound loss do not necessarily result in better performance.


\section*{Ethics Statement} \label{sec:limitations-broader}

We do not foresee any immediate negative application of networks that are provably robust to adversarial perturbations. 
Nevertheless, the development of systems that are more robust may hinder any positive use of adversarial examples~\citep{Albert2021}, potentially amplifying the harm caused by unethical machine learning systems. 
On the other hand, the benefits of provable robustness include defending ethical machine learning systems against unethical attackers.
Finally, while we demonstrated state-of-the-art performance on a suite of commonly-employed vision datasets, this does not guarantee the effectiveness of our methods beyond the tested benchmarks and, in particular, on different~data~domains.


\section*{Reproducibility Statement}

Code to reproduce our experiments and relevant trained models are available at \url{https://github.com/alessandrodepalma/expressive-losses}.
Pseudo-code for both losses can be found in appendix~\ref{sec:pseudo-code}. Training details, hyper-parameters and computational setups are provided in sections~\ref{sec:convexcombinations},~\ref{sec:experiments} and appendix~\ref{sec:exp-appendix}. 
Appendix~\ref{sec:exp-variability} provides an indication of experimental variability for MTL-IBP and CC-IBP by repeating the experiments for a single benchmark a further $3$ times. 

\subsubsection*{Acknowledgments}
\looseness=-1
The authors would like to thank the anonymous reviewers for their constructive feedback, which strengthened the work, and Yuhao Mao, who led us to a correction of the description of our implementation details in appendix~\ref{sec:hyperparams}.
This research was partly funded by UKRI (``SAIS: Secure AI AssistantS", EP/T026731/1) and Innovate UK (EU Underwrite project ``EVENFLOW", 10042508). 
The work was further supported by the ``SAIF" project, funded by the ``France 2030'' government investment plan managed by the French National Research Agency, under the reference ANR-23-PEIA-0006. 
While at the University of Oxford, ADP was supported by the EPSRC Centre for Doctoral Training in Autonomous Intelligent Machines and Systems, grant EP/L015987/1, and an IBM PhD fellowship. 
AL is supported by a Royal Academy of Engineering Fellowship in Emerging~Technologies.

\bibliography{ccibp}
\bibliographystyle{iclr2024_conference}

\iftoggle{appendix}{
	\clearpage
	\appendix
	\section{Interval Bound Propagation} \label{sec:ibp}

\begin{wrapfigure}{R}{0.3\textwidth}
	\vspace{-7pt}
	\centering
	\noindent\resizebox{.26\textwidth}{!}{
		\definecolor{lightgray}{rgb}{0.8,0.8,0.8}

\begin{tikzpicture}
  \tikzset{dummy/.style= {inner sep=0, outer sep=0}}
  \tikzset{cross/.style={circle, draw,
      minimum size=5*(#1-\pgflinewidth),
      inner sep=0pt, outer sep=0pt,
      ultra thick, color=red}}

  \draw[dashed](-1, -0.5) to (-1, 1.5);
  \draw[dashed](1, -0.5) to (1, 1.5);
  \draw[dashed](-1.5, 1) to (2, 1);

  \draw[fill=teal!80, fill opacity=0.9](-1, 0) -- (1,0) -- (1, 1) -- (-1, 1) -- (-1, 0);

  \node[dummy](lb-lab) at (-1.3, -0.3) {$\hat{l}$\hspace{2pt}};
  \node[dummy](ub-lab) at (1.35, -0.3) {\hspace{2pt}$\hat{u}$};
  \node[dummy](ub-lab) at (2.2, 1.25) {$x=\hat{u}$};


  \draw[-latex](-1.5,0) to (2, 0);
  \node[dummy](x-label) at (2.3, 0) {\hspace{2pt} $\hat{x}$};
  \draw[-latex](0,-0.5) to (0, 1.5);
  \node[dummy](x-label) at (0, 1.8) {$x$};
  
  \draw[line width=.7pt,dashed](-1, 0) to (0, 0) to (1, 1);

\end{tikzpicture}
	}
	\caption{IBP ReLU over-approximation, with $\hat{x} \in [\hat{l}, \hat{u}]$ for the considered input domain.}
	\vspace{-3pt}
	\label{fig:ibp}
\end{wrapfigure}

As outlined in $\S$\ref{sec:background-verification}, incomplete verification aims to compute lower bounds $\ubar{\zb}(\thetab, \xb, y)$ to the logit differences $\zb(\thetab, \xb, y):=\left( f(\thetab, \xb)[y]\ \mathbf{1} - f(\thetab, \xb)\right)$ of network $f(\thetab, \xb)$ for the perturbation domain $\mathcal{C}(\xb, \epsilon)$.
In the following, we will assume that the last layer of $f(\thetab, \xb)$ and the difference in the definition of $\zb(\thetab, \xb, y)$ have been merged into a single linear layer, representing $\zb(\thetab, \xb, y)$ as an $n$-layer network.
We will henceforth use $[\mbf{a}]_+ := \max(\mbf{a}, \mbf{0})$ and $[\mbf{a}]_- := \min(\mbf{a}, \mbf{0})$. Furthermore, let $W_k$ and $\bb_k$ be respectively the weight matrix and the bias of the $k$-th layer of $\zb(\thetab, \xb, y)$.
In the case of monotonically increasing activation functions $\sigma$, such as ReLU, IBP computes $\ubar{\zb}(\thetab, \xb, y)$ as $\hat{\lb}_{n}$ from the following iterative procedure: 
\begin{equation}
	\begin{aligned}
		\begin{array}{r}
			\hat{\lb}_{1} = \min_{\xb' \in \mathcal{C}(\xb, \epsilon)} W_{1} \xb' + \bb_{1}\\
			\hat{\ub}_{1} = \max_{\xb' \in \mathcal{C}(\xb, \epsilon)} W_{1} \xb' + \bb_{1}
		\end{array}%
		\enskip \left\{\hspace{-3pt}
		\begin{array}{r}
			\hat{\ub}_{k} = \left[W_{k}\right]_+ \sigma(\hat{\ub}_{k-1}) + [W_{k}]_- \sigma(\hat{\lb}_{k-1}) + \bb_{k}\\
			\hat{\lb}_{k} = \left[W_{k}\right]_+ \sigma(\hat{\lb}_{k-1}) + [W_{k}]_- \sigma(\hat{\ub}_{k-1}) + \bb_{k}
		\end{array} \hspace{-3pt}\right\} \  k \in \left\llbracket2,n\right\rrbracket.
	\end{aligned}
	\label{eq:ibp}
\end{equation}

For $\ell_\infty$ perturbations of radius $\epsilon$, $\mathcal{C}(\xb, \epsilon) = \{ \xb' : \left\lVert \xb' - \xb  \right\rVert_\infty \leq \epsilon \}$: the linear minimization and maximization oracles evaluate to $\hat{\ub}_{1} = W_{1} \xb + \epsilon \left|W_{1}\right| \mbf{1}  + \bb_{1}$ and $\hat{\lb}_{1} = W_{1} \xb - \epsilon \left|W_{1}\right| \mbf{1} + \bb_{1}$, respectively. 
IBP corresponds to solving an approximation of problem~\eqref{eq:verification} where all the network layers are replaced by their bounding boxes. For instance, given some $\hat{l}$ and $\hat{u}$ from the above procedure, the ReLUs are replaced by the over-approximation depicted in figure~\ref{fig:ibp}. 
Assuming $[W_{k}]_+$ and $[W_{k}]_-$ have been pre-computed for $k \in \left\llbracket2,n\right\rrbracket$, the cost of computing $\ubar{\zb}(\thetab, \xb, y)$ via IBP roughly corresponds to four network evaluations. 
This can be further reduced to roughly two network evaluations (pre-computing $|W_{k}|$ for $k \in \left\llbracket2,n\right\rrbracket$) by using $\hat{\ub}_{k} = \nicefrac{(\hat{\ub}_{k} + \hat{\lb}_{k})}{2} + \nicefrac{(\hat{\ub}_{k} - \hat{\lb}_{k})}{2}$ and $\hat{\lb}_{k} = \nicefrac{(\hat{\ub}_{k} + \hat{\lb}_{k})}{2} - \nicefrac{(\hat{\ub}_{k} - \hat{\lb}_{k})}{2}$.

\section{Expressivity Proofs} \label{sec:expressivity-appendix}

We here provide the proofs of the expressivity of CC-IBP~(appendix \ref{sec:cc-ibp-appendix}), MTL-IBP~(appendix~\ref{sec:mtl-ibp-appendix}), Exp-IBP~(appendix~\ref{sec:exp-ibp-appendix}), and SABR~(appendix~\ref{sec:sabr-appendix}).
\subsection{CC-IBP} \label{sec:cc-ibp-appendix}

\ccibp*
\vspace{-10pt}
\begin{proof}
	Let us recall that $\mathcal{L}_{\alpha, \text{CC}} (\thetab, \xb, y) := \mathcal{L}(-\left[( 1 - \alpha)\ \zb(\thetab, \xb_{\text{adv}}, y)  + \alpha\ \ubar{\zb}(\thetab, \xb, y)\right], y)$. 
	Trivially, $\mathcal{L}_{0, \text{CC}} (\thetab, \xb, y) = \mathcal{L} (f(\thetab, \xb_{\text{adv}}), y)$, and $\mathcal{L}_{1, \text{CC}} (\thetab, \xb, y) = \mathcal{L}_{\text{ver}} (f(\thetab, \xb), y)$. 
	By definition, $\ubar{\zb}(\thetab, \xb, y) \leq \zb(\thetab, \xb_{\text{adv}}, y)$, as $\xb_{\text{adv}} \in \mathcal{C}(\xb, \epsilon)$. 
	Hence, $$\ubar{\zb}(\thetab, \xb, y) \leq ( 1 - \alpha)\ \zb(\thetab, \xb_{\text{adv}}, y)  + \alpha\ \ubar{\zb}(\thetab, \xb, y) \leq \zb(\thetab, \xb_{\text{adv}}, y) \text{ for }\alpha \in [0, 1].$$ 
	Then, owing to assumption~\ref{assumption} and to $\zb(\thetab, \xb_{\text{adv}}, y)[y]=\ubar{\zb}(\thetab, \xb, y)[y]=0$, we can apply the loss function $\mathcal{L}(\cdot, y)$ while leaving the inequalities unchanged, yielding: $$\mathcal{L} (f(\thetab, \xb_{\text{adv}}), y) \leq \mathcal{L}_\alpha (\thetab, \xb, y) \leq \mathcal{L}_{\text{ver}} (f(\thetab, \xb), y).$$ 
    Finally, if $\mathcal{L}(\cdot, y)$ is continuous with respect to its first argument, then $\mathcal{L}(-\left[( 1 - \alpha)\ \zb(\thetab, \xb_{\text{adv}}, y)  + \alpha\ \ubar{\zb}(\thetab, \xb, y)\right], y)$ is continuous with respect to $\alpha$ (composition of continuous functions), which concludes the proof.
\end{proof}
\vspace{-5pt}

\subsection{MTL-IBP} \label{sec:mtl-ibp-appendix}

\mtlibp*
 \vspace{-10pt}
\begin{proof}
	We start by recalling that $\mathcal{L}_{\alpha, \text{MTL}} (\thetab, \xb, y) := ( 1 - \alpha) \mathcal{L} (f(\thetab, \xb_{\text{adv}}), y)   + \alpha\ \mathcal{L}_{\text{ver}} (f(\thetab, \xb), y)$.
	The proof for the first part of the statement closely follows the proof for proposition~\ref{prop:ccibp}, while not requiring assumption~\ref{assumption} nor continuity of the loss function. \\
	The inequality $\mathcal{L}_{\alpha, \text{CC}} (\thetab, \xb, y)\leq \mathcal{L}_{\alpha, \text{MTL}} (\thetab, \xb, y) \ \forall\ \alpha \in [0, 1]$ trivially follows from the definition of a convex function~\citep[section 3.1.1]{Boyd2004} and equation~\eqref{eq:verified-loss}.
\end{proof}\vspace{-5pt}

\subsection{Exp-IBP} \label{sec:exp-ibp-appendix}

\expibp*
\vspace{-10pt}
\begin{proof}
	Recall that $\mathcal{L}_{\alpha, \text{Exp}} (\thetab, \xb, y) := \mathcal{L} (f(\thetab, \xb_{\text{adv}}), y)^{(1 - \alpha)}  \ \mathcal{L}_{\text{ver}} (f(\thetab, \xb), y)^\alpha$.
	It is easy to see that $\mathcal{L}_{0, \text{Exp}} (\thetab, \xb, y) = \mathcal{L} (f(\thetab, \xb_{\text{adv}}), y)$, and $\mathcal{L}_{1, \text{Exp}} (\thetab, \xb, y) = \mathcal{L}_{\text{ver}} (f(\thetab, \xb), y)$. 
	As outlined in $\S$\ref{sec:expibp}, $\log{\mathcal{L}_{\alpha, \text{Exp}} (\thetab, \xb, y)} = (1 - \alpha) \log{\mathcal{L} (f(\thetab, \xb_{\text{adv}}), y)} + \alpha \log{\mathcal{L}_{\text{ver}} (f(\thetab, \xb), y)}$, which is continuous and monotonically increasing with respect to $\alpha$, and satisfies the following inequality for $\alpha \in [0, 1]$: $$\log{\mathcal{L} (f(\thetab, \xb_{\text{adv}}), y)} \leq (1 - \alpha) \log{\mathcal{L} (f(\thetab, \xb_{\text{adv}}), y)} + \alpha \log{\mathcal{L}_{\text{ver}} (f(\thetab, \xb), y)} \leq \log{\mathcal{L}_{\text{ver}} (f(\thetab, \xb), y)}.$$
	By applying the exponential function, which, being monotonically increasing and continuous, preserves the monotonicity and continuity of $\log{\mathcal{L}_{\alpha, \text{Exp}} (\thetab, \xb, y)}$, we obtain $$\mathcal{L} (f(\thetab, \xb_{\text{adv}}), y) \leq \mathcal{L}_{\alpha, \text{Exp}} (\thetab, \xb, y) \leq \mathcal{L}_{\text{ver}} (f(\thetab, \xb), y)\ \forall \ \alpha \in [0, 1],$$
	proving expressivity. \\
	In order to prove that  $\mathcal{L}_{\alpha, \text{Exp}} (\thetab, \xb, y)\leq \mathcal{L}_{\alpha, \text{MTL}} (\thetab, \xb, y) \ \forall\ \alpha \in [0, 1]$, we start from:
	\begin{align*}
		\log\Big(( 1 - \alpha) \mathcal{L} (f(\thetab, \xb_{\text{adv}}), y)   + & \alpha\ \mathcal{L}_{\text{ver}} (f(\thetab, \xb), y)\Big) \geq \\  &(1 - \alpha) \log{\mathcal{L} (f(\thetab, \xb_{\text{adv}}), y)} + \alpha \log{\mathcal{L}_{\text{ver}} (f(\thetab, \xb), y)},
	\end{align*}
	which holds for any $\alpha \in [0, 1]$ owing to the concavity of $\log$. 
	By exponentiating both sides and recalling their definitions, we get $\mathcal{L}_{\alpha, \text{Exp}} (\thetab, \xb, y)\leq \mathcal{L}_{\alpha, \text{MTL}} (\thetab, \xb, y) \ \forall\ \alpha \in [0, 1]$, concluding~the~proof.
\end{proof}\vspace{-5pt}

\subsection{SABR} \label{sec:sabr-appendix}

\begin{lemma}
	For networks with monotonically increasing and continuous activation functions, if $\mathcal{C}(\xb, \epsilon) := \{ \xb' : \left\lVert \xb' - \xb  \right\rVert_\infty \leq \epsilon \}$ and when computed via IBP, $\ubar{\zb}_\lambda(\thetab, \xb, y)$ as defined in equation~\eqref{eq:sabr} is continuous with respect to $\lambda$.
\end{lemma}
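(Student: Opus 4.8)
The plan is to follow the IBP recursion of equation~\eqref{eq:ibp}, instantiated on the region $\mathcal{C}(\xb_\lambda, \lambda\epsilon)$ with $\xb_\lambda := \text{Proj}(\xb_{\text{adv}}, \mathcal{C}(\xb, \epsilon - \lambda\epsilon))$, and to show by induction on the layer index that every pair of intermediate bounds depends continuously on $\lambda \in [0, 1]$; the last such pair yields $\ubar{\zb}_\lambda(\thetab, \xb, y)$.

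First I would handle the base case, which is the only step that is not purely algebraic. Because $\mathcal{C}(\xb, \epsilon - \lambda\epsilon)$ is an $\ell_\infty$ ball of radius $(1-\lambda)\epsilon$ centred at $\xb$, its Euclidean projection operator acts coordinate-wise as a clamp, so $\xb_\lambda[i] = \min\!\big(\max(\xb_{\text{adv}}[i],\ \xb[i] - (1-\lambda)\epsilon),\ \xb[i] + (1-\lambda)\epsilon\big)$. This is a composition of the continuous functions $\min$, $\max$, and the affine map $\lambda \mapsto (1-\lambda)\epsilon$, with $\xb_{\text{adv}}$ independent of $\lambda$; hence $\xb_\lambda$ is continuous on $[0,1]$ (in particular $\xb_\lambda \to \xb$ as $\lambda \to 1$, where the feasible set degenerates to the singleton $\{\xb\}$, and $\xb_0 = \xb_{\text{adv}}$ since $\xb_{\text{adv}} \in \mathcal{C}(\xb, \epsilon)$). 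For $\ell_\infty$ perturbations the linear minimization/maximization oracles in equation~\eqref{eq:ibp} are available in closed form, so the input-layer bounds of the SABR region are $\hat{\ub}_1 = W_1 \xb_\lambda + \lambda\epsilon\,|W_1|\mbf{1} + \bb_1$ and $\hat{\lb}_1 = W_1 \xb_\lambda - \lambda\epsilon\,|W_1|\mbf{1} + \bb_1$, each the sum of a term linear in $\xb_\lambda$ and a term linear in $\lambda$ with fixed coefficients, hence continuous in $\lambda$.

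For the inductive step, suppose $\hat{\ub}_{k-1}$ and $\hat{\lb}_{k-1}$ are continuous in $\lambda$. Since $\sigma$ is continuous, $\sigma(\hat{\ub}_{k-1})$ and $\sigma(\hat{\lb}_{k-1})$ are continuous in $\lambda$ as compositions of continuous maps, and the recursion $\hat{\ub}_k = [W_k]_+\,\sigma(\hat{\ub}_{k-1}) + [W_k]_-\,\sigma(\hat{\lb}_{k-1}) + \bb_k$ (and the analogous one for $\hat{\lb}_k$) only applies the fixed linear operators $[W_k]_+$, $[W_k]_-$ and adds the fixed vector $\bb_k$, all of which preserve continuity. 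By induction $\hat{\ub}_k, \hat{\lb}_k$ are continuous for every $k \in \llbracket 2, n \rrbracket$, so $\ubar{\zb}_\lambda(\thetab, \xb, y) = \hat{\lb}_n$ is continuous in $\lambda$. The only delicate point is recognizing that the projection onto the shrinking $\ell_\infty$ ball is exactly the coordinate-wise clamp, hence continuous including at the degenerate limit $\lambda = 1$; everything else is a routine propagation of continuity through the finitely many algebraic operations and activation evaluations of equation~\eqref{eq:ibp}. Monotonicity of $\sigma$ is not needed for continuity itself — it is what validates equation~\eqref{eq:ibp} as a sound bounding procedure — but it is retained in the hypotheses for consistency with the IBP construction of $\S$\ref{sec:background-verification}.
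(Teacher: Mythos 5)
Your proof is correct and follows essentially the same route as the paper's: both express the SABR region as a box whose endpoints are the coordinate-wise clamp of $\xb_{\text{adv}}$ shifted by $\pm\lambda\epsilon$ (hence continuous in $\lambda$), and then propagate continuity through the IBP recursion of equation~\eqref{eq:ibp}. Your version merely makes the layer-wise induction explicit and uses the equivalent center--radius form $W_1\xb_\lambda \pm \lambda\epsilon\,|W_1|\mathbf{1} + \bb_1$ for the first layer, where the paper uses the $[W_1]_+,[W_1]_-$ formulation on the box endpoints.
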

\begin{proof}
	For $\mathcal{C}(\xb, \epsilon) := \{ \xb' : \left\lVert \xb' - \xb  \right\rVert_\infty \leq \epsilon \}$, we can write $\mathcal{C}(\xb, \epsilon) = [\xb - \epsilon, \xb + \epsilon] = [\lb_0, \ub_0]$ and $\mathcal{C}(\xb_\lambda, \lambda \epsilon) = [\lb_{0, \lambda}, \ub_{0, \lambda}]$, with $\lb_{0, \lambda} := \min(\max(\xb_{\text{adv}}, \lb_0 + \lambda \epsilon), \ub_0 - \lambda \epsilon) - \lambda \epsilon$ and \linebreak $\ub_{0, \lambda} := \min(\max(\xb_{\text{adv}}, \lb_0 + \lambda \epsilon), \ub_0 - \lambda \epsilon) + \lambda \epsilon$, which are continuous functions of $\lambda$ as $\max$, $\min$, linear combinations and compositions of continuous functions preserve continuity.
	The SABR bounds $\ubar{\zb}_\lambda(\thetab, \xb, y)$ can then be computed as $\hat{\lb}_n$ from equation~\eqref{eq:ibp}, using $ [\lb_{0, \lambda}, \ub_{0, \lambda}]$ as input domain. In this case, $\hat{\ub}_{1} = \left[W_{k}\right]_+\ub_{0, \lambda} + [W_{k}]_- \lb_{0, \lambda} + \bb_{k}$ and $\hat{\lb}_{1} = \left[W_{k}\right]_+\lb_{0, \lambda} + [W_{k}]_- \ub_{0, \lambda} + \bb_{k}$. The continuity of $\ubar{\zb}_\lambda(\thetab, \xb, y)$ with respect to $\lambda$ then follows from its computation as the composition and linear combination of continuous functions. 
\end{proof}

\begin{proposition}
	If $\mathcal{L}(\cdot, y)$ is continuous with respect to its first argument and $\ubar{\zb}_\lambda(\thetab, \xb, y)$ as defined in equation~\eqref{eq:sabr} is continuous with respect to $\lambda \in [0, 1]$, the SABR~\citep{Mueller2023} loss function $\mathcal{L} (-\ubar{\zb}_\lambda(\thetab, \xb, y), y)$ is expressive according to definition~\ref{def:expressive}. 
	\label{prop:sabr}
\end{proposition}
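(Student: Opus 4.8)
The plan is to check the four requirements of Definition~\ref{def:expressive} for $\mathcal{L}_\lambda(\thetab,\xb,y) := \mathcal{L}(-\ubar{\zb}_\lambda(\thetab,\xb,y),y)$ under the identification $\alpha=\lambda$, reusing the structure of the proof of Proposition~\ref{prop:ccibp} and Assumption~\ref{assumption} throughout. I would first dispatch the two endpoints. At $\lambda=0$ the projection target $\mathcal{C}(\xb,\epsilon)$ contains $\xb_{\text{adv}}$, so $\xb_0=\xb_{\text{adv}}$ and $\mathcal{C}(\xb_0,0)=\{\xb_{\text{adv}}\}$; running the recursion~\eqref{eq:ibp} on a degenerate input interval keeps $\hat\lb_k=\hat\ub_k$ at every layer, hence $\ubar{\zb}_0(\thetab,\xb,y)=\zb(\thetab,\xb_{\text{adv}},y)$ and, by translation invariance, $\mathcal{L}_0(\thetab,\xb,y)=\mathcal{L}(f(\thetab,\xb_{\text{adv}}),y)$. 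At $\lambda=1$ the projection target is $\mathcal{C}(\xb,0)=\{\xb\}$, so $\xb_1=\xb$, $\mathcal{C}(\xb_1,\epsilon)=\mathcal{C}(\xb,\epsilon)$, and therefore $\ubar{\zb}_1(\thetab,\xb,y)=\ubar{\zb}(\thetab,\xb,y)$ and $\mathcal{L}_1(\thetab,\xb,y)=\mathcal{L}_{\text{ver}}(f(\thetab,\xb),y)$.

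For the sandwiching inequality I would use the nesting $\{\xb_{\text{adv}}\}\subseteq\mathcal{C}(\xb_\lambda,\lambda\epsilon)\subseteq\mathcal{C}(\xb,\epsilon)$: the right inclusion is stated in~\eqref{eq:sabr}, and the left one holds because the $\ell_\infty$ projection of $\xb_{\text{adv}}\in\mathcal{C}(\xb,\epsilon)$ onto $\mathcal{C}(\xb,(1-\lambda)\epsilon)$ displaces each coordinate by at most $\epsilon-(1-\lambda)\epsilon=\lambda\epsilon$, so $\xb_{\text{adv}}\in\mathcal{C}(\xb_\lambda,\lambda\epsilon)$. The crucial structural fact about IBP is that enlarging the input box can only widen the propagated intervals: a one-line induction on~\eqref{eq:ibp} (base case linear; inductive step using monotonicity of $\sigma$ together with the sign split $[W_k]_+,[W_k]_-$) shows that $\mathcal{X}_1\subseteq\mathcal{X}_2$ implies that IBP run on $\mathcal{X}_1$ returns a componentwise larger lower bound than on $\mathcal{X}_2$. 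Combined with the nesting, and with IBP being exact on the singleton $\{\xb_{\text{adv}}\}$, this gives $\ubar{\zb}(\thetab,\xb,y)\leq\ubar{\zb}_\lambda(\thetab,\xb,y)\leq\zb(\thetab,\xb_{\text{adv}},y)$ componentwise. Since all three vectors have a zero $y$-th entry, Assumption~\ref{assumption} allows negating and applying $\mathcal{L}(\cdot,y)$ without reversing the inequalities, yielding $\mathcal{L}(f(\thetab,\xb_{\text{adv}}),y)\leq\mathcal{L}_\lambda(\thetab,\xb,y)\leq\mathcal{L}_{\text{ver}}(f(\thetab,\xb),y)$ for all $\lambda\in[0,1]$.

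Continuity then follows by composition: $\ubar{\zb}_\lambda(\thetab,\xb,y)$ is continuous in $\lambda$ by hypothesis (the content of the preceding lemma in the $\ell_\infty$/IBP case), and both negation and $\mathcal{L}(\cdot,y)$ are continuous. Monotonicity in $\lambda$ reduces, via the same IBP-inclusion fact and Assumption~\ref{assumption}, to showing the SABR regions are nested in $\lambda$, i.e. $\mathcal{C}(\xb_{\lambda_1},\lambda_1\epsilon)\subseteq\mathcal{C}(\xb_{\lambda_2},\lambda_2\epsilon)$ whenever $\lambda_1\leq\lambda_2$; using the explicit coordinatewise expressions $\lb_{0,\lambda},\ub_{0,\lambda}$ from the lemma this is a short case analysis on whether $\xb_{\text{adv}}[i]$ falls inside or outside $[\xb[i]-(1-\lambda)\epsilon,\xb[i]+(1-\lambda)\epsilon]$, with each case exhibiting the $\lambda_1$-interval nested inside the $\lambda_2$-interval. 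I expect this last step — checking that the clamped centre $\xb_\lambda$ and the growing radius $\lambda\epsilon$ trade off so as to keep the regions nested — to be the main obstacle, since everything else is either a direct endpoint computation, an appeal to~\eqref{eq:sabr}, or the routine IBP-inclusion induction. I would also note that, although this monotonicity argument and the sandwiching bound lean on structural properties of IBP, this is consistent with the scope of the preceding lemma on which the continuity hypothesis rests.
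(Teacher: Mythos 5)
Your proposal is correct and follows essentially the same route as the paper's proof: establish the endpoints $\ubar{\zb}_0=\zb(\thetab,\xb_{\text{adv}},y)$ and $\ubar{\zb}_1=\ubar{\zb}(\thetab,\xb,y)$, obtain the sandwich and the monotonicity in $\lambda$ from the nesting of the SABR regions together with IBP's monotonicity under domain inclusion and Assumption~\ref{assumption}, and get continuity by composition. The only difference is one of detail: the paper packages the argument via $\delta_\lambda(\thetab,\xb_{\text{adv}},y):=\ubar{\zb}_\lambda-\zb(\thetab,\xb_{\text{adv}},y)\leq 0$ and simply asserts the region nesting $\mathcal{C}(\xb_{\lambda'},\lambda'\epsilon)\subseteq\mathcal{C}(\xb_\lambda,\lambda\epsilon)$ and the resulting monotone decrease of $\ubar{\zb}_\lambda$, whereas you spell out the projection-displacement argument, the IBP domain-inclusion induction, and the coordinatewise nesting check, all of which are sound.
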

\begin{proof}
	Recalling equation~\eqref{eq:sabr}, $\ubar{\zb}_\lambda(\thetab, \xb, y) \leq \zb(\thetab, \xb', y) \enskip \forall \ \xb' \in \mathcal{C}(\xb_\lambda, \lambda \epsilon) \subseteq \mathcal{C}(\xb, \epsilon)$, with \linebreak $\xb_\lambda := \text{Proj}(\xb_{\text{adv}}, \mathcal{C}(\xb, \epsilon - \lambda \epsilon))$. 
	We can hence define $\delta_{\lambda}(\thetab, \xb', y) := \ubar{\zb}_\lambda(\thetab, \xb, y) - \zb(\thetab, \xb', y) \leq 0$  for any $\xb' \in \mathcal{C}(\xb_\lambda, \lambda \epsilon)$.
	As $\mathcal{C}(\xb_{\lambda'}, \lambda' \epsilon) \subseteq \mathcal{C}(\xb_\lambda, \lambda \epsilon)$ for any $\lambda'\leq\lambda$,  $\ubar{\zb}_\lambda(\thetab, \xb, y)$ will monotonically decrease with $\lambda$, and so will $\delta_{\lambda}(\thetab, \xb', y)$.
	We can hence re-write the SABR loss as:
	\begin{align*}
		\mathcal{L} (-\ubar{\zb}_\lambda(\thetab, \xb, y), y) &= \mathcal{L} (-\left[\zb(\thetab, \xb_{\text{adv}}, y) + \left(\ubar{\zb}_\lambda(\thetab, \xb, y) - \zb(\thetab, \xb_{\text{adv}}, y) \right)\right], y) \\
		&= \mathcal{L} (-\left[\zb(\thetab, \xb_{\text{adv}}, y) + \delta_{\lambda}(\thetab, \xb_{\text{adv}}, y)\right], y).
	\end{align*}
	Then, $\delta_{0}(\thetab, \xb_{\text{adv}}, y) = 0$, as $\xb_0=\xb_{\text{adv}}$ and hence $\ubar{\zb}_0(\thetab, \xb, y) = \zb(\thetab, \xb_{\text{adv}}, y)$. Furthermore, $\ubar{\zb}_1(\thetab, \xb, y) = \ubar{\zb}(\thetab, \xb, y)$, proving the last two points of definition~\ref{def:expressive}.
	Continuity follows from the fact that compositions of continuous functions ($\mathcal{L}(\cdot, y)$ and $\ubar{\zb}_\lambda(\thetab, \xb, y)$) are continuous.
	The remaining requirements follow from using assumption~\ref{assumption} and the non-positivity and monotonicity of $\delta_{\lambda}(\thetab, \xb_{\text{adv}}, y)$.
\end{proof}

\section{Loss Details for Cross-Entropy} \label{sec:loss-appendix}

Assuming the underlying loss function $\mathcal{L}$ to be cross-entropy, we now present an analytical form of the expressive losses presented in $\S$\ref{sec:ccibp}, $\S$\ref{sec:mtlibp}, and $\S$\ref{sec:expibp}, along with a comparison with the loss from SABR~\citep{Mueller2023}.

When applied on the logit differences $\zb(\thetab, \xb, y)$ (see $\S$\ref{sec:verified-training-methods}), the cross-entropy can be written as $\mathcal{L}(-\zb(\thetab, \xb, y), y) = \log(1 + \sum_{i \neq y} e^{-\zb(\thetab, \xb, y)[i]})$ by exploiting the properties of logarithms and $\zb(\thetab, \xb, y)[y] = 0$ (see $\S$\ref{sec:background}).
In fact: 
\begin{equation*}
\begin{aligned}
\mathcal{L}(-\zb(\thetab, \xb, y), y) &= -\log\left(\frac{e^{-\zb(\thetab, \xb, y)[y]}}{\sum_{i} e^{-\zb(\thetab, \xb, y)[i]}}\right)  \\
&= -\log\left(\frac{1}{1 + \sum_{i\neq y} e^{-\zb(\thetab, \xb, y)[i]}}\right) = \log\left(1 + \sum_{i \neq y} e^{-\zb(\thetab, \xb, y)[i]}\right).
\end{aligned}
\end{equation*}

\subsection{CC-IBP}

\begin{proposition}
	When $\mathcal{L}(-\zb(\thetab, \xb, y), y) = \log(1 + \sum_{i \neq y} e^{-\zb(\thetab, \xb, y)[i]})$, the loss $\mathcal{L}_{\alpha, \text{CC}} (\thetab, \xb, y)$ from equation~\eqref{eq:ccibp} takes the following form:
	$$\mathcal{L}_{\alpha, \text{CC}} (\thetab, \xb, y) = \log\left(1 + \sum_{i \neq y} {\left( e^{-\zb(\thetab, \xb_{\text{adv}}, y)[i]}\right)}^{(1 - \alpha)}  {\left( e^{-\ubar{\zb}(\thetab, \xb, y)[i]}\right)}^{\alpha}  \right).$$
	\label{prop:ccibp-crentr}
\end{proposition}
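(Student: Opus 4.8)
The plan is to unfold the definition of $\mathcal{L}_{\alpha, \text{CC}}$ from equation~\eqref{eq:ccibp} and then apply the logit-difference expression for cross-entropy derived at the beginning of appendix~\ref{sec:loss-appendix}, followed by a one-line manipulation with the laws of exponents. First, I would note that the identity $\mathcal{L}(-\vb, y) = \log(1 + \sum_{i \neq y} e^{-\vb[i]})$ holds for \emph{any} vector $\vb \in \mathbb{R}^c$ with $\vb[y] = 0$: the derivation shown for $\zb(\thetab, \xb, y)$ only used $\zb(\thetab, \xb, y)[y] = 0$ and the definition of cross-entropy, so it applies verbatim to a generic such $\vb$.

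Next, I would set $\vb := (1-\alpha)\, \zb(\thetab, \xb_{\text{adv}}, y) + \alpha\, \ubar{\zb}(\thetab, \xb, y)$, so that $\mathcal{L}_{\alpha, \text{CC}}(\thetab, \xb, y) = \mathcal{L}(-\vb, y)$ by definition. Since $\zb(\thetab, \xb_{\text{adv}}, y)[y] = 0$ and $\ubar{\zb}(\thetab, \xb, y)[y] = 0$ (as used in the proof of proposition~\ref{prop:ccibp}), we get $\vb[y] = 0$, so the identity above yields $\mathcal{L}_{\alpha, \text{CC}}(\thetab, \xb, y) = \log(1 + \sum_{i \neq y} e^{-\vb[i]})$.

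Finally, for each index $i \neq y$ I would rewrite $e^{-\vb[i]} = e^{-(1-\alpha)\zb(\thetab, \xb_{\text{adv}}, y)[i]}\; e^{-\alpha\, \ubar{\zb}(\thetab, \xb, y)[i]} = \bigl(e^{-\zb(\thetab, \xb_{\text{adv}}, y)[i]}\bigr)^{1-\alpha}\,\bigl(e^{-\ubar{\zb}(\thetab, \xb, y)[i]}\bigr)^{\alpha}$ using $x^{a+b} = x^a x^b$, and substitute termwise into the sum to reach the claimed form. There is essentially no obstacle here — the statement is a direct algebraic consequence of the definition of the loss and of the exponent law; the only point worth stating explicitly is that the $i = y$ summand is absent precisely because $\vb[y] = 0$, which is what licenses writing the sum over $i \neq y$.
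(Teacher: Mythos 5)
Your proposal is correct and follows essentially the same route as the paper's proof: unfold the definition of $\mathcal{L}_{\alpha, \text{CC}}$, apply the cross-entropy-on-logit-differences identity to the convex combination, and split the exponential via the laws of exponents. Your explicit remark that the identity applies because the combined vector has zero $y$-th entry (using $\zb(\thetab, \xb_{\text{adv}}, y)[y]=\ubar{\zb}(\thetab, \xb, y)[y]=0$) is a point the paper leaves implicit here, but it matches what is stated in the proof of proposition~\ref{prop:ccibp}, so the two arguments coincide.
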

\begin{proof}
	Using the properties of exponentials:
	\begin{equation*}
	\begin{aligned}
	\mathcal{L}_{\alpha, \text{CC}} (\thetab, \xb, y) &= \mathcal{L}(-\left[( 1 - \alpha)\ \zb(\thetab, \xb_{\text{adv}}, y)  + \alpha\ \ubar{\zb}(\thetab, \xb, y)\right], y) \\
	&=\log\left( 1 + \sum_{i \neq y} e^{-\left[( 1 - \alpha)\ \zb(\thetab, \xb_{\text{adv}}, y)  + \alpha\ \ubar{\zb}(\thetab, \xb, y)\right][i]}\right) \\
	&=\log\left( 1 + \sum_{i \neq y} \left(e^{-( 1 - \alpha)\ \zb(\thetab, \xb_{\text{adv}}, y)[i]}\right) \left( e^{- \alpha\ \ubar{\zb}(\thetab, \xb, y)[i]}\right)\right) \\
	&= \log\left(1 + \sum_{i \neq y} {\left( e^{-\zb(\thetab, \xb_{\text{adv}}, y)[i]}\right)}^{(1 - \alpha)}  {\left( e^{-\ubar{\zb}(\thetab, \xb, y)[i]}\right)}^{\alpha}  \right).
	\end{aligned}
	\end{equation*}
\end{proof}

\subsection{MTL-IBP}

\begin{proposition}
	When $\mathcal{L}(-\zb(\thetab, \xb, y), y) = \log(1 + \sum_{i \neq y} e^{-\zb(\thetab, \xb, y)[i]})$, the loss $\mathcal{L}_{\alpha, \text{MTL}} (\thetab, \xb, y)$ from equation~\eqref{eq:mtlibp} takes the following form:
	$$\mathcal{L}_{\alpha, \text{MTL}} (\thetab, \xb, y) = \log\left( {\left(1 + \sum_{i \neq y}  e^{-\zb(\thetab, \xb_{\text{adv}}, y)[i]}\right)}^{(1 - \alpha)}  {\left(1 + \sum_{i \neq y}   e^{-\ubar{\zb}(\thetab, \xb, y)[i]}\right)}^{\alpha}  \right).$$
	\label{prop:mtlibp-crentr}
\end{proposition}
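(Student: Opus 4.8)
The plan is to expand the definition of $\mathcal{L}_{\alpha, \text{MTL}}$ and then collapse the convex combination of logarithms into a single logarithm of a product of powers, exactly mirroring the argument used for Proposition~\ref{prop:ccibp-crentr}.

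First I would recall from equation~\eqref{eq:mtlibp} that $\mathcal{L}_{\alpha, \text{MTL}} (\thetab, \xb, y) = (1 - \alpha)\, \mathcal{L} (f(\thetab, \xb_{\text{adv}}), y) + \alpha\, \mathcal{L}_{\text{ver}} (f(\thetab, \xb), y)$. By Assumption~\ref{assumption} (translation invariance), $\mathcal{L} (f(\thetab, \xb_{\text{adv}}), y) = \mathcal{L} (-\zb(\thetab, \xb_{\text{adv}}, y), y)$, while by definition (equation~\eqref{eq:verified-loss}) $\mathcal{L}_{\text{ver}} (f(\thetab, \xb), y) = \mathcal{L} (-\ubar{\zb}(\thetab, \xb, y), y)$. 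Substituting the cross-entropy identity $\mathcal{L}(-\zb(\thetab, \xb, y), y) = \log(1 + \sum_{i \neq y} e^{-\zb(\thetab, \xb, y)[i]})$, derived at the start of appendix~\ref{sec:loss-appendix}, into both terms then writes $\mathcal{L}_{\alpha, \text{MTL}} (\thetab, \xb, y)$ as $(1-\alpha) \log\!\big(1 + \sum_{i\neq y} e^{-\zb(\thetab, \xb_{\text{adv}}, y)[i]}\big) + \alpha \log\!\big(1 + \sum_{i\neq y} e^{-\ubar{\zb}(\thetab, \xb, y)[i]}\big)$.

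Second I would apply the elementary identity $(1-\alpha)\log a + \alpha \log b = \log(a^{1-\alpha} b^\alpha)$, valid since both arguments here are $\geq 1 > 0$, to merge the two terms into the single claimed expression, which completes the proof.

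I do not anticipate any real obstacle: the statement is a direct algebraic rearrangement. The only points requiring care are the bookkeeping of which loss gets expressed in terms of $\zb(\thetab, \xb_{\text{adv}}, y)$ versus $\ubar{\zb}(\thetab, \xb, y)$, and the invocation of translation invariance to put the adversarial term into logit-difference form so that the cross-entropy identity applies uniformly to both summands.
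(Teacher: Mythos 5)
Your proposal is correct and follows essentially the same route as the paper's proof: substitute the cross-entropy identity into both terms of the definition in equation~\eqref{eq:mtlibp} and collapse the convex combination of logarithms into a single logarithm of a product of powers. Your explicit invocation of Assumption~\ref{assumption} to justify applying the identity to the adversarial term is a minor additional bookkeeping step that the paper leaves implicit, but the argument is the same.
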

\begin{proof}
	Using the properties of logarithms and exponentials:
	\begingroup
	\allowdisplaybreaks
	\begin{align*}
	\mathcal{L}_{\alpha, \text{MTL}} (\thetab, \xb, y) &=  ( 1 - \alpha) \mathcal{L} (f(\thetab, \xb_{\text{adv}}), y)   + \alpha\ \mathcal{L}_{\text{ver}} (f(\thetab, \xb), y) \\
	&= ( 1 - \alpha)  \log\left(1 + \sum_{i \neq y} e^{-\zb(\thetab, \xb_{\text{adv}}, y)[i]}\right) + \alpha \log\left(1 + \sum_{i \neq y} e^{-\ubar{\zb}(\thetab, \xb, y)[i]}\right) \\
	&= \log\left({\left(1 + \sum_{i \neq y} e^{-\zb(\thetab, \xb_{\text{adv}}, y)[i]} \right)}^{( 1 - \alpha)}\right) +  \log\left({\left(1 + \sum_{i \neq y} e^{-\ubar{\zb}(\thetab, \xb, y)[i]} \right)}^{\alpha}\right) \\
	&= \log\left( {\left(1 + \sum_{i \neq y}  e^{-\zb(\thetab, \xb_{\text{adv}}, y)[i]}\right)}^{(1 - \alpha)}  {\left(1 + \sum_{i \neq y}   e^{-\ubar{\zb}(\thetab, \xb, y)[i]}\right)}^{\alpha}  \right).
	\end{align*}
	\endgroup
\end{proof}

By comparing propositions~\ref{prop:ccibp-crentr}~and~\ref{prop:mtlibp-crentr}, we can see that verifiability tends to play a relatively larger role on $\mathcal{L}_{\alpha, \text{MTL}} (\thetab, \xb, y)$.
In particular, given that $\zb(\thetab, \xb_{\text{adv}}, y) > \ubar{\zb}(\thetab, \xb, y)$, if the network displays a large margin of empirical robustness to a given incorrect class \linebreak ($\zb(\thetab, \xb_{\text{adv}}, y)[i] >> 0$, with $i \neq y$), the weight of the verifiability of that class on $\mathcal{L}_{\alpha, \text{CC}} (\thetab, \xb, y)$ will be greatly reduced.
On the other hand, if the network has a large margin of empirical robustness to all incorrect classes ($\zb(\thetab, \xb_{\text{adv}}, y) >> \mbf{0}$), verifiability will still have a relatively large weight on $\mathcal{L}_{\alpha, \text{MTL}} (\thetab, \xb, y)$.
These observations comply with proposition~\ref{prop:mtlibp}, which shows that \linebreak $\mathcal{L}_{\alpha, \text{CC}} (\thetab, \xb, y) \leq \mathcal{L}_{\alpha, \text{MTL}} (\thetab, \xb, y)$, indicating that $\mathcal{L}_{\alpha, \text{MTL}} (\thetab, \xb, y)$ is generally more skewed towards verifiability.

\subsection{Exp-IBP}
When $\mathcal{L}(-\zb(\thetab, \xb, y), y) = \log(1 + \sum_{i \neq y} e^{-\zb(\thetab, \xb, y)[i]})$, the loss $\mathcal{L}_{\alpha, \text{Exp}} (\thetab, \xb, y)$ from equation~\eqref{eq:expbp} takes the following form:
$$\mathcal{L}_{\alpha, \text{Exp}} (\thetab, \xb, y) = \log^{(1 - \alpha)}\left(1 + \sum_{i \neq y} {\left( e^{-\zb(\thetab, \xb_{\text{adv}}, y)[i]}\right)} \right) \log^{\alpha} \left(1 + \sum_{i \neq y} {\left( e^{-\ubar{\zb}(\thetab, \xb, y)[i]}\right)} \right).$$

\subsection{Comparison of CC-IBP with SABR}

\begin{proposition}
	When $\mathcal{L}(-\zb(\thetab, \xb, y), y) = \log(1 + \sum_{i \neq y} e^{-\zb(\thetab, \xb, y)[i]})$, the loss $\mathcal{L} (-\ubar{\zb}_\lambda(\thetab, \xb, y), y)$  from equation~\eqref{eq:sabr} takes the following form:
	$$\mathcal{L} (-\ubar{\zb}_\lambda(\thetab, \xb, y), y) = \log\left(1 + \sum_{i \neq y} {\left( e^{-\zb(\thetab, \xb_{\text{adv}}, y)[i]}\right)}  {\left( e^{-\delta_{\lambda}(\thetab, \xb_{\text{adv}}, y)[i]}\right)} \right),$$
	where $\delta_{\lambda}(\thetab, \xb_{\text{adv}}, y) \leq 0$ is monotonically decreasing with $\lambda$.
	\label{prop:sabr-crentr}
\end{proposition}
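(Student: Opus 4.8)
The plan is to combine the cross-entropy identity derived at the start of Appendix~\ref{sec:loss-appendix} with the rewriting of the SABR loss already obtained inside the proof of Proposition~\ref{prop:sabr}, and then to split an exponential of a sum exactly as was done in the proof of Proposition~\ref{prop:ccibp-crentr}.

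First I would recall that, applied on the logit differences, $\mathcal{L}(-\zb(\thetab, \xb, y), y) = \log\bigl(1 + \sum_{i \neq y} e^{-\zb(\thetab, \xb, y)[i]}\bigr)$, and that the proof of Proposition~\ref{prop:sabr} establishes the decomposition $\mathcal{L}(-\ubar{\zb}_\lambda(\thetab, \xb, y), y) = \mathcal{L}(-[\zb(\thetab, \xb_{\text{adv}}, y) + \delta_{\lambda}(\thetab, \xb_{\text{adv}}, y)], y)$, where $\delta_{\lambda}(\thetab, \xb', y) := \ubar{\zb}_\lambda(\thetab, \xb, y) - \zb(\thetab, \xb', y)$ and $\xb_{\text{adv}} \in \mathcal{C}(\xb_\lambda, \lambda \epsilon)$.

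Then I would substitute this decomposed argument into the log-sum-exp form and apply $e^{-(a[i] + b[i])} = e^{-a[i]}\, e^{-b[i]}$ termwise, obtaining $\mathcal{L}(-\ubar{\zb}_\lambda(\thetab, \xb, y), y) = \log\bigl(1 + \sum_{i \neq y} e^{-\zb(\thetab, \xb_{\text{adv}}, y)[i]}\, e^{-\delta_{\lambda}(\thetab, \xb_{\text{adv}}, y)[i]}\bigr)$, which is exactly the claimed expression. Finally, the sign and monotonicity properties of $\delta_{\lambda}$ are precisely what the proof of Proposition~\ref{prop:sabr} already records: $\delta_{\lambda}(\thetab, \xb_{\text{adv}}, y) \leq 0$ because $\ubar{\zb}_\lambda(\thetab, \xb, y)$ lower-bounds $\zb(\thetab, \cdot, y)$ over $\mathcal{C}(\xb_\lambda, \lambda \epsilon) \ni \xb_{\text{adv}}$, and $\delta_{\lambda}$ is monotonically decreasing in $\lambda$ because $\mathcal{C}(\xb_{\lambda'}, \lambda' \epsilon) \subseteq \mathcal{C}(\xb_\lambda, \lambda \epsilon)$ for $\lambda' \leq \lambda$, so that $\ubar{\zb}_\lambda(\thetab, \xb, y)$ (hence $\delta_{\lambda}$) decreases with $\lambda$.

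I do not expect a genuine obstacle here: the statement is essentially the SABR decomposition of Proposition~\ref{prop:sabr} re-expressed in cross-entropy form, combined with the same exponential-splitting manipulation used for CC-IBP in Proposition~\ref{prop:ccibp-crentr}. The only point requiring a moment of care is that the decomposition through $\delta_{\lambda}(\thetab, \xb_{\text{adv}}, y)$ is legitimate, which rests on $\xb_{\text{adv}} \in \mathcal{C}(\xb_\lambda, \lambda \epsilon)$; this holds because $\xb_\lambda$ is the coordinatewise box projection of $\xb_{\text{adv}}$ onto $\mathcal{C}(\xb, (1-\lambda)\epsilon)$, so each coordinate is displaced by at most $\lambda \epsilon$.
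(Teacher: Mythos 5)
Your proposal is correct and follows essentially the same route as the paper's proof: decompose $\ubar{\zb}_\lambda(\thetab, \xb, y) = \zb(\thetab, \xb_{\text{adv}}, y) + \delta_{\lambda}(\thetab, \xb_{\text{adv}}, y)$, substitute into the log-sum-exp form of cross-entropy, split the exponentials termwise, and cite the proof of Proposition~\ref{prop:sabr} for the non-positivity and monotonicity of $\delta_{\lambda}$. Your extra remark justifying $\xb_{\text{adv}} \in \mathcal{C}(\xb_\lambda, \lambda\epsilon)$ via the box projection is a correct (and welcome) spelling-out of a step the paper leaves implicit.
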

\begin{proof}
	We can use $\delta_{\lambda}(\thetab, \xb', y) := \ubar{\zb}_\lambda(\thetab, \xb, y) - \zb(\thetab, \xb', y) \leq 0$, which is monotonically decreasing with $\lambda$ as seen in the proof of proposition~\ref{prop:sabr}.
	Then, exploiting the properties of the exponential, we can trivially re-write the SABR loss $\mathcal{L} (-\ubar{\zb}_\lambda(\thetab, \xb, y), y)$ as:
	\begin{align*}
		\mathcal{L} (-\ubar{\zb}_\lambda(\thetab, \xb, y), y) &=  \mathcal{L} (-\left[\zb(\thetab, \xb_{\text{adv}}, y) + \delta_{\lambda}(\thetab, \xb, y)\right], y) \\
		&= \log\left(1 + \sum_{i \neq y}  e^{-\left[\zb(\thetab, \xb_{\text{adv}}, y) + \delta_{\lambda}(\thetab, \xb, y)\right][i]}\right) \\
		&= \log\left(1 + \sum_{i \neq y} {\left( e^{-\zb(\thetab, \xb_{\text{adv}}, y)[i]}\right)}  {\left( e^{-\delta_{\lambda}(\thetab, \xb, y)[i]}\right)} \right).
	\end{align*}
\end{proof}

By comparing propositions~\ref{prop:ccibp-crentr}~and~\ref{prop:sabr-crentr}, we can see that the SABR and CC-IBP losses share a similar form, and they both upper-bound the adversarial loss. The main differences are the following: (i) the adversarial term is scaled down in CC-IBP, (ii) CC-IBP computes lower bounds over the entire input region, (iii) the over-approximation term of CC-IBP uses a scaled lower bound of the logit differences, rather than its difference with the adversarial $\zb$.

\section{Pseudo-Code} \label{sec:pseudo-code}

Algorithm~\ref{alg:alg} provides pseudo-code for CC-MTL, MTL-IBP and Exp-IBP.

\begin{minipage}{.90\textwidth}
	\begin{algorithm}[H]
		\caption{CC/MTL-IBP training loss computation}\label{alg:alg}
		\begin{algorithmic}[1]
			\algrenewcommand\algorithmicindent{1.0em}%
			\State{\textbf{Input:} Network $f(\thetab, \cdot)$, input $\xb$, label $y$, perturbation set $\mathcal{C}(\xb, \epsilon)$, loss function $\mathcal{L}$, hyper-parameters $\alpha, \lambda$}
			\State Compute $\xb_{\text{adv}}$ using an attack on $\mathcal{C}(\xb, \epsilon)$
			\State Compute $\ubar{\zb}(\thetab, \xb, y)$ via IBP on $\mathcal{C}(\xb, \epsilon)$ as described in section 2.2.1 
			\If{CC-IBP}
			\State $\zb(\thetab, \xb_{\text{adv}}, y) \leftarrow \left(f(\thetab, \xb_{\text{adv}})[y]\ \mathbf{1} - f(\thetab, \xb_{\text{adv}})\right)$
			\State{$\mathcal{L}_{\alpha} (\thetab, \xb, y) \leftarrow \mathcal{L}(-\left[( 1 - \alpha)\ \zb(\thetab, \xb_{\text{adv}}, y)  + \alpha\ \ubar{\zb}(\thetab, \xb, y)\right], y)$}
			\Else
				\State{$\mathcal{L}_{\text{ver}} (f(\thetab, \xb), y) \leftarrow \mathcal{L} (-\ubar{\zb}(\thetab, \xb, y), y)$}
				\If{MTL-IBP}
				\State{$\mathcal{L}_{\alpha} (\thetab, \xb, y) \leftarrow ( 1 - \alpha) \mathcal{L} (f(\thetab, \xb_{\text{adv}}), y)   + \alpha\ \mathcal{L}_{\text{ver}} (f(\thetab, \xb), y)$}
				\ElsIf{Exp-IBP}
				\State{$\mathcal{L}_{\alpha} (\thetab, \xb, y) \leftarrow \mathcal{L} (f(\thetab, \xb_{\text{adv}}), y)^{(1 - \alpha)}  \ \mathcal{L}_{\text{ver}} (f(\thetab, \xb), y)^\alpha$}
				\EndIf
			\EndIf
			\State\Return $\mathcal{L}_{\alpha} (\thetab, \xb, y) + \lambda ||\thetab||_1 $
		\end{algorithmic}
	\end{algorithm}
\end{minipage}%
\section{Branch-and-Bound Framework} \label{sec:bab}

We now outline the details of the Branch-and-Bound (BaB) framework employed to verify the networks trained within this work (see $\S$\ref{sec:experiments}).

Before resorting to more expensive verifiers, we first run IBP~\citep{Gowal2018b} and CROWN~\citep{Zhang2018} from the automatic LiRPA library~\citep{Xu2020} as an attempt to quickly verify the property.
If the attempt fails, we run BaB with a timeout of $1800$ seconds per data point unless specified otherwise, possibly terminating well before the timeout (except on the MNIST experiments) when the property is deemed likely to time out. 
Similarly to a recent verified training work~\citep{DePalma2022}, we base our verifier on the publicly available 
OVAL framework~\citep{BunelDP20, Bunel2020, improvedbab} from VNN-COMP-2021~\citep{vnn-comp-2021}.

\paragraph{Lower bounds to logit differences}
A crucial component of a BaB framework is the strategy employed to compute $\ubar{\zb}(\thetab, \xb, y)$. As explained in $\S$\ref{sec:background-verification}, the computation of the bound relies on a network over-approximation, which will generally depend on ``intermediate bounds": bounds to intermediate network pre-activations. For ReLU network, the over-approximation quality will depend on the tightness of the bounds for ``ambiguous" ReLUs, which cannot be replaced by a linear function ($0$ is contained in their pre-activation bounds).
We first use IBP to inexpensively compute intermediate bounds for the whole network. 
Then, we refine the bounds associated to ambiguous ReLUs by using $5$ iterations of $\alpha$-CROWN~\citep{xu2021fast} with a step size of $1$ decayed by $0.98$ at every iteration, without jointly optimizing over them so as to reduce the memory footprint.
Intermediate bounds are computed only once at the root node, that is, before any branching is performed.
If the property is not verified at the root, we use $\beta$-CROWN~\citep{betacrown} as bounding algorithm for the output bounds, with a dynamic number of iterations (see \citep[section 5.1.2]{improvedbab}), and initialized from the parent BaB sub-problem. We employ a step size of $0.1$, decayed by $0.98$ at each iteration. 

\paragraph{Adversarial attacks}
Counter-examples to the properties are searched for by evaluating the network on the input points that concretize the lower bound $\ubar{\zb}(\thetab, \xb, y)$ according to the employed network over-approximation. This is repeated on each BaB sub-problem, and also used to mark sub-problem infeasibility when $\ubar{\zb}(\thetab, \xb, y)$ overcomes the corresponding input point evaluation.
In addition, some of these points are used as initialization (along with random points in the perturbation region) for a $500$-step MI-FGSM~\cite{dong2018boosting} adversarial attack, which is run repeatedly for each property using a variety of randomly sampled hyper-parameter (step size value and decay, momentum) settings, and early stopped if success is deemed unlikely.
In $\S$\ref{sec:exp-alpha-sensitivity} we report a network to be empirically robust if both the above procedure and a $40$-step PGD attack~\citep{Madry2018} (run before BaB) fail to find an adversarial example.

\paragraph{Branching} 

Branching is performed through activation spitting: we create branch-and-bound subproblems by splitting a ReLU into its two linear functions. 
In other words, given intermediate bounds on pre-activations $\xbhat_k \in [\hat{\lb}_k, \hat{\ub}_k]$ for $k \in \left\llbracket2, n-1\right\rrbracket$, if the split is performed at the $i$-th neuron of the $k$-th layer, $\hat{\lb}_k[i] \geq 0$ is enforced on a sub-problem, $\hat{\ub}_k[i] \leq 0$ on the other. Let us denote the Hadamard product by $\odot$.
We employ a slightly modified version of the UPB branching strategy~\citep[equation (7)]{DePalma2022}, where the bias of the upper constraint from the Planet relaxation (ReLU convex hull, also known as triangle relaxation), $\frac{[-\hat{\lb}_k]_+ \odot [\hat{\ub}_k]_+}{\hat{\ub}_k - \hat{\lb}_k}$, is replaced by the area of the entire relaxation, $[-\hat{\lb}_k]_+ \odot [\hat{\ub}_k]_+$.
Intuitively, the area of the over-approximation should better represent the weight of an ambiguous neuron in the overall neural network relaxation than the distance of the upper constraint from the origin.
In practice, we found the impact on BaB performance to be negligible.
In particular, we tested our variant against the original UPB, leaving the remaining BaB setup unchanged, on the CC-IBP trained network for MNIST with $\epsilon=0.1$ from table~\ref{fig:main-results-tables}: they both resulted in $98.43\%$ verified robust accuracy.

	\section{Experimental Details} \label{sec:exp-appendix}

We present additional details pertaining to the experiments from $\S$\ref{sec:experiments} and appendix~\ref{sec:exp-additional}. In particular, we outline dataset and training details, hyper-parameters, computational setup and the employed network architecture.

\subsection{Datasets}

MNIST~\citep{LeCun2010} is a $10$-class (each class being a handwritten digit) classification dataset consisting of $28\times28$ greyscale images, with 60,000 training and 10,000 test points. 
CIFAR-10~\citep{CIFAR10} consists of $32\times32$ RGB images in 10 different classes (associated to subjects such as airplanes, cars, etc.), with 50,000 training and 10,000 test points. 
TinyImageNet~\citep{Le2015} is a miniature version of the ImageNet dataset, restricted to 200 classes of $64\times64$ RGB images, with 100,000 training and 10,000 validation images.
The employed downscaled ImageNet~\citep{Chrabaszcz2017} is a 1000-class dataset of $64\times64$ RGB images, with 1,281,167 training and 50,000 validation images.
Similarly to previous work~\citep{Xu2020,Shi2021,Mueller2023}, we normalize all datasets and use random horizontal flips and random cropping as data augmentation on CIFAR-10, TinyImageNet and downscaled ImageNet.
Complying with common experimental practice~\cite{Gowal2018b,Xu2020,Shi2021,Mueller2023},
we train on the entire training sets, and evaluate MNIST and CIFAR-10 on their test sets, ImageNet and TinyImageNet on their validation sets.

\subsection{Network Architecture, Training Schedule and Hyper-parameters} \label{sec:hyperparams}

We outline the practical details of our training procedure.

\begin{wraptable}{L}{0.58\textwidth}
	\vspace{-7pt}
	\sisetup{detect-all = true}
	
	\centering
	
	\scriptsize
	\setlength{\tabcolsep}{4pt}
	\aboverulesep = 0.1mm  
	\belowrulesep = 0.2mm  
	\caption{\small \label{tab:net-architecture} Details of the \texttt{CNN7} architecture.
		\vspace{-5pt}}
	\begin{adjustbox}{max width=.58\textwidth, center}
		
		\begin{tabular}{
				c
			} 
			
			\toprule \\[-5pt]
			
			\multicolumn{1}{c}{\texttt{CNN7}} \\
			
			\cmidrule(lr){1-1}  \\[-5pt]
			
			Convolutional: $64$ filters of size $3\times3$, stride $1$, padding $1$ \\
			BatchNorm layer\\
			ReLU layer\\
			\midrule
			Convolutional layer: $64$ filters of size $3\times3$, stride $1$, padding $1$ \\
			BatchNorm layer\\
			ReLU layer\\
			\midrule
			Convolutional layer: $128$ filters of size $3\times3$, stride $2$, padding $1$ \\
			BatchNorm layer\\
			ReLU layer\\			
			\midrule
			Convolutional layer: $128$ filters of size $3\times3$, stride $1$, padding $1$ \\
			BatchNorm layer\\
			ReLU layer\\						
			\midrule
			Convolutional layer: $128$ filters of size $3\times3$, stride $1$ or $2^*$, padding $1$ \\
			BatchNorm layer\\
			ReLU layer\\
			\midrule
			Linear layer: $512$ hidden units \\
			BatchNorm layer\\
			ReLU layer\\
			\midrule
			Linear layer: as many hidden units as the number of classes \\
			
			\bottomrule 
			
			\\[-5pt]			
			\multicolumn{1}{ l }{
				$^*$$1$ for MNIST and CIFAR-10, $2$ for TinyImageNet and ImageNet64. 
			}			
			
		\end{tabular}
	\end{adjustbox}
	\vspace{-5pt}
\end{wraptable}

\paragraph{Network Architecture}

Independently of the dataset, all our experiments employ a 7-layer architecture that enjoys widespread usage (with minor modifications) in the verified training literature~\citep{Gowal2018b,Zhang2020,Shi2021,Mueller2023,Mao2023}.
In particular, similarly to recent work~\citep{Mueller2023,Mao2023}, we employ a version modified by~\citet{Shi2021} to use BatchNorm~\citep{Ioffe2015} after every layer.
Differently \linebreak from~\citet{Shi2021,Mueller2023}, at training time we compute IBP bounds to BatchNorm layers using the batch statistics from the perturbed data $\xb_{\text{adv}}$. These are the same batch statistics employed to compute the training-time adversarial logit differences and their associated loss, which are not part of the loss functions used by previous work under the same architecture~\citep{Shi2021,Mueller2023}.
As done by~\citet{Mueller2023}, the training-time attacks are carried out with the network in evaluation mode.
Finally, the statistics used at evaluation time are computed including both the original data and the attacks used during training.
The details of the architecture, often named \texttt{CNN7} in the literature, are presented in table~\ref{tab:net-architecture}.

\begin{table}[b!]
	\sisetup{detect-all = true}
	
	\centering
	
	\small
	\setlength{\tabcolsep}{4pt}
	\aboverulesep = 0.1mm  
	\belowrulesep = 0.2mm  
	\caption{\small
		Hyper-parameter configurations for the networks trained via CC-IBP, MTL-IBP and Exp-IBP from tables~\ref{fig:main-results-tables}~and~\ref{fig:mnist}.
		\label{fig:hyperparams}}
	\vspace{3pt}
	\begin{adjustbox}{max width=\textwidth, center}
		
		\begin{tabular}{
				c
				c
				l
				c
				c
				c
				c
				c
				c
				c
			} 
			
			\toprule \\[-5pt]
			
			\multicolumn{1}{c}{Dataset} &
			\multicolumn{1}{c}{$\epsilon$} &
			\multicolumn{1}{c}{Method} &
			\multicolumn{1}{c}{$\alpha$} &
			\multicolumn{1}{c}{$\ell_1$} &
			\multicolumn{1}{c}{Attack (steps, $\eta$)} &
			\multicolumn{1}{c}{$\epsilon_{\text{train}}$, if $\neq \epsilon$} &
			\multicolumn{1}{c}{Total epochs} &
			\multicolumn{1}{c}{Warm-up + Ramp-up} &
			\multicolumn{1}{c}{LR decay epochs} \\
			
			\cmidrule(lr){1-2} \cmidrule(lr){3-3} \cmidrule(lr){4-10} \\[-5pt]
			
			\multirow{5}{*}{MNIST} & \multirow{2}{*}{$0.1$} &
			
			\multicolumn{1}{ c }{\textsc{CC-IBP}}
			&  $1 \times 10^{-1}$ & $2 \times 10^{-6}$ & $(1, 10.0\epsilon)$ & $0.2$ & 70 & 0+20 & 50,60 \\
			
			& & \multicolumn{1}{ c }{\textsc{MTL-IBP}}
			& $1 \times 10^{-2}$  & $1 \times 10^{-5}$ & $(1, 10.0\epsilon)$ & $0.2$ & 70 & 0+20 & 50,60 \\
			
			\cmidrule(lr){2-10} \\[-5pt]
			
			& \multirow{2}{*}{$0.3$} &
			
			\multicolumn{1}{ c }{\textsc{CC-IBP}} 
			& $3 \times 10^{-1}$  & $3 \times 10^{-6}$ & $(1, 10.0\epsilon)$ & $0.4$ & 70 & 0+20 & 50,60 \\
			
			& & \multicolumn{1}{ c }{\textsc{MTL-IBP}} 
			& $8 \times 10^{-2}$  & $1 \times 10^{-6}$ & $(1, 10.0\epsilon)$ & $0.4$ & 70 & 0+20 & 50,60 \\
			
			\cmidrule{1-10} \\[-5pt]
			
			\multirow{7}{*}{CIFAR-10} & \multirow{3}{*}{$\frac{2}{255}$} &
			
			\multicolumn{1}{ c }{\textsc{CC-IBP}}
			& $1 \times 10^{-2}$ & $3 \times 10^{-6}$  & $(8, 0.25\epsilon)$ & $\nicefrac{4.2}{255}$ attack only & 160 & 1+80 & 120,140 \\
			
			& & \multicolumn{1}{ c }{\textsc{MTL-IBP}} 
			& $4 \times 10^{-3}$  & $3 \times 10^{-6}$ & $(8, 0.25\epsilon)$ & $\nicefrac{4.2}{255}$ attack only & 160 & 1+80 & 120,140 \\
			
			& & \multicolumn{1}{ c }{\textsc{Exp-IBP}} 
			& $9.5 \times 10^{-2}$  & $4 \times 10^{-6}$ & $(8, 0.25\epsilon)$ & $\nicefrac{4.2}{255}$ attack only & 160 & 1+80 & 120,140 \\
			
			\cmidrule(lr){2-10} \\[-5pt]
			
			& \multirow{2}{*}{$\frac{8}{255}$} &
			
			\multicolumn{1}{ c }{\textsc{CC-IBP}} 
			& $5 \times 10^{-1}$  & 0 & $(1, 10.0\epsilon)$ & / & 260 & 1+80 & 180,220 \\
			
			& & \multicolumn{1}{ c }{\textsc{MTL-IBP}} 
			&  $5 \times 10^{-1}$  & $1 \times 10^{-7}$  & $(1, 10.0\epsilon)$ & / & 260 & 1+80 & 180,220 \\
			
			& & \multicolumn{1}{ c }{\textsc{Exp-IBP}} 
			&  $5 \times 10^{-1}$  & 0  & $(1, 10.0\epsilon)$ & / & 260 & 1+80 & 180,220 \\
			
			\cmidrule{1-10} \\[-5pt]
			
			\multirow{3}{*}{TinyImageNet} & \multirow{3}{*}{$\frac{1}{255}$} &
			
			\multicolumn{1}{ c }{\textsc{CC-IBP}} 
			&  $1 \times 10^{-2}$   & $5 \times 10^{-5}$ & $(1, 10.0\epsilon)$ & / & 160 & 1+80 & 120,140 \\
			
			& & \multicolumn{1}{ c }{\textsc{MTL-IBP}}
			&  $1 \times 10^{-2}$   & $5 \times 10^{-5}$ & $(1, 10.0\epsilon)$ & / & 160 & 1+80 & 120,140 \\
			
			& & \multicolumn{1}{ c }{\textsc{Exp-IBP}}
			& $4 \times 10^{-2}$   & $5 \times 10^{-5}$ & $(1, 10.0\epsilon)$ & / & 160 & 1+80 & 120,140 \\
			
			\cmidrule{1-10} \\[-5pt]
			
			\multirow{3}{*}{ImageNet64} & \multirow{3}{*}{$\frac{1}{255}$} &
			
			\multicolumn{1}{ c }{\textsc{CC-IBP}} 
			&  $5 \times 10^{-2}$   & $1 \times 10^{-5}$  & $(1, 10.0\epsilon)$ & / & 80 & 1+20 & 60,70 \\
			
			& & \multicolumn{1}{ c }{\textsc{MTL-IBP}}
			&  $5 \times 10^{-2}$  & $1 \times 10^{-5}$ & $(1, 10.0\epsilon)$ & / & 80 & 1+20 & 60,70 \\
			
			& & \multicolumn{1}{ c }{\textsc{Exp-IBP}}
			&  $5 \times 10^{-2}$  & $1 \times 10^{-5}$ & $(1, 10.0\epsilon)$ & / & 80 & 1+20 & 60,70 \\
			
			\bottomrule 
			
			\\[-5pt]
			
		\end{tabular}
	\end{adjustbox}
\end{table}

\paragraph{Train-Time Perturbation Radius} 

We train using the target perturbation radius (denoted by $\epsilon$ as elsewhere in this work) for all benchmarks except on MNIST and CIFAR-10 when $\epsilon=\nicefrac{2}{255}$. On MNIST, complying with~\citet{Shi2021}, we use $\epsilon_{\text{train}} = 0.2$ for $\epsilon=0.1$ and $\epsilon_{\text{train}}= 0.4$ for $\epsilon=0.3$. \linebreak
On CIFAR-10 with $\epsilon=\nicefrac{2}{255}$, in order to improve performance,
SABR~\citep{Mueller2023} ``shrinks" the magnitude of the computed IBP bounds by multiplying intermediate bounds by a constant  $c=0.8$ before each ReLU (for instance, following our notation in $\S$\ref{sec:background-verification} $[\hat{\ub}_{k-1}]_+$ would be replaced by $0.8[\hat{\ub}_{k-1}]_+$).
As a result, the prominence of the attack in the employed loss function is increased.
In other to achieve a similar effect, we employ the original perturbation radius to compute the lower bounds $\ubar{\zb}(\thetab, \xb, y)$ and use a larger perturbation radius to compute $\xb_{\text{adv}}$ for our experiments in this setup (see table~\ref{fig:hyperparams}). Specifically, we use $\epsilon_{\text{train}}= 4.2$, the radius employed by IBP-R~\citep{DePalma2022} to perform the attack and to compute the algorithm's regularization terms.

\paragraph{Training Schedule, Initialization, Regularization}

Following~\citet{Shi2021}, training proceeds as follows: the network is initialized according to the method presented by~\citet{Shi2021}, which relies on a low-variance Gaussian distribution to prevent the ``explosion" of IBP bounds across layers at initialization. 
Training starts with the standard cross-entropy loss during warm-up. Then, the perturbation radius is gradually increased to the target value $\epsilon_{\text{train}}$ (using the same schedule \linebreak as~\citet{Shi2021}) for a fixed number of epochs during the ramp-up phase. 
During warm-up and ramp-up, the regularization introduced by~\citet{Shi2021} is added to the objective. This is composed of two terms: one that penalizes (similarly to the initialization) bounds explosion at training time, the other that balances the impact of active and inactive ReLU neurons in each layer.
Finally, towards the end of training, and after ramp-up has ended, the learning rate is decayed twice.
In line with previous work~\citep{Xiao2019,DePalma2022,Mueller2023,Mao2023}, we employ $\ell_1$ regularization throughout~training.

\paragraph{Hyper-parameters}

The hyper-parameter configurations used in table~\ref{fig:main-results-tables} are presented in table~\ref{fig:hyperparams}. The training schedules for MNIST and CIFAR-10 with $\epsilon=\nicefrac{2}{255}$ follow the hyper-parameters from~\citet{Shi2021}. 
Similarly to~\citet{Mueller2023}, we increase the number of epochs after ramp-up for CIFAR-10 with $\epsilon=\nicefrac{8}{255}$: we use $260$ epochs in total.
On TinyImageNet, we employ the same configuration as CIFAR-10 with $\epsilon=\nicefrac{2}{255}$, and ImageNet64 relies on the TinyImageNet schedule from~\citet{Shi2021}, who did not benchmark on it.
Following previous work~\citep{Shi2021,Mueller2023}, we use a batch size of $256$ for MNIST, and of $128$ everywhere else.
As~\citet{Shi2021,Mueller2023}, we use a learning rate of $5\times10^{-5}$ for all experiments, decayed twice by a factor of $0.2$, and use gradient clipping with threshold equal to $10$.
Coefficients for the regularization introduced by~\citet{Shi2021} are set to the values suggested in the author's official codebase: $\lambda=0.5$ on MNIST and CIFAR-10, $\lambda=0.2$ for TinyImageNet. On ImageNet64, which was not included in their experiments, we use $\lambda=0.2$ as for TinyImageNet given the dataset similarities.
Single-step attacks were forced to lay on the boundary of the perturbation region (see appendix~\ref{sec:exp-attack-step-size}), while we complied with previous work for the $8$-step PGD attack~\citep{Balunovic2020,DePalma2022}.
As~\citet{Shi2021}, we clip target epsilon values to the fifth decimal digit.
The main object of our tuning were the $\alpha$ coefficients from the CC-IBP, MTL-IBP and Exp-IBP losses (see $\S$\ref{sec:convexcombinations}) and the $\ell_1$ regularization coefficient: for both, we first selected an order of magnitude, then the significand via manual search.
We found Exp-IBP to be significantly more sensitive to the $\alpha$ parameter compared to CC-IBP and MTL-IBP on some of the benchmarks.
We do not perform early stopping.
Unless otherwise stated, all the ablation studies re-use the hyper-parameter configurations from table~\ref{fig:hyperparams}. In particular, the ablation from $\S$\ref{sec:exp-alpha-sensitivity} use the single-step attack for CIFAR-10 with $\epsilon=\nicefrac{2}{255}$. Furthermore, the ablation from appendix~\ref{sec:exp-attacks} uses $\alpha=5\times10^{-2}$ for CC-IBP and $\alpha=1\times10^{-2}$ for MTL-IBP in order to ease verification.

\subsection{Computational Setup}

While each experiment (both for training and post-training verification) was run on a single GPU at a time, we used a total of $6$ different machines and $12$ GPUs. 
In particular, we relied on the following CPU models: Intel i7-6850K, i9-7900X CPU, i9-10920X, i9-10940X,  AMD 3970X.
The employed GPU models for most of the experiments were: Nvidia Titan V, Titan XP, and Titan XP. 
The experiments of $\S$\ref{sec:tuning-study} and appendix~\ref{sec:tuning-study-models} on CIFAR-10 for $\epsilon=\nicefrac{8}{255}$ were run on the following GPU models: RTX 4070 Ti, RTX 2080 Ti, RTX 3090.
$8$ GPUs of the following models: Nvidia Titan V, Nvidia Titan XP, and Nvidia Titan XP.

\subsection{Software Acknowledgments}

Our training code is mostly built on the publicly available training pipeline by~\citet{Shi2021}, which was released under a 3-Clause BSD license. We further rely on the automatic LiRPA library~\citep{Xu2020}, also released under a 3-Clause BSD license, and on the OVAL complete verification framework~\citep{BunelDP20,improvedbab}, released under a MIT license.
MNIST and CIFAR-10 are taken from \texttt{torchvision.datasets}~\citep{Paszke2019}; we downloaded TinyImageNet from the Stanford CS231n course website (\url{http://cs231n.stanford.edu/TinyImageNet-200.zip}), and downscaled ImageNet from the ImageNet website (\url{https://www.image-net.org/download.php}).

\begin{table*}[t!]
	\sisetup{detect-all = true}
	
	\setlength{\heavyrulewidth}{0.09em}
	\setlength{\lightrulewidth}{0.5em}
	\setlength{\cmidrulewidth}{0.03em}
	
	\centering
	
	\scriptsize
	\setlength{\tabcolsep}{4pt}
	\aboverulesep = 0.3mm  
	\belowrulesep = 0.1mm  
	\caption{\small Comparison of CC-IBP and MTL-IBP with literature results for  $\ell_\infty$ norm perturbations on the MNIST dataset.
		The entries corresponding to the best standard or verified robust accuracy for each perturbation radius are highlighted in bold. 
		For the remaining entries, improvements on the literature performance of the best-performing ReLU-based architectures are underlined.
		\label{fig:mnist}}
	\begin{adjustbox}{max width=.99\textwidth, center}
		\begin{tabular}{
				c
				l
				c
				S[table-format=2.2]
				S[table-format=2.2]
			} 
			
			\toprule \\[-5pt]
			
			\multicolumn{1}{ c }{$\epsilon$} &
			\multicolumn{1}{ c }{Method} &
			\multicolumn{1}{ c }{Source} &
			\multicolumn{1}{ c }{Standard acc. [\%]} &
			\multicolumn{1}{ c }{Verified rob. acc. [\%]}\\
			
			\cmidrule(lr){1-1} \cmidrule(lr){2-3} \cmidrule(lr){4-5} \\[-5pt]
			
			\multirow{10}{*}{$0.1$} &
			
			\multicolumn{1}{ c }{\textsc{CC-IBP}} & this work
			& \B 99.30 &  \B 98.43  \\
			
			& \multicolumn{1}{ c }{\textsc{MTL-IBP}} & this work
			& \U{99.25} & 98.38  \\
			
			\cmidrule(lr){2-5} \\[-6pt]
			
			& \multicolumn{1}{ c }{\textsc{TAPS}} & \citet{Mao2023}
			& 99.19 & 98.39   \\
			
			& \multicolumn{1}{ c }{\textsc{SABR}} & \citet{Mueller2023}
			& 99.23 & 98.22  \\
			
			& \multicolumn{1}{ c }{\textsc{SortNet}} & \citet{Zhang2022rethinking}
			& 99.01 &  98.14  \\
			
			& \multicolumn{1}{ c }{\textsc{IBP}$^*$} & \citet{mao2023understanding} 
			& 98.86 &  98.23  \\
			
			& \multicolumn{1}{ c }{\textsc{AdvIBP}} & \citet{Fan2021} 
			& 98.97 & 97.72  \\ 
			
			& \multicolumn{1}{ c }{\textsc{CROWN-IBP}} & \citet{Zhang2020} 
			& 98.83 & 97.76   \\ 
			
			& \multicolumn{1}{ c }{\textsc{COLT}} & \citet{Balunovic2020} 
			& 99.2  & 97.1  \\
			
			\cmidrule(lr){1-5} \\[-6pt]
			
			\multirow{10}{*}{$0.3$} &
			
			\multicolumn{1}{ c }{\textsc{CC-IBP}} & this work
			& \B 98.81 & 93.58 \\
			
			& \multicolumn{1}{ c }{\textsc{MTL-IBP}} & this work
			& \U{98.80} & 93.62 \\
			
			\cmidrule(lr){2-5} \\[-6pt]
			
			& \multicolumn{1}{ c }{\textsc{STAPS}} & \citet{Mao2023}
			& 98.53 & 93.51 \\
			
			& \multicolumn{1}{ c }{\textsc{SABR}$^*$} & \citet{mao2023understanding}
			& 98.48 & \B 93.85  \\
			
			& \multicolumn{1}{ c }{\textsc{SortNet}} & \citet{Zhang2022rethinking}
			& 98.46 &  93.40 \\
			
			& \multicolumn{1}{ c }{\textsc{IBP}$^*$} & \citet{mao2023understanding} 
			& 97.66 & 93.35   \\
			
			& \multicolumn{1}{ c }{\textsc{AdvIBP}} & \citet{Fan2021} 
			& 98.42 & 91.77  \\ 
			
			& \multicolumn{1}{ c }{\textsc{CROWN-IBP}} & \citet{Zhang2020} 
			& 98.18 &  92.98 \\ 
			
			& \multicolumn{1}{ c }{\textsc{COLT}} & \citet{Balunovic2020} 
			& 97.3  & 85.7   \\
			
			\bottomrule 
			
			\\[-5pt]
			\multicolumn{5}{ l }{
				$^*$$4\times$ wider network than the architecture employed in our experiments. 
			}
		\end{tabular}
	\end{adjustbox}
\end{table*}

\section{Additional Experimental Results} \label{sec:exp-additional}

We now present experimental results complementing those in $\S$\ref{sec:experiments}.

\subsection{Comparison with Literature Results on MNIST} \label{sec:app-mnist}

We present a comparison of CC-IBP and MTL-IBP with results from the literature on the MNIST~\citep{LeCun2010} dataset. Table~\ref{fig:mnist} shows that both methods attain state-of-the-art performance on both perturbation radii, further confirming the results from table~\ref{fig:main-results-tables} on the centrality of expressivity for verified training.
Nevertheless, we point out that, owing to its inherent simplicity, performance differences on MNIST are smaller than those seen on the datasets employed in $\S$\ref{sec:exp-main}.

\subsection{Training Times} \label{sec:app-time}

Table~\ref{fig:training-times} reports training times for CC-IBP and MTL-IBP, and present a comparison with the training times, whenever available in the source publications, associated to the literature results from tables~\ref{fig:main-results-tables}~and~\ref{fig:mnist}.
Our timings from the table were measured on a Nvidia Titan V GPU and using an $20$-core Intel i9-7900X~CPU.
We point out that, as literature measurements were carried out on different hardware depending on the source, these only provide a rough indication of the overhead associated with each algorithm.
Owing to the our use of a single-step attack on all benchmarks except one, MTL-IBP and CC-IBP tend to incur less training overhead than other state-of-the-art approaches, making them more scalable and easier to tune.
In particular, STAPS displays significantly larger training overhead.
As expected, under the same training schedule, the training times of CC-IBP and MTL-IBP are roughly equivalent. While the computational setup of our measurements from table~\ref{fig:training-times} was no longer available at the time of the Exp-IBP experiments, the same equivalence applies. For instance, on an Nvidia GeForce RTX 4070 Ti and a $24$-core Intel i9-10920X~CPU, training with CC-IBP, MTL-IBP and Exp-IBP on TinyImageNet respectively takes $4.60 \times 10^4$, $4.57 \times 10^4$, and $4.59 \times 10^4$ seconds.

\begin{table*}[t!]
	\sisetup{detect-all = true}
	
	\setlength{\heavyrulewidth}{0.09em}
	\setlength{\lightrulewidth}{0.5em}
	\setlength{\cmidrulewidth}{0.03em}
	
	\centering
	
	\scriptsize
	\setlength{\tabcolsep}{4pt}
	\aboverulesep = 0.3mm  
	\belowrulesep = 0.1mm  
	\caption{\small Training times of CC-IBP and MTL-IBP compared to literature results for $\ell_\infty$ norm perturbations on MNIST, CIFAR-10, TinyImageNet and downscaled ($64\times64$) ImageNet datasets. 
		\label{fig:training-times}}
	\begin{adjustbox}{max width=.99\textwidth, center}
		\begin{tabular}{
				c
				c
				l
				c
				c
			} 
			
			\toprule \\[-5pt]
			
			\multicolumn{1}{ c }{Dataset} &
			\multicolumn{1}{ c }{$\epsilon$} &
			\multicolumn{1}{ c }{Method} &
			\multicolumn{1}{ c }{Source} &
			\multicolumn{1}{ c }{Training time [s]} \\
			
			\cmidrule(lr){1-2} \cmidrule(lr){2-2} \cmidrule(lr){3-4} \cmidrule(lr){5-5} \\[-5pt]
			
			\multirow{14}{*}{MNIST} & \multirow{7}{*}{$0.1$} &
			
			\multicolumn{1}{ c }{\textsc{CC-IBP}} & this work
			&  $2.71 \times 10^3$ \\
			
			& & \multicolumn{1}{ c }{\textsc{MTL-IBP}} & this work
			&  $2.69 \times 10^3$  \\
			
			\cmidrule(lr){3-5} \\[-6pt]
			
			& & \multicolumn{1}{ c }{\textsc{TAPS}} & \citet{Mao2023}
			& $4.26 \times 10^4$ \\
			
			& & \multicolumn{1}{ c }{\textsc{SABR}} & \citet{Mueller2023}
			& $1.22 \times 10^4$ \\
			
			& & \multicolumn{1}{ c }{\textsc{SortNet}} & \citet{Zhang2022rethinking}
			& $1.59 \times 10^4$  \\
			
			& & \multicolumn{1}{ c }{\textsc{AdvIBP}} & \citet{Fan2021} 
			&  $1.29 \times 10^5$  \\ 
			
			\cmidrule(lr){2-5} \\[-6pt]
			
			& \multirow{7}{*}{$0.3$} &
			
			\multicolumn{1}{ c }{\textsc{CC-IBP}} & this work
			&  $2.71 \times 10^3$ \\
			
			& & \multicolumn{1}{ c }{\textsc{MTL-IBP}} & this work
			&  $2.72 \times 10^3$ \\
			
			\cmidrule(lr){3-5} \\[-6pt]
			
			& & \multicolumn{1}{ c }{\textsc{STAPS}} & \citet{Mao2023}
			& $1.24 \times 10^4$ \\
			
			& & \multicolumn{1}{ c }{\textsc{SortNet}} & \citet{Zhang2022rethinking}
			& $1.59 \times 10^4$\\
			
			& & \multicolumn{1}{ c }{\textsc{AdvIBP}} & \citet{Fan2021} 
			&  $1.29 \times 10^5$  \\ 
			
			& & \multicolumn{1}{ c }{\textsc{CROWN-IBP}} & \citet{Zhang2020} 
			& $5.58 \times 10^3$ \\ 
			
			\cmidrule{1-5} \\[-6pt]
			
			\multirow{19}{*}{CIFAR-10} & \multirow{8}{*}{$\frac{2}{255}$} &
			
			\multicolumn{1}{ c }{\textsc{CC-IBP}} & this work
			& $1.77 \times 10^4$  \\
			
			& & \multicolumn{1}{ c }{\textsc{MTL-IBP}} & this work
			&   $1.76 \times 10^4$ \\
			
			\cmidrule(lr){3-5} \\[-6pt]
			
			& & \multicolumn{1}{ c }{\textsc{STAPS}} & \citet{Mao2023}
			& $1.41 \times 10^5$ \\
			
			& & \multicolumn{1}{ c }{\textsc{SortNet}} & \citet{Zhang2022rethinking}
			& $4.04 \times 10^4$\\
			
			& & \multicolumn{1}{ c }{\textsc{AdvIBP}} & \citet{Fan2021} 
			&  $5.28 \times 10^5$  \\ 
			
			& & \multicolumn{1}{ c }{\textsc{CROWN-IBP}} & \citet{Zhang2020} 
			 & $9.13 \times 10^4$  \\ 
			
			& & \multicolumn{1}{ c }{\textsc{COLT}} & \citet{Balunovic2020} 
			& $1.39 \times 10^5$  \\
			
			\cmidrule(lr){2-5} \\[-6pt]
			
			& \multirow{11}{*}{$\frac{8}{255}$} &
			
			\multicolumn{1}{ c }{\textsc{CC-IBP}} & this work
			&  $1.72 \times 10^4$ \\
			
			& & \multicolumn{1}{ c }{\textsc{MTL-IBP}} & this work
			& $1.70 \times 10^4$  \\
			
			\cmidrule(lr){3-5} \\[-6pt]
			
			& & \multicolumn{1}{ c }{\textsc{STAPS}} & \citet{Mao2023}
			&  $2.70 \times 10^4$ \\
			
			& & \multicolumn{1}{ c }{\textsc{SABR}} & \citet{Mueller2023}
			&   $2.64 \times 10^4$  \\
			
			& & \multicolumn{1}{ c }{\textsc{SortNet}} & \citet{Zhang2022rethinking}
			& $4.04 \times 10^4$  \\
			
			& & \multicolumn{1}{ c }{\textsc{IBP-R}} & \citet{DePalma2022}
			& $5.89 \times 10^3 $ \\ 
			
			& & \multicolumn{1}{ c }{\textsc{IBP}} & \citet{Shi2021} 
			&  $9.51 \times 10^3$\\ 
			
			& & \multicolumn{1}{ c }{\textsc{AdvIBP}} & \citet{Fan2021} 
			&  $5.28 \times 10^5$  \\ 
			
			& & \multicolumn{1}{ c }{\textsc{CROWN-IBP}} & \citet{Xu2020} 
			& $5.23 \times 10^4$ \\ 
			
			& & \multicolumn{1}{ c }{\textsc{COLT}} & \citet{Balunovic2020} 
			& $3.48 \times 10^4$ \\
			
			\cmidrule{1-5} \\[-6pt]
			
			\multirow{6.5}{*}{TinyImageNet} & \multirow{6.5}{*}{$\frac{1}{255}$} &
			
			\multicolumn{1}{ c }{\textsc{CC-IBP}} & this work
			&  $6.58 \times 10^4$ \\
			
			& & \multicolumn{1}{ c }{\textsc{MTL-IBP}} & this work
			&  $6.56 \times 10^4$ \\
			
			\cmidrule(lr){3-5} \\[-6pt]
			
			& & \multicolumn{1}{ c }{\textsc{STAPS}} & \citet{Mao2023}
			&  $3.06 \times 10^5$ \\
			
			& & \multicolumn{1}{ c }{\textsc{SortNet}} & \citet{Zhang2022rethinking}
			& $1.56 \times 10^5$  \\
			
			& & \multicolumn{1}{ c }{\textsc{IBP}} & \citet{Shi2021} 
			& $3.53 \times 10^4$ \\
			
			& & \multicolumn{1}{ c }{\textsc{CROWN-IBP}} & \citet{Shi2021} 
			& $3.67 \times 10^4$ \\ 
			
			\cmidrule{1-5} \\[-6pt]
			
			\multirow{3.5}{*}{ImageNet64} & \multirow{3.5}{*}{$\frac{1}{255}$} &
			
			\multicolumn{1}{ c }{\textsc{CC-IBP}} & this work
			&  $3.26 \times 10^5$ \\
			
			& & \multicolumn{1}{ c }{\textsc{MTL-IBP}} & this work
			&  $3.52 \times 10^5$ \\
			
			\cmidrule(lr){3-5} \\[-6pt]
			
			& & \multicolumn{1}{ c }{\textsc{SortNet}} & \citet{Zhang2022rethinking}
			& $6.58 \times 10^5$  \\
			
			\bottomrule 
			
			\\[-5pt]
			\multicolumn{5}{ l }{
				$^*$$4\times$ wider network than the architecture employed in our experiments and in the original work~\citep{Mueller2023}. 
			}
		\end{tabular}
	\end{adjustbox}
	\vspace{10pt}
\end{table*}

\subsection{Ablation: Post-training Verification} \label{sec:exp-verif}

\begin{table}[t!]
	\vspace{-7pt}
	\sisetup{detect-all = true}
	
	\centering
	
	\scriptsize
	\setlength{\tabcolsep}{4pt}
	\aboverulesep = 0.1mm  
	\belowrulesep = 0.2mm  
	\caption{\small 
		Verified robust accuracy under different post-training verification schemes for networks trained via CC-IBP and MTL-IBP from tables~\ref{fig:main-results-tables}~and~\ref{fig:mnist}.
		\label{fig:verification-ablation}}
	\begin{adjustbox}{max width=.6\textwidth, center}
		
		\begin{tabular}{
				c
				c
				l
				S[table-format=2.2]
				S[table-format=2.2]
				S[table-format=2.2]
			} 
			
			\toprule \\[-5pt]
			
			\multicolumn{1}{c}{\multirow{2}{*}{Dataset}} &
			\multicolumn{1}{c}{\multirow{2}{*}{$\epsilon$}} &
			\multicolumn{1}{c}{\multirow{2}{*}{Method}} &
			\multicolumn{3}{c}{Verified robust accuracy [\%]} \\
			\\[-5pt]
			
			& & & \multicolumn{1}{c}{BaB}  &
			\multicolumn{1}{c}{CROWN/IBP} &
			\multicolumn{1}{c}{IBP} \\

			\cmidrule(lr){1-2} \cmidrule(lr){3-3} \cmidrule(lr){4-6} \\[-5pt]
			
			\multirow{5}{*}{MNIST} & \multirow{2}{*}{$0.1$} &
			
			\multicolumn{1}{ c }{\textsc{CC-IBP}}
			&  98.43 & 96.37 & 94.30 \\
			
			& & \multicolumn{1}{ c }{\textsc{MTL-IBP}}
			& 98.38  & 97.17 &  96.67 \\
			
			\cmidrule(lr){2-6} \\[-5pt]
			
			& \multirow{2}{*}{$0.3$} &
			
			\multicolumn{1}{ c }{\textsc{CC-IBP}} 
			& 93.46   & 92.22 & 92.22 \\
			
			& & \multicolumn{1}{ c }{\textsc{MTL-IBP}} 
			& 93.62  & 92.45 & 92.45 \\
			
			\cmidrule{1-6} \\[-5pt]
			
			\multirow{7}{*}{CIFAR-10} & \multirow{3}{*}{$\frac{2}{255}$} &
			
			\multicolumn{1}{ c }{\textsc{CC-IBP}}
			& 63.78  & 52.37  &  00.00 \\
			
			& & \multicolumn{1}{ c }{\textsc{MTL-IBP}} 
			& 63.24  & 51.35 &  00.00 \\ 
			
			& & \multicolumn{1}{ c }{\textsc{Exp-IBP}} 
			& 61.65  & 47.87 & 00.00 \\ 
			
			\cmidrule(lr){2-6} \\[-5pt]
			
			& \multirow{3}{*}{$\frac{8}{255}$} &
			
			\multicolumn{1}{ c }{\textsc{CC-IBP}} 
			& 35.27  &34.02 &  34.02 \\
			
			& & \multicolumn{1}{ c }{\textsc{MTL-IBP}} 
			&  35.44  & 34.64 &  34.64 \\
			
			& & \multicolumn{1}{ c }{\textsc{Exp-IBP}} 
			& 35.04  & 33.88 &  33.88 \\ 
			
			\cmidrule{1-6} \\[-5pt]
			
			\multirow{3}{*}{TinyImageNet} & \multirow{3}{*}{$\frac{1}{255}$} &
			
			\multicolumn{1}{ c }{\textsc{CC-IBP}} 
			&  26.39   & 23.58 &  00.05 \\
			
			& & \multicolumn{1}{ c }{\textsc{MTL-IBP}}
			&  26.09  & 23.65 &  00.11 \\
			
			& & \multicolumn{1}{ c }{\textsc{Exp-IBP}} 
			& 26.18  & 22.96 &  00.01 \\ 
			
			\cmidrule{1-6} \\[-5pt]
			
			\multirow{3}{*}{ImageNet64} & \multirow{3}{*}{$\frac{1}{255}$} &
			
			\multicolumn{1}{ c }{\textsc{CC-IBP}} 
			&  11.87   & 10.64 &  0.92 \\
			
			& & \multicolumn{1}{ c }{\textsc{MTL-IBP}}
			&  12.13  & 10.87 &  1.41 \\  
			
			& & \multicolumn{1}{ c }{\textsc{Exp-IBP}} 
			& 13.30  & 11.42 &  00.19 \\ 
			
			\bottomrule 
			
			\\[-5pt]
			
		\end{tabular}
	\end{adjustbox}
	\vspace{-3pt}
\end{table}

\looseness-1
Table~\ref{fig:verification-ablation} reports the verified robust accuracy of the models trained with CC-IBP, MTL-IBP and Exp-IBP from tables~\ref{fig:main-results-tables}~and~\ref{fig:mnist} under the verification schemes outlined in $\S$\ref{sec:exp-alpha-sensitivity}.
As expected, the verified robust accuracy decreases when using inexpensive incomplete verifiers. The difference between BaB and CROWN/IBP accuracies varies with the setting, with larger perturbations displaying smaller gaps. Furthermore, CROWN bounds do not improve over those obtained via IBP on larger perturbations, mirroring previous observations on IBP-trained networks~\citep{Zhang2020}.
We note that all three methods already attain larger verified robustness than the literature results from table~\ref{fig:main-results-tables} under CROWN/IBP bounds on TinyImageNet and downscaled ImageNet.

\subsection{Ablation: Attack Types} \label{sec:exp-attacks}

\begin{table}[b!]
	\vspace{-7pt}
	\sisetup{detect-all = true}
	
	\centering
	
	\scriptsize
	\setlength{\tabcolsep}{4pt}
	\aboverulesep = 0.1mm  
	\belowrulesep = 0.2mm  
	\caption{\small \looseness=-1 Effect of the type of adversarial attack on the performance of CC-IBP and MTL-IBP under $\ell_\infty$ perturbations on the CIFAR-10 test set.
		\vspace{-5pt}
		\label{fig:attack-ablation}}
	\begin{adjustbox}{max width=.48\textwidth, center}
		
		\begin{tabular}{
				c
				l
				c
				S[table-format=2.2]
				S[table-format=2.2]
				c
			} 
			
			\toprule \\[-5pt]
			
			\multicolumn{1}{c}{\multirow{2}{*}{$\epsilon$}} &
			\multicolumn{1}{c}{\multirow{2}{*}{Method}} &
			\multicolumn{1}{c}{\multirow{2}{*}{Attack}} &
			\multicolumn{2}{c}{Accuracy [\%]} &
			\multicolumn{1}{c}{\multirow{2}{*}{Train. time [s]}} \\
			
			\\[-5pt]
			
			& & & \multicolumn{1}{c}{Std.}  &
			\multicolumn{1}{c}{Ver. rob.} \\
			
			\cmidrule(lr){1-1} \cmidrule(lr){2-2} \cmidrule(lr){3-3} \cmidrule(lr){4-5} \cmidrule(lr){6-6} \\[-5pt]
			
			\multirow{6.5}{*}{$\frac{2}{255}$} &
			
			\multicolumn{1}{ c }{\textsc{CC-IBP}} & FGSM
			& 80.22 & 62.94  & $1.08 \times 10^4$  \\
			
			& \multicolumn{1}{ c }{\textsc{CC-IBP}} & None
			& 82.12 & 38.80  &  $9.40 \times 10^3$ \\
			
			& \multicolumn{1}{ c }{\textsc{CC-IBP}} & PGD
			& 80.09 & 63.78 & $1.77 \times 10^4$  \\
			
			\cmidrule(lr){2-6} \\[-5pt]
			
			& \multicolumn{1}{ c }{\textsc{MTL-IBP}} & FGSM
			& 79.70 & 62.67 & $1.08 \times 10^4$ \\
			
			& \multicolumn{1}{ c }{\textsc{MTL-IBP}} & None
			& 81.22 & 36.09  &  $9.33 \times 10^3$ \\
			
			& \multicolumn{1}{ c }{\textsc{MTL-IBP}} & PGD
			& 80.11 & 63.24 &   $1.76 \times 10^4$ \\
			
			\cmidrule(lr){1-6} \\[-5pt]
			
			\multirow{6.5}{*}{$\frac{8}{255}$} &
			
			\multicolumn{1}{ c }{\textsc{CC-IBP}} & FGSM
			& 53.71 & 35.27 &  $1.72 \times 10^4$ \\
			
			& \multicolumn{1}{ c }{\textsc{CC-IBP}} & None
			& 57.95 & 33.21 & $1.48 \times 10^4$  \\
			
			& \multicolumn{1}{ c }{\textsc{CC-IBP}} & PGD
			& 53.12 & 35.17 &  $2.82 \times 10^4$ \\
			
			\cmidrule(lr){2-6} \\[-5pt]
			
			& \multicolumn{1}{ c }{\textsc{MTL-IBP}} & FGSM
			& 53.35 & 35.44 &  $1.70 \times 10^4$ \\
			
			& \multicolumn{1}{ c }{\textsc{MTL-IBP}} & None
			& 55.62 & 33.62 & $1.44 \times 10^4$   \\
			
			& \multicolumn{1}{ c }{\textsc{MTL-IBP}} & PGD
			& 53.30 & 35.19 &  $2.83 \times 10^4$  \\
			
			\bottomrule 
			
			\\[-5pt]			
			
		\end{tabular}
	\end{adjustbox}
	\vspace{-5pt}
\end{table}
Table~\ref{fig:attack-ablation} examines the effect of the employed attack type (how to compute $\xb_{\text{adv}}$ in equations~\eqref{eq:ccibp}~and~\eqref{eq:mtlibp}) on the performance of CC-IBP and MTL-IBP, focusing on CIFAR-10.
PGD denotes an $8$-step PGD~\citep{Madry2018} attack with constant step size $\eta=0.25\epsilon$, 
FGSM a single-step attack with $\eta\geq 2.0\epsilon$, whereas None indicates that no attack was employed (in other words, $\xb_{\text{adv}}=\xb$).
While FGSM and PGD use the same hyper-parameter configuration for CC-IBP and MTL-IBP from table~\ref{fig:main-results-tables}, $\alpha$ is raised when $\xb_{\text{adv}}=\xb$ in order to increase verifiability (see appendix~\ref{sec:exp-appendix}).
As shown in table~\ref{fig:attack-ablation}, the use of adversarial attacks, especially for $\epsilon = \nicefrac{2}{255}$, are crucial to attain state-of-the-art verified robust accuracy. Furthermore, the use of inexpensive attacks only adds a minimal overhead (roughly $15\%$ on $\epsilon = \nicefrac{2}{255}$).
If an attack is indeed employed, neither CC-IBP nor MTL-IBP appear to be particularly sensitive to the strength of the adversary for $\epsilon = \nicefrac{8}{255}$, where FGSM slightly outperforms PGD.
On the other hand, PGD improves the overall performance on $\epsilon = \nicefrac{2}{255}$, leading both CC-IBP and MTL-IBP to better robustness-accuracy trade-offs than those reported by previous work on this benchmark.
In general, given the overhead and the relatively small performance difference, FGSM appears to be preferable to PGD on most benchmarks.

\subsection{Test Evaluation of Models from $\S$\ref{sec:tuning-study}} \label{sec:tuning-study-models}

\looseness=-1
Table~\ref{fig:tuning-study-test-table} reports the results obtained by training the $\alpha$ values for CC-IBP, MTL-IBP and SABR found in figure~\ref{fig:cifar-tuning} on the entirety of the CIFAR-10 training set, and evaluating on the test set. All the other hyper-parameters comply with those detailed in appendix~\ref{sec:exp-appendix}. 
In order to reflect the use of $\epsilon_{\text{train}}=\nicefrac{4.2}{255}$ for the attack computation in CC-IBP and MTL-IBP for $\epsilon=\nicefrac{2}{255}$ (see appendix~\ref{sec:exp-appendix}), on that setup we compute the SABR loss over the train-time perturbation radius of $\epsilon_{\text{train}}=\nicefrac{4.2}{255}$.
Branch-and-bound is run with a timeout of $600$ seconds.
These three expressive losses display similar trade-offs between verified robustness and standard accuracy, with CC-IBP slightly outperforming the other algorithms on $\epsilon=\nicefrac{2}{255}$ and MTL-IBP on $\epsilon=\nicefrac{8}{255}$. 
Observe that SABR yields a strictly better trade-off than the one reported by the original authors ($79.24\%$ and $62.84\%$ standard and verified robust accuracy, respectively~\citep{Mueller2023}), highlighting the importance of careful tuning. 
Given that, for $\epsilon=\nicefrac{2}{255}$, the ``small box" may be well out of the target perturbation set, the result suggest that the performance of SABR itself is linked to definition~\ref{def:expressive} rather than to the selection of a subset of the target perturbation set containing the worst-case input. We remark that this may be in contrast with the explanation from the authors~\citep{Mueller2023}.
By comparing with literature results from table~\ref{fig:main-results-tables} (typically directly tuned on the test set, see appendix~\ref{sec:hyperparams}), and given the heterogeneity across the three losses considered in this experiment, we can conclude that satisfying expressivity leads to state-of-the-art results on ReLU networks. 
Finally, the test results confirm that minimizing the approximation of the branch-and-bound loss does not necessarily correspond to superior robustness-accuracy trade-offs (see $\S$\ref{sec:tuning-study}).

\begin{table}[h!]
	\sisetup{detect-all = true}
	
	\centering
	
	\scriptsize
	\setlength{\tabcolsep}{4pt}
	\aboverulesep = 0.1mm  
	\belowrulesep = 0.2mm  
	\caption{\small Performance of the CC-IBP, MTL-IBP and SABR models resulting from the validation study presented in figure~\ref{fig:cifar-tuning} under $\ell_\infty$ norm perturbations on the CIFAR-10 test set. The results corresponding to the best trade-off across tuning methods (measured as the sum of standard and verified robust accuracy) is highlighted in bold.
		\vspace{-5pt}
		\label{fig:tuning-study-test-table}}
	\begin{adjustbox}{max width=.85\textwidth, center}
		
		\begin{tabular}{
				c
				l
				c
				c
				S[table-format=2.2]
				S[table-format=2.2]
				S[table-format=3.2]
			} 
			
			\toprule \\[-5pt]
			
			\multicolumn{1}{c}{\multirow{2}{*}{$\epsilon$}} &
			\multicolumn{1}{c}{\multirow{2}{*}{Method}} &
			\multicolumn{1}{c}{\multirow{2}{*}{$\alpha$}} &
			\multicolumn{1}{c}{\multirow{2}{*}{Tuning criterion}} &
			\multicolumn{3}{c}{Accuracy [\%]} \\
			
			\\[-5pt]
			
			& & & & \multicolumn{1}{c}{Std.}  &
			\multicolumn{1}{c}{Ver. rob.} & \multicolumn{1}{c}{Std. + Ver. rob.} \\
			
			\cmidrule(lr){1-1} \cmidrule(lr){2-2} \cmidrule(lr){3-3} \cmidrule(lr){4-4} \cmidrule(lr){5-7} \\[-5pt]
			
			\multirow{6.5}{*}{$\frac{2}{255}$} &
			
			\multicolumn{1}{ c }{\textsc{CC-IBP}} & $1 \times 10^{-2}$ & BaB Loss + Std. Loss
			& 79.95 & 63.70 & \B 143.65  \\
			
			& \multicolumn{1}{ c }{\textsc{CC-IBP}} & $3 \times 10^{-3}$ & BaB Err
			& 81.93 & 61.12 & 143.05 \\
			
			\cmidrule(lr){2-7}\\[-7pt]
			
			& \multicolumn{1}{ c }{\textsc{MTL-IBP}} & $7 \times 10^{-3}$ & BaB Loss + Std. Loss
			& 79.26 & 63.33 & \B 142.59 \\
			
			& \multicolumn{1}{ c }{\textsc{MTL-IBP}} & $3 \times 10^{-3}$ & BaB Err
			& 79.86 & 62.43 & 142.29 \\
			
			\cmidrule(lr){2-7}\\[-7pt]
			
			& \multicolumn{1}{ c }{\textsc{SABR}} & $1 \times 10^{-2}$ & BaB Loss + Std. Loss
			& 79.69 & 63.33 & \B 143.02 \\
			
			& \multicolumn{1}{ c }{\textsc{SABR}} & $3 \times 10^{-3}$ & BaB Err
			& 81.48 & 60.01 & 141.49 \\
			
			\cmidrule(lr){1-7} \\[-7pt]
			
			\multirow{6.5}{*}{$\frac{8}{255}$} &
			
			\multicolumn{1}{ c }{\textsc{CC-IBP}} & $5 \times 10^{-1}$ & BaB Loss + Std. Loss
			& 53.62 & 35.11 & \B 88.73 \\
			
			& \multicolumn{1}{ c }{\textsc{CC-IBP}} & $7 \times 10^{-1}$ & BaB Err
			& 51.45 & 35.29 & 86.74 \\
			
			\cmidrule(lr){2-7}\\[-7pt]
			
			& \multicolumn{1}{ c }{\textsc{MTL-IBP}} & $3 \times 10^{-1}$ & BaB Loss + Std. Loss
			& 57.15 & 33.84 & 90.99 \\
			
			& \multicolumn{1}{ c }{\textsc{MTL-IBP}} & $3 \times 10^{-1}$ & BaB Err
			& 57.15 & 33.84 & 90.99 \\
			
			\cmidrule(lr){2-7}\\[-7pt]
			
			& \multicolumn{1}{ c }{\textsc{SABR}} & $4 \times 10^{-1}$ & BaB Loss + Std. Loss
			& 57.23 & 33.47 & \B 90.70 \\
			
			& \multicolumn{1}{ c }{\textsc{SABR}} & $7 \times 10^{-1}$ & BaB Err
			& 51.95 & 35.01 & 86.96 \\
			
			\bottomrule 
			
			\\[-5pt]
			
		\end{tabular}
	\end{adjustbox}
	\vspace{-5pt}
\end{table}

\subsection{Tuning SABR on TinyImageNet and ImageNet64} \label{sec:sabr-results}

The results of the validation study on CIFAR-10 (table~\ref{sec:tuning-study-models}) showed that SABR, which satisfies our definition of expressivity from $\S$\ref{sec:expressivity}, yields similar performance to CC-IBP and MTL-IBP when employing the same tuning methodology.
Prompted by these results, we carried out our own tuning of SABR on TinyImageNet and downscaled ImageNet, where the gap between our expressive losses (CC-IBP, MTL-IBP and Exp-IBP) and the results from the literature is the largest (see $\S$\ref{sec:exp-main}). 
In order to reduce overhead and in line with the results from table~\ref{fig:main-results-tables} and previous work~\citep{Gowal2018b,Zhang2020,Shi2021,Mueller2023}, we directly tuned on the evaluation sets. 
The resulting SABR hyper-parameters are identical to those employed for CC-IBP on each of the two datasets: see table~\ref{fig:hyperparams}.
Table~\ref{fig:sabr-table} shows that SABR displays similar performance profiles to our expressive losses from $\S$\ref{sec:convexcombinations} (except for Exp-IBP on ImageNet64: see the relative remark in $\S$\ref{sec:exp-main}), confirming the link between satisfying definition~\ref{def:expressive} and attaining state-of-the-art verified training performance. 
We ascribe the differences with respect to previously reported SABR \linebreak results~\cite{Mueller2023,mao2023understanding} on TinyImageNet to insufficient tuning and inadequate regularization: for instance, \citet{Mueller2023} set $\alpha=0.4$ and $\lambda=10^{-6}$ ($\ell_1$ coefficient); we employ $\alpha=10^{-2}$ and $\lambda=5 \times 10^{-5}$.

\begin{table*}[b!]
	\sisetup{detect-all = true}
	
	\setlength{\heavyrulewidth}{0.09em}
	\setlength{\lightrulewidth}{0.5em}
	\setlength{\cmidrulewidth}{0.03em}
	
	\centering
	
	\scriptsize
	\setlength{\tabcolsep}{4pt}
	\aboverulesep = 0.3mm  
	\belowrulesep = 0.1mm  
	\caption{\small 
		Comparison of the expressive losses from $\S$\ref{sec:convexcombinations} with our own SABR~\citep{Mueller2023} experiments for $\ell_\infty$ norm perturbations on TinyImageNet and downscaled ($64\times64$) ImageNet.
		The entries corresponding to the best standard or verified robust accuracy for each perturbation radius are highlighted in bold. 
		\label{fig:sabr-table}}
	\begin{adjustbox}{width=.95\textwidth, center}
		\begin{tabular}{
				c
				c
				l
				c
				S[table-format=2.2]
				S[table-format=2.2]
			} 
			
			\toprule \\[-5pt]
			
			\multicolumn{1}{ c }{Dataset} &
			\multicolumn{1}{ c }{$\epsilon$} &
			\multicolumn{1}{ c }{Method} &
			\multicolumn{1}{ c }{Source} &
			\multicolumn{1}{ c }{Standard acc. [\%]} &
			\multicolumn{1}{ c }{Verified rob. acc. [\%]}  \\
			
			\cmidrule(lr){1-2} \cmidrule(lr){2-2} \cmidrule(lr){3-4} \cmidrule(lr){5-6} \\[-5pt]
			
			\multirow{6.5}{*}{TinyImageNet} & \multirow{6.5}{*}{$\frac{1}{255}$} &
			
			\multicolumn{1}{ c }{\textsc{CC-IBP}} & this work
			& 38.61  & \B 26.39\\
			
			& & \multicolumn{1}{ c }{\textsc{MTL-IBP}} & this work
			& 37.56 & 26.09  \\
			
			& & \multicolumn{1}{ c }{\textsc{Exp-IBP}} & this work
			& \B 38.71 &  26.18 \\
			
			\cmidrule(lr){3-6} \\[-6pt]
			
			& & \multicolumn{1}{ c }{\textsc{SABR}} & this work
			& 38.68 &  25.85 \\
			
			& & \multicolumn{1}{ c }{\textsc{SABR}} & \citet{Mueller2023}
			& 28.85 & 20.46  \\
			
			& & \multicolumn{1}{ c }{\textsc{SABR}$^\dagger$} & \citet{mao2023understanding}
			& 28.97 & 21.36   \\
			
			\cmidrule{1-6} \\[-6pt]
			
			\multirow{4.5}{*}{ImageNet64} & \multirow{4.5}{*}{$\frac{1}{255}$} &
			
			\multicolumn{1}{ c }{\textsc{CC-IBP}} & this work
			& 19.62  & 11.87  \\
			
			& & \multicolumn{1}{ c }{\textsc{MTL-IBP}} & this work
			& 20.15 & 12.13 \\
			
			& & \multicolumn{1}{ c }{\textsc{Exp-IBP}} & this work
			& \B 22.73 & \B 13.30  \\
			
			\cmidrule(lr){3-6} \\[-6pt]
			
			& & \multicolumn{1}{ c }{\textsc{SABR}} & this work
			& 20.33 &  12.39  \\
			
			\bottomrule 
			
			\\[-5pt]
			\multicolumn{6}{ l }{
				\shortstack[l]{$^\dagger$$2\times$ wider network than the architecture used in our experiments and the original work~\citet{Mueller2023}.} 
			}
		\end{tabular}
	\end{adjustbox}
\end{table*}

\subsection{Loss Sensitivity to $\alpha$} \label{sec:exp-alpha-loss-sensitivity}

\looseness=-1
Figure~\ref{fig:alpha-sensitivity-loss} compares the standard, adversarial (under a $40$-step PGD attack with $\eta=0.035\epsilon$) and verified losses (under IBP and CROWN/IBP bounds) of the networks from figure~\ref{fig:alpha-sensitivity} with the CC-IBP, MTL-IBP and Exp-IBP losses (see $\S$\ref{sec:convexcombinations}) computed with the corresponding training hyper-parameters. In all cases, their behavior closely follows the adversarial loss, which they upper bound. By comparing with figure~\ref{fig:alpha-sensitivity}, we conclude that the losses perform at their best in terms of trade-offs between BaB accuracy and standard accuracy when the losses used for training lower bound losses resulting from inexpensive incomplete verifiers.
All verified losses converge to similar values for increasing $\alpha$ coefficients.

\begin{figure*}[b!]
	\begin{subfigure}{0.16\textwidth}
		\centering
		\includegraphics[width=\textwidth]{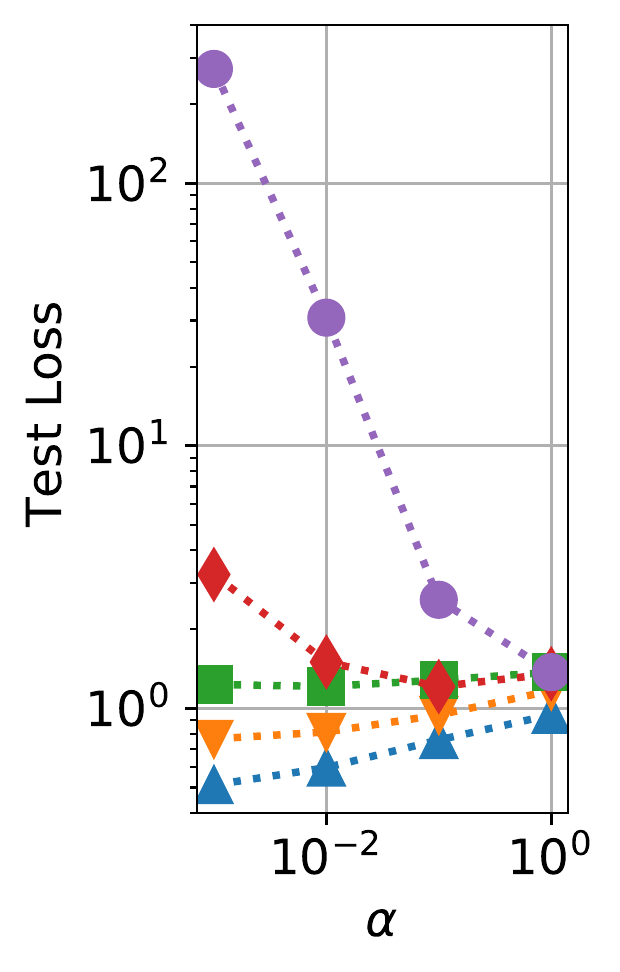}
		\vspace{-18pt}
		\caption{\label{fig:alpha-sensitivity-loss-ccibp-2}  CC-IBP,\newline $\epsilon=\nicefrac{2}{255}$.}
	\end{subfigure}\hspace{1pt}
	\begin{subfigure}{0.16\textwidth}
		\centering
		\includegraphics[width=\columnwidth]{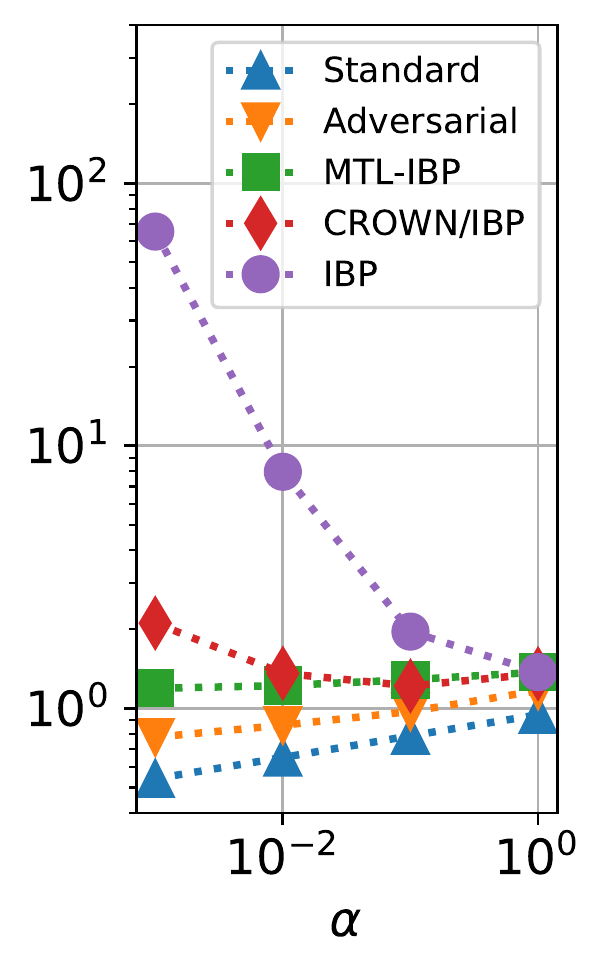}
		\vspace{-18pt}
		\caption{\label{fig:alpha-sensitivity-loss-mtlibp-2}  MTL-IBP,\newline $\epsilon=\nicefrac{2}{255}$.}
	\end{subfigure}\hspace{1pt}
	\begin{subfigure}{0.16\textwidth}
		\centering
		\includegraphics[width=\columnwidth]{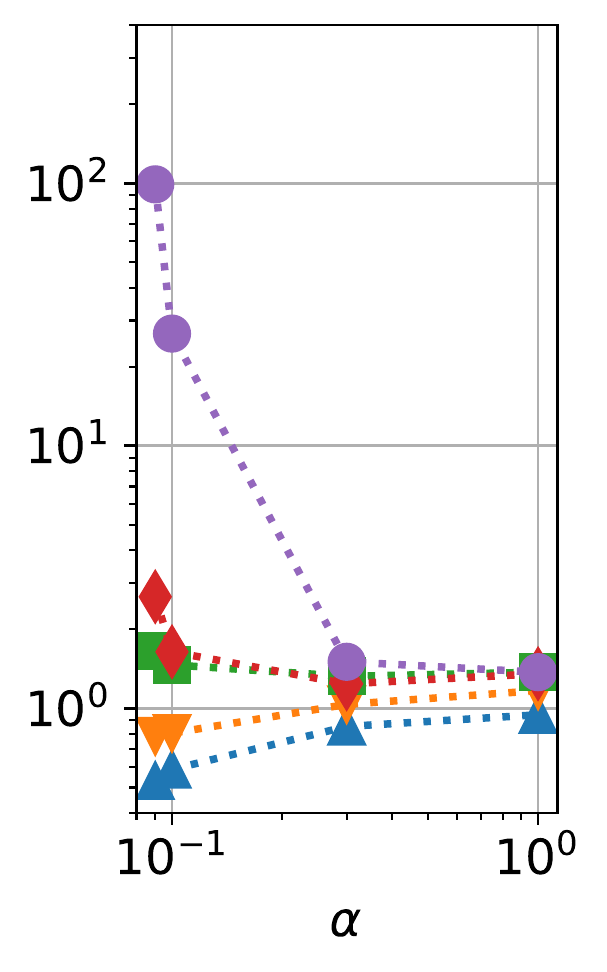}
		\vspace{-18pt}
		\caption{\label{fig:alpha-sensitivity-loss-expibp-2}  Exp-IBP,\newline $\epsilon=\nicefrac{2}{255}$.}
	\end{subfigure}\hspace{1pt}
	\begin{subfigure}{0.16\textwidth}
		\centering
		\includegraphics[width=\textwidth]{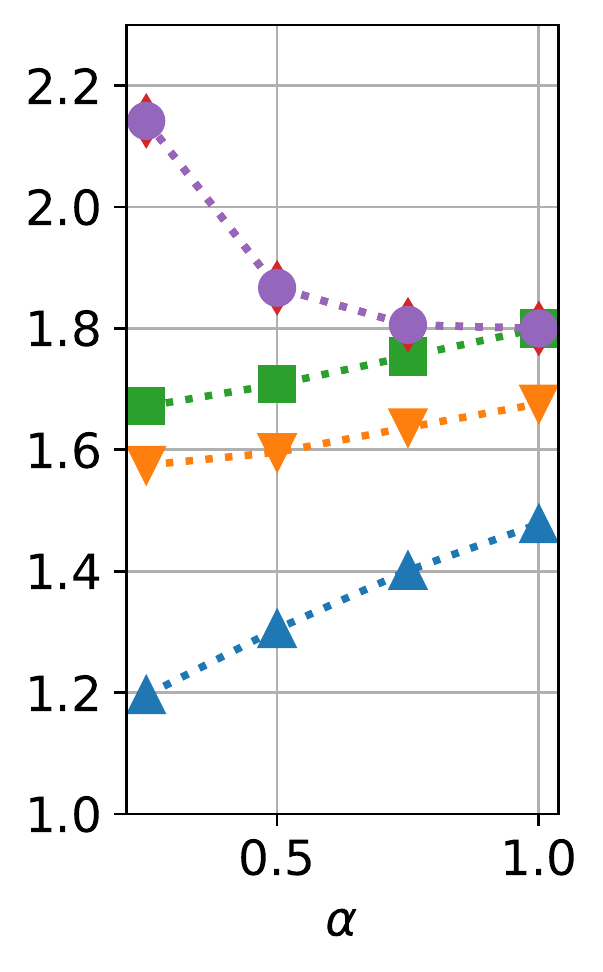}
		\vspace{-18pt}
		\caption{\label{fig:alpha-sensitivity-loss-ccibp-8}  CC-IBP,\newline $\epsilon=\nicefrac{8}{255}$.}
	\end{subfigure}\hspace{1pt}
	\begin{subfigure}{0.16\textwidth}
		\centering
		\includegraphics[width=\columnwidth]{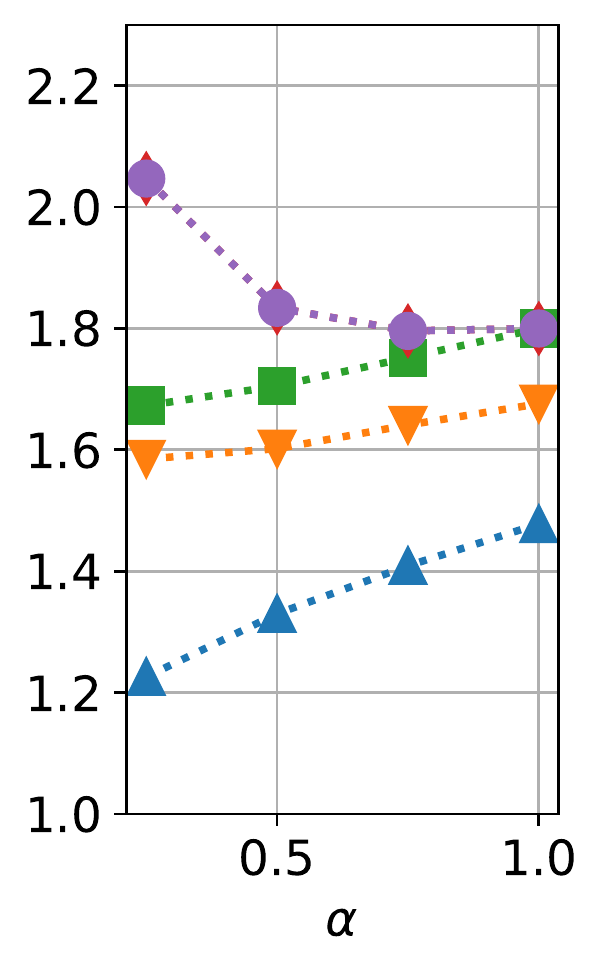}
		\vspace{-18pt}
		\caption{\label{fig:alpha-sensitivity-loss-mtlibp-8}  MTL-IBP,\newline $\epsilon=\nicefrac{8}{255}$.}
	\end{subfigure}\hspace{1pt}
	\begin{subfigure}{0.16\textwidth}
		\centering
		\includegraphics[width=\columnwidth]{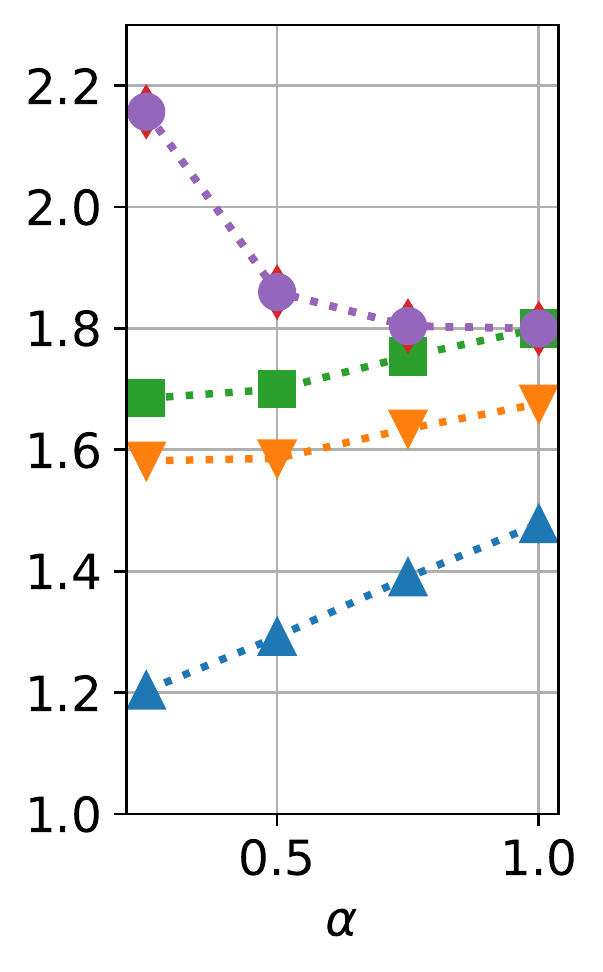}
		\vspace{-18pt}
		\caption{\label{fig:alpha-sensitivity-loss-expibp-8}  Exp-IBP,\newline $\epsilon=\nicefrac{8}{255}$.}
	\end{subfigure}
	\caption{\label{fig:alpha-sensitivity-loss} 
		Loss values, computed on the full CIFAR-10 test set, for the models from Figure~\ref{fig:alpha-sensitivity}.
		The CC-IBP, MTL-IBP and Exp-IBP losses (see $\S$\ref{sec:convexcombinations}) are computed using the $\alpha$ value employed for training, denoted on the~$x$~axis.
		The legend in plot~\ref{fig:alpha-sensitivity-loss-mtlibp-2} applies to all sub-figures.
		}
	\vspace{-10pt}
\end{figure*}

\subsection{Experimental Variability} \label{sec:exp-variability}

In order to provide an indication of experimental variability for the results from $\S$\ref{sec:experiments}, we repeat a small subset of our CC-IBP and MTL-IBP experiments from table~\ref{fig:main-results-tables}, those on CIFAR-10 with $\epsilon = \nicefrac{8}{255}$, a further three times.
Table~\ref{fig:variability} reports the mean, maximal and minimal values in addition to the sample standard deviation for both the standard and verified robust accuracies.
In all cases, the experimental variability is relatively small.

\begin{table}[h!]
	\vspace{-7pt}
	\sisetup{detect-all = true}
	
	\centering
	
	\scriptsize
	\setlength{\tabcolsep}{4pt}
	\aboverulesep = 0.1mm  
	\belowrulesep = 0.2mm  
	\begin{minipage}{.7\textwidth}
	\caption{\small Experimental variability for the performance of CC-IBP and MTL-IBP for $\ell_\infty$ norm perturbations of radius $\epsilon = \nicefrac{8}{255}$ on the CIFAR-10 test set. 
		\vspace{-20pt}
		\label{fig:variability}}
	\end{minipage}
	\begin{adjustbox}{min width=.6\textwidth, center}
		
		\begin{tabular}{
				c
				S[table-format=2.2]
				S[table-format=2.2]
				S[table-format=2.2]
				S[table-format=2.2]
				S[table-format=2.2]
				S[table-format=2.2]
				S[table-format=2.2]
				S[table-format=2.2]
			} 
			
			\toprule \\[-5pt]
			
			\multicolumn{1}{c}{\multirow{2}{*}{Method}} &
			\multicolumn{4}{c}{Standard accuracy [\%]} &
			\multicolumn{4}{c}{Verified robust accuracy [\%]} \\
			
			\\[-5pt]
			
			& \multicolumn{1}{c}{Mean}  & \multicolumn{1}{c}{Std.} & \multicolumn{1}{c}{Max} & \multicolumn{1}{c}{Min} & \multicolumn{1}{c}{Mean}  & \multicolumn{1}{c}{Std.} & \multicolumn{1}{c}{Max} & \multicolumn{1}{c}{Min} \\
			
			\cmidrule(lr){1-1} \cmidrule(lr){2-5} \cmidrule(lr){6-9} \\[-5pt]
			
			\multicolumn{1}{ c }{\textsc{CC-IBP}}
			& 54.04 & 00.23 & 54.23 & 53.71 & 35.06 & 00.29  & 35.34 & 34.73 \\
			
			\multicolumn{1}{ c }{\textsc{MTL-IBP}} 
			& 53.77 & 0.40 & 54.18 & 53.35 & 35.25 & 0.18 & 35.44 & 35.05 \\
			
			\bottomrule 
			
			\\[-5pt]			
			
		\end{tabular}
	\end{adjustbox}
	\vspace{5pt}
\end{table} 

\subsection{Ablation: Adversarial Attack Step Size} \label{sec:exp-attack-step-size}

\begin{table}[h!]
	\vspace{-7pt}
	\sisetup{detect-all = true}
	
	\centering
	
	\scriptsize
	\setlength{\tabcolsep}{4pt}
	\aboverulesep = 0.1mm  
	\belowrulesep = 0.2mm  
	\caption{\small Effect of the attack step size $\eta$ on the performance of CC-IBP and MTL-IBP under $\ell_\infty$ norm perturbations on the CIFAR-10 test set. 
		\vspace{-5pt}
		\label{fig:attack-step-size-ablation}}
	\begin{adjustbox}{max width=.48\textwidth, center}
		
		\begin{tabular}{
				c
				l
				c
				S[table-format=2.2]
				S[table-format=2.2]
			} 
			
			\toprule \\[-5pt]
			
			\multicolumn{1}{c}{\multirow{2}{*}{$\epsilon$}} &
			\multicolumn{1}{c}{\multirow{2}{*}{Method}} &
			\multicolumn{1}{c}{\multirow{2}{*}{Attack $\eta$}} &
			\multicolumn{2}{c}{Accuracy [\%]} \\
			
			\\[-5pt]
			
			& & & \multicolumn{1}{c}{Std.}  &
			\multicolumn{1}{c}{Ver. rob.} \\
			
			\cmidrule(lr){1-1} \cmidrule(lr){2-2} \cmidrule(lr){3-3} \cmidrule(lr){4-5} \\[-5pt]
			
			\multirow{4.5}{*}{$\frac{2}{255}$} &
			
			\multicolumn{1}{ c }{\textsc{CC-IBP}} & $10.0\ \epsilon$
			& 80.22 & 62.94  \\
			
			& \multicolumn{1}{ c }{\textsc{CC-IBP}} & $1.25\ \epsilon$
			& 80.94 & 62.50  \\
			
			\cmidrule(lr){2-5} \\[-5pt]
			
			& \multicolumn{1}{ c }{\textsc{MTL-IBP}} & $10.0\ \epsilon$
			& 79.70 & 62.67 \\
			
			& \multicolumn{1}{ c }{\textsc{MTL-IBP}} & $1.25\ \epsilon$
			& 80.43 & 62.29  \\
			
			\cmidrule(lr){1-5} \\[-5pt]
			
			\multirow{4.5}{*}{$\frac{8}{255}$} &
			
			\multicolumn{1}{ c }{\textsc{CC-IBP}} & $10.0\ \epsilon$
			& 53.71 & 35.27 \\
			
			& \multicolumn{1}{ c }{\textsc{CC-IBP}} & $1.25\ \epsilon$
			& 54.75 & 34.77 \\
			
			\cmidrule(lr){2-5} \\[-5pt]
			
			& \multicolumn{1}{ c }{\textsc{MTL-IBP}} & $10.0\ \epsilon$	
			& 53.35 & 35.44 \\
			
			& \multicolumn{1}{ c }{\textsc{MTL-IBP}} & $1.25\ \epsilon$
			& 54.30 & 35.38 \\
			
			\bottomrule 
			
			\\[-5pt]			
			
		\end{tabular}
	\end{adjustbox}
	\vspace{-5pt}
\end{table}

Section \ref{sec:exp-attacks} investigates the influence of the type of attack used to compute $\zb(\thetab, \xb_{\text{adv}}, y)$ at training time on the performance of CC-IBP and MTL-IBP. 
We now perform a complementary ablation by keeping the attack fixed to randomly-initialized FGSM~\citep{Goodfellow2015,Wong2020} and changing the attack step size~$\eta$.
The results from $\S$\ref{sec:experiments} (except for the PGD entry in table~\ref{fig:attack-ablation})  employ a step size $\eta=10.0\ \epsilon$ (any value larger than $2.0\ \epsilon$ yields the same effect) in order to force the attack on the boundary of the perturbation region.
Specifically, we report the performance for $\eta=1.25\ \epsilon$, which was shown by~\citet{Wong2020} to prevent catastrophic overfitting.
Table~\ref{fig:attack-step-size-ablation} does not display significant performance differences depending on the step size, highlighting the lack of catastrophic overfitting. We speculate that this is linked to the regularizing effect of the IBP bounds employed within CC-IBP and MTL-IBP.
On all considered settings, the use of larger $\eta$ results in larger standard accuracy, and a decreased verified robust accuracy.

}


\end{document}